\ificcvfinal\pagestyle{empty}\fi
\newcommand{\PreserveBackslash}[1]{\let\temp=\\#1\let\\=\temp}
\newcolumntype{C}[1]{>{\PreserveBackslash\centering}p{#1}}
\newtheorem{theorem}{Theorem}
\newtheorem{proposition}[theorem]{Proposition}
\newtheorem{lemma}[theorem]{Lemma}
\theoremstyle{definition}
\newtheorem{definition}[theorem]{Definition}
\newtheorem{example}[theorem]{Example}
\NewDocumentCommand{\pramudi}{ mO{} }{\textcolor{blue}{\textsuperscript{\textit{Pramu}}\textsf{\textbf{\small[#1]}}}}
\def\eqref#1{equation~\ref{#1}}
\def\1{\bm{1}}
\def\ve{{\bm{e}}}
\def\vp{{\bm{p}}}
\def\vq{{\bm{q}}}
\def\vr{{\bm{r}}}
\def\vs{{\bm{s}}}
\def\vu{{\bm{u}}}
\def\vv{{\bm{v}}}
\def\vw{{\bm{w}}}
\DeclareMathAlphabet{\mathsfit}{\encodingdefault}{\sfdefault}{m}{sl}
\SetMathAlphabet{\mathsfit}{bold}{\encodingdefault}{\sfdefault}{bx}{n}
\newcommand{\RR}{\mathbb{R}}
\DeclareMathOperator*{\argmax}{arg\,max}
\renewcommand{\ast}{{}^{\textstyle *}} 
\begin{document}

\title{Linear Spaces of Meanings: \\
Compositional Structures in Vision-Language Models}

\author{Matthew Trager 
\and
Pramuditha Perera
\and
Luca Zancato\\
\and
Alessandro Achille
\and
Parminder Bhatia
\and 
Stefano Soatto\\
\and
AWS AI Labs\\[.4cm]
{\tt\{mttrager,pramudi,aachille,parmib,soattos\}@amazon.com}\\
{\tt zancato@amazon.it}}
\maketitle
\ificcvfinal\thispagestyle{empty}\fi

\begin{abstract}
We investigate compositional structures in data embeddings from pre-trained vision-language models~(VLMs). Traditionally, compositionality has been associated with algebraic operations on embeddings of words from a pre-existing vocabulary. In contrast, we seek to approximate representations from an encoder as combinations of a smaller set of vectors in the embedding space. These vectors can be seen as ``ideal words'' for generating concepts directly within embedding space of the model. We first present a framework for understanding compositional structures from a geometric perspective. We then explain what these compositional structures entail probabilistically in the case of VLM embeddings, providing intuitions for why they arise in practice. Finally, we empirically explore these structures in CLIP's embeddings and we evaluate their usefulness for solving different vision-language tasks such as classification, debiasing, and retrieval. Our results show that simple linear algebraic operations on embedding vectors can be used as compositional and interpretable methods for regulating the behavior of VLMs.
\end{abstract}

\vspace{-.4cm}

\section{Introduction}
\label{sec:intro}

In natural language, few primitive concepts or words can be used {compositionally} to generate a large number of complex meanings. For example, many composite concepts can be obtained by combining attributes and nouns. The hidden representations provided by a neural model, on the other hand, a priori \emph{do not} have a similar compositional structure. In contextual text embeddings, in particular, the representation of a string of text is jointly affected by all of its tokens simultaneously, which means that there may not be a simple relationship between the representations of the entire text and the words that appear in it. 

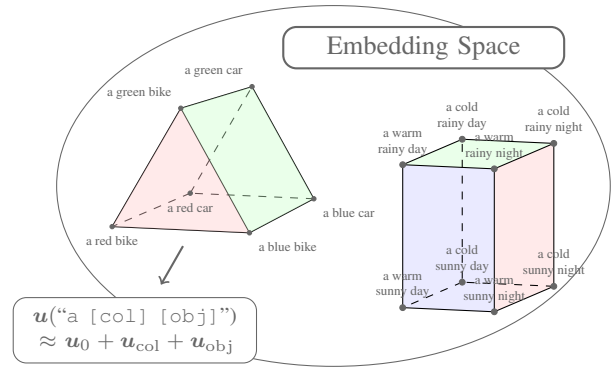
\begin{figure}[t]
    \centering
    \tdplotsetmaincoords{82}{108} 
\begin{tikzpicture}[scale=.80, opacity=1]

\begin{scope}[scale=1.2,xshift=0.1cm, yshift=-.1cm, tdplot_main_coords,font=\tiny]
\coordinate (A) at (0,0,0);
\coordinate (B) at (0,.9,1.53);
\coordinate (C) at (0,1.8,0);
\coordinate (D) at (3.5,0,0);
\coordinate (E) at (3.5,1,1.7);
\coordinate (F) at (3.5,2,0);

\coordinate (DEF) at (barycentric cs:D=1,E=2,F=1);
\coordinate (ABC) at (barycentric cs:A=1,B=2,C=1);

\coordinate (BE) at (barycentric cs:B=1,E=1);
\coordinate (CF) at (barycentric cs:C=1,F=1);
\coordinate (AD) at (barycentric cs:A=1,D=1);

\draw[dashed] (A) -- (B);
\draw[] (B) -- (C);
\draw[dashed] (C) -- (A);

\draw (D) -- (E);
\draw (E) -- (F);
\draw (F) -- (D);

\draw[dashed] (A) -- (D);
\draw (B) -- (E);
\draw (C) -- (F);

\fill[fill=red!20,opacity=0.4] (D) -- (E) -- (F) -- cycle;
\fill[fill=green!20,opacity=0.4] (F) -- (E) -- (B) -- (C) -- cycle;

\foreach \vertex in {A,B,C,D,E,F}
    \filldraw [black!60, opacity=1] (\vertex) circle (.03cm);

\node[anchor=north, black!60] at (A) {a red car};
\node[anchor=south east, black!60] at (B) {a green car};
\node[anchor=north west, black!60] at (C) {a blue car};
\node[anchor=north, black!60] at (D) {a red bike};
\node[anchor=south east, black!60] at (E) {a green bike};
\node[anchor=north west, black!60] at (F) {a blue bike};

\end{scope}

\begin{scope}[scale=1.6, xshift=2.9cm, yshift=-1cm, tdplot_main_coords, font=\tiny, transform shape=false]

\coordinate (000) at (0,0,0);
\coordinate (001) at (0,0,1.5);
\coordinate (010) at (0,1,0);
\coordinate (011) at (0,1,1.5);
\coordinate (100) at (2,0,0);
\coordinate (101) at (2,0,1.5);
\coordinate (110) at (2,1,0);
\coordinate (111) at (2,1,1.5);

\path[fill=blue!20,opacity=0.4] (100) -- (101) -- (111) -- (110) -- cycle;
\path[fill=red!20,opacity=0.4] (110) -- (111) -- (011) -- (010) -- cycle;
\path[fill=green!20,opacity=0.4] (111) -- (101) -- (001) -- (011) -- cycle;

\draw[dashed] (010) -- (000) -- (001);
\draw (001) -- (011) -- (010);
\draw (100) -- (101) -- (111) -- (110) -- (100);

\draw[dashed] (000) -- (100);
\draw (001) -- (101);
\draw (011) -- (111);
\draw (010) -- (110);

\foreach \vertex in {000,001,010,011,100,101,110,111}
    \filldraw [black!60, opacity=1] (\vertex) circle (.03cm);

\node[align=center, anchor=south, black!60] at (000) {a cold\\ sunny day};
\node[align=center, anchor=south, black!60] at (001) {a cold\\ rainy day};
\node[align=center, anchor=south, black!60] at (010) {a cold\\ sunny night};
\node[align=center, anchor=south, black!60] at (011) {a cold\\ rainy night};
\node[align=center, anchor=south, black!60] at (100) {a warm\\ sunny day};
\node[align=center, anchor=south, black!60] at (101) {a warm\\ rainy day};
\node[align=center, anchor=south, black!60] at (110) {a warm\\ sunny night};
\node[align=center, anchor=south, black!60] at (111) {a warm\\ rainy night};

\end{scope}

\draw[black!60] (2.5,0) ellipse (4.6cm and 3cm);

\node [draw,
        black!60, 
        thick, 
        fill=white, 
        text centered, 
        text width=3.5cm, 
        minimum height=.5cm,
        rounded corners=5pt
        ] at (4.0,2.3) {Embedding Space};

\coordinate (vec) at (-.8,-2.4);
\draw[thick,black!60,->,shorten >=.7cm] (0,-1) -- (vec);
\node [
        black!60, 
        thin,  
        fill=white, 
        draw,
        text centered, 
        text width=3cm,
        minimum height=.5cm, 
        rounded corners=5pt,
        font=\footnotesize,
        ] at (vec) {$\bm u(\mbox{``\texttt{a [col] [obj]}'')}$\\$\,\approx \bm u_0 + \bm u_{\rm col} + \bm u_{\rm obj}$};

\end{tikzpicture}
    \caption{\small Compositional structures in contextual embeddings. We show that the embeddings of composite concepts are often approximately decomposable as a sum of vectors corresponding to each factor. These vectors are not embeddings of actual words, but they can be viewed as ``ideal words'' and used for interpretable manipulations of the representations.}
    \label{fig:teaser}
    \vspace{-.4cm}
\end{figure}

In this paper, we investigate the existence of latent compositional structures in the embedding space. That is, we aim to decompose composite concepts as linear combinations of embedding vectors associated with different factors, as illustrated~in Figure~\ref{fig:teaser}.  If such vectors exist, they can be treated as \emph{ideal words} for composing new concepts directly within the representation space of the model. The first application that we envision is for vision-language models (\eg, CLIP~\cite{radfordLearningTransferableVisual2021}) where embeddings of text labels are often used for image classification or retrieval. In this setting, linear compositionality would imply that we could classify an image with $n_1 \ldots n_k$ composite labels---where $n_i$ indicates the number of options for each factor---by comparing each image with only $n_1 + \ldots + n_k$ ideal words, since by linearity the inner product of an image with a composed label is the sum of the product with the corresponding ideal words. Moreover, linear decompositions can be used for ``post-hoc'' manipulations of pre-trained data representations (\eg, amplifying or reducing the importance of certain factors), which can be helpful to control the behavior of neural models.

In general, the meaning of words in language is always \emph{contextual}, in the sense that their interpretation depends on any text that surrounds them. However, language would be completely impractical if words did not also have some stability in their meaning. The main benefit of the usage of words is, in fact, that meaning can be mostly inferred compositionally by combining meanings of words or phrases. There is, therefore, \emph{a natural tension between compositionality and contextuality}: the former requires some amount of independence from context, while the latter allows for general dependencies. In a sense, our goal in this work is to consider representations of meanings that were originally learned as contextual, and to later approximate them as needed with compositional ones based on ideal words. This combines the flexibility and expressiveness of contextuality with the structural efficiency of compositionality. Our main contributions can be summarized as follows:

\begin{itemize}
    \item We describe compositional linear structures from a geometric perspective  and explain how these structures can be approximately recovered from arbitrary collections of vectors associated with a product of ``factors.'' We also relate these structures with previous definitions of disentangled representations that were based on mathematical representation theory~\cite{higginsDefinitionDisentangledRepresentations2018a} (Section~\ref{sec:linear-factorizations}).
    \item We consider embeddings arising from visual-language models (VLMs) and show that the existence of decomposable embeddings is equivalent to 
    the conditional independence of the factors for the probability defined by the model. We also discuss some relaxations of this result that illustrate how linear structures may emerge even when if true data distribution satisfies weaker ``disentanglement'' conditions (Section~\ref{sec:vlm}).
    \item We empirically show that embeddings of composite concepts can often be well-approximated as linear compositional structures, and that this leads to simple but effective strategies for solving classification and retrieval problems in a compositional setting. We also visualize manipulations of decomposable embeddings using a CLIP-guided diffusion model (Stable Diffusion~\cite{Rombach_2022_CVPR}).
\end{itemize}

\section{Related Work}
\label{sec:rel_work}

Compositionality has long been recognized to be a
fundamental principle in cognition~\cite{fodor2002compositionality}. It has been a central in theme in Gestalt psychology~\cite{ellis2013source}, cognitive sciences~\cite{feldman1997regularity}, and pattern theory~\cite{geman2002composition}. The main benefit of compositional representations is that they avoid the combinatorial explosion that occurs if all composed concepts are considered to be completely distinct. This property is of course a characteristic feature of natural languages, which use a fixed vocabulary for all representions, making ``infinite use of finite means'' (von Humboldt)~\cite{chomsky2009syntactic}. However, while there is large body of work in NLP devoted to learning compositional representations of language (\eg,\cite{mitchellVectorbasedModelsSemantica, clarkVectorSpaceModels2015, baroniNounsAreVectorsa, fysheCompositionalInterpretableSemantic2015, coeckeMathematicalFoundationsCompositional}), modern text representations based on transformer architectures~\cite{vaswaniAttentionAllYou2017} are a priori \emph{not} compositional in any way. Some works have studied whether compositionality is implicitly present in neural networks, for example
by evaluating the ability of these models to generalize beyond the training data~\cite{hupkesCompositionalityDecomposedHow2020}. More relevant to our purposes,~\cite{andreasMeasuringCompositionalityRepresentation2019} proposed a framework for evaluating the compositionality of a network's internal representations, by searching for representational primitives; however, finding such compositional primitives requires solving an optimization problem.  
In a broad sense, compositionality can be seen as a particular way of exploiting or imposing \emph{structure} in the inner representations of a network. It has also been argued that data representations should be concentrated in low-dimensional linear spaces~\cite{maPrinciplesParsimonySelfConsistency2022, chanReduNetWhiteboxDeep2021}, or even be ``disentangled'' with respect to factors of variation in the data~\cite{higginsDefinitionDisentangledRepresentations2018a, burgessUnderstandingDisentanglingBeta2018, achilleEmergenceInvarianceDisentanglement2018}. Our perspective on compositional representations is closely related to the definition of disentanglement given in~\cite{higginsDefinitionDisentangledRepresentations2018a}.
As argued above, compositionality of text representations is naturally in tension with \emph{contextuality}. Since their introduction in NLP around 2018~\cite{petersDeepContextualizedWord2018, devlinBERTPretrainingDeep2019}, contextual text embeddings have been extremely successful, and are part of modern transformer-based architectures. 
The amount of contextuality in these word embeddings has been quantified using different metrics in~\cite{ethayarajhHowContextualAre2019a}. 

Linear compositionality for embeddings is often associated with popular ``vector analogies'' that are known to roughly hold for (non-contextual) word embeddings such as word2vec~\cite{mikolovEfficientEstimationWord2013} and GloVe~\cite{penningtonGloveGlobalVectors2014}. Several works have proposed theoretical justifications for this property~\cite{levyNeuralWordEmbedding2014, aroraLatentVariableModel2016, gittensSkipGramZipfUniform2017, allenAnalogiesExplainedUnderstanding,ethayarajhUnderstandingLinearWord2019,seonwooAdditiveCompositionalityWord2019}.
To our knowledge, however, similar properties for contextual embeddings of language models have not been considered, although~\cite{ushioBERTNLPWhat2021} has evaluated the performance of transformer-based models on analogy tasks.
Various limitations of linear analogies have also been pointed out~\cite{linzenIssuesEvaluatingSemantic2016, bouraouiRelationInductionWord}.

In the context of image generation, compositional approaches for controlling the output of diffusion models have been recently proposed in~\cite{liuCompositionalVisualGeneration2023,wangConceptAlgebraTextControlled2023}. In particular, \cite{wangConceptAlgebraTextControlled2023} introduced a ``concept agebra'' that is formally similar to our decomposable representations; however, their notion of ``concept'' is based on score representations (gradient of log-probabilities), rather than on embedding vectors, which leads to a different probabilistic characterization of compositionality.
Finally, \cite{chuangDebiasingVisionLanguageModels2023} introduced a method for removing biases and spurious correlations from pre-trained VLM embeddings for both discriminative and generative tasks; since their proposed approach consists in applying certain linear projections to textual embeddings (with some calibration adjustments), it can be seen as conceptually similar to an application of our decompositions. 

\vspace{.3cm}

\section{Decomposable Embeddings}
\label{sec:linear-factorizations}

We begin by discussing from a purely geometric perspective what we mean by ``linear compositionality.'' We consider a finite set $\mathcal Z = \mathcal Z_1 \times \ldots \times \mathcal Z_k$ that we view as representing a factored set of ``concepts.'' For example, the set $\mathcal Z$ may be a collection of strings of text organized in a structured way, \eg, according to attribute-object-context. We often write elements of $\mathcal Z$ as $z = (z_1,\ldots,z_k)$ with $z_i \in \mathcal Z_i$ and refer to $z_i$ as the components of $z$. We now consider an arbitrary embedding map $r: \mathcal Z \rightarrow V$ of $\mathcal Z$ into a vector space $V$.

\begin{definition}[Decomposable embeddings]\label{def:linearly_decomposable} 
A collection of vectors $r(\mathcal Z) = \{\vu_z \colon z \in \mathcal Z\} \subset V$ parameterized by $\mathcal Z = \mathcal Z_1 \times \ldots \times \mathcal Z_k$ is \emph{decomposable} if there exist vectors $\vu_{z_i} \in V$ for all $z_i \in \mathcal Z_i$ ($i=1,\ldots,k$) such that
\begin{equation}\label{eq:linear_factors_no_bias}
\vu_{z} = \vu_{z_1} + \ldots + \vu_{z_k},
\end{equation}
for all $z = (z_1,\ldots, z_k)$.
\end{definition}

This notion is very intuitive and can be seen as a generalization of the additive compositionality that has been considered for (pairwise) analogies and word embeddings~\cite{mikolovEfficientEstimationWord2013}. 

\vspace{.2cm}

\begin{restatable}{lemma}{basicproperties}\label{lem:basic-properties} 1) A collection of vectors $r(\mathcal Z)$ is decomposable if and only if the vector difference $\vu_{z} - \vu_{z'}$ does not depend on the components that $z,z' \in \mathcal Z$ share in common. 2) If $|\mathcal Z_i| = n_i$, then the dimension of $Span(r(\mathcal Z))$ is at most $1+ \sum_{i=1}^k (n_i-1)$.
\end{restatable}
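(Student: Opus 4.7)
My plan is to handle part (1) by direct computation in the forward direction and by an explicit construction with a telescoping verification in the reverse direction. Part (2) then falls out as an essentially free corollary of the construction used for the reverse implication.

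For the forward direction of (1), writing $\vu_z - \vu_{z'} = \sum_i (\vu_{z_i} - \vu_{z'_i})$ makes it immediate that the summands indexed by positions with $z_i = z'_i$ vanish, so the difference depends only on the non-shared components. For the reverse direction, I would fix a reference point $z^{(0)} = (z_1^{(0)}, \ldots, z_k^{(0)}) \in \mathcal Z$ and, for each index $i$ and each $z_i \in \mathcal Z_i$, let $\sigma_i(z_i) \in \mathcal Z$ denote the element obtained from $z^{(0)}$ by overwriting only the $i$-th slot with $z_i$. The candidate decomposition is
$$\vu_{z_i} \;:=\; \vu_{\sigma_i(z_i)} - \tfrac{k-1}{k}\,\vu_{z^{(0)}}.$$
To verify $\sum_i \vu_{z_i} = \vu_z$, I would interpolate along the sequence $z^{(0)}, z^{(1)}, \ldots, z^{(k)} = z$, where $z^{(j)}$ agrees with $z$ in the first $j$ coordinates and with $z^{(0)}$ in the rest, and then telescope
$$\vu_z - \vu_{z^{(0)}} \;=\; \sum_{j=1}^k \bigl(\vu_{z^{(j)}} - \vu_{z^{(j-1)}}\bigr).$$
Since $z^{(j)}$ and $z^{(j-1)}$ differ only in their $j$-th coordinate, with values $z_j$ and $z_j^{(0)}$, and since $\sigma_j(z_j)$ and $z^{(0)}$ also differ only in their $j$-th coordinate with the same pair of values, the hypothesis forces $\vu_{z^{(j)}} - \vu_{z^{(j-1)}} = \vu_{\sigma_j(z_j)} - \vu_{z^{(0)}}$. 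Summing and rearranging yields $\vu_z = \sum_j \vu_{\sigma_j(z_j)} - (k-1)\vu_{z^{(0)}} = \sum_j \vu_{z_j}$, completing part (1).

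For part (2), the telescoping identity proved above rearranges to $\vu_z = \vu_{z^{(0)}} + \sum_{j=1}^k (\vu_{\sigma_j(z_j)} - \vu_{z^{(0)}})$, exhibiting every $\vu_z$ as a linear combination of the fixed vector $\vu_{z^{(0)}}$ together with the ``offset'' vectors $\vu_{\sigma_j(z_j)} - \vu_{z^{(0)}}$. Each offset vanishes when $z_j = z_j^{(0)}$, so at most $\sum_{j=1}^k (n_j - 1)$ of them are nonzero, giving $\dim \mathrm{Span}(r(\mathcal Z)) \le 1 + \sum_j (n_j - 1)$. The main obstacle is the reverse direction of (1): the hypothesis is phrased at the level of \emph{pairs} $(z, z')$, so one must translate it carefully into the pointwise equality between $\vu_{z^{(j)}} - \vu_{z^{(j-1)}}$ and $\vu_{\sigma_j(z_j)} - \vu_{z^{(0)}}$ by matching the ``non-shared'' structure of the two pairs slot by slot. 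Once that identification is made cleanly, the rest is bookkeeping and the dimension bound in (2) is a direct corollary.
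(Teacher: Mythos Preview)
Your proof is correct and follows essentially the same approach as the paper: the forward direction of (1) is immediate cancellation, the reverse direction fixes a reference element and verifies decomposability via a coordinate-by-coordinate telescoping sum, and (2) reads off a small spanning set from that same decomposition. The only cosmetic difference is that for (2) the paper centers by subtracting means (obtaining $\vu_0$ plus vectors $\tilde \vu_{z_i}$ with $\sum_{z_i}\tilde \vu_{z_i}=0$), whereas you anchor at a single reference vertex $z^{(0)}$; both yield the same dimension count $1+\sum_i(n_i-1)$.
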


It is easy to realize that if a collection of vectors $r(\mathcal Z)$ is decomposable, then the vectors appearing on the right of~\eqref{eq:linear_factors_no_bias} are \emph{never} uniquely determined. In particular, even though each $\vu_{z_i}$ is associated with a value of a factor $z_i \in \mathcal Z_i$, that vector cannot carry any ``semantic'' content. However, we can recover uniqueness in the components by simply turning to a ``centered'' decomposition.

\begin{restatable}[Centered decomposition]{lemma}{centereddecomp}\label{lem:centered_decomposition}
If a collection of vectors $r(\mathcal Z)$ is decomposable, then there exist unique vectors $\vu_0 \in V$ and $\vu_{z_i} \in V$  for all $z_i \in \mathcal Z_i$ ($i=1,\ldots,k$) such that $\sum_{z_i \in \mathcal Z_i} \vu_{z_i} = 0$ for all $i$ and
\begin{equation}\label{eq:linear_decomposition}
\vu_{z} = {\vu}_0 + \vu_{z_1} + \ldots + \vu_{z_k},
\end{equation}
for all $z = (z_1,\ldots, z_k)$.
\end{restatable}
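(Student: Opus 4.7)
The plan is to prove existence by centering an arbitrary decomposition and uniqueness by exploiting the zero-sum constraints via averaging.

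For existence, I would start from the hypothesis that $r(\mathcal Z)$ is decomposable, so there exist some vectors $\vu'_{z_i}$ with $\vu_z = \vu'_{z_1} + \ldots + \vu'_{z_k}$. For each factor $i = 1, \ldots, k$, define the average $\bar{\vu}_i = \frac{1}{|\mathcal Z_i|} \sum_{z_i \in \mathcal Z_i} \vu'_{z_i}$, and set $\vu_{z_i} := \vu'_{z_i} - \bar{\vu}_i$ and $\vu_0 := \bar{\vu}_1 + \ldots + \bar{\vu}_k$. Then by construction $\sum_{z_i \in \mathcal Z_i} \vu_{z_i} = 0$ for every $i$, and adding and subtracting the averages yields $\vu_z = \vu_0 + \vu_{z_1} + \ldots + \vu_{z_k}$, giving the desired centered decomposition.

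For uniqueness, I would suppose we have two centered decompositions $\vu_z = \vu_0 + \sum_i \vu_{z_i} = \vu'_0 + \sum_i \vu'_{z_i}$, both satisfying the zero-sum constraints, and form the differences $\vw_0 := \vu_0 - \vu'_0$ and $\vw_{z_i} := \vu_{z_i} - \vu'_{z_i}$. These satisfy
\begin{equation}\label{eq:unique-diff}
\vw_0 + \vw_{z_1} + \ldots + \vw_{z_k} = 0 \quad \text{for all } z \in \mathcal Z,
\end{equation}
together with $\sum_{z_i \in \mathcal Z_i} \vw_{z_i} = 0$ for each $i$. Fixing $z_2, \ldots, z_k$ and averaging \eqref{eq:unique-diff} over $z_1 \in \mathcal Z_1$ kills the $\vw_{z_1}$ term by the zero-sum constraint and yields $\vw_0 + \vw_{z_2} + \ldots + \vw_{z_k} = 0$; subtracting this from \eqref{eq:unique-diff} gives $\vw_{z_1} = 0$ for every $z_1$. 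By symmetry the same argument shows $\vw_{z_i} = 0$ for all $i$ and all $z_i$, and then \eqref{eq:unique-diff} forces $\vw_0 = 0$.

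The existence half is essentially a routine centering trick, so the only real subtlety is the uniqueness argument, and even there the key observation is compact: averaging over one factor annihilates that factor's contribution (because of the zero-sum condition) and leaves a decomposition of strictly lower arity, so a short induction-by-elimination forces each component to vanish. I do not expect any serious obstacle; the only point to be careful about is making sure the averaging step is legitimate, which it is since each $\mathcal Z_i$ is finite and nonempty.
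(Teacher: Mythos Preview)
Your proof is correct. The existence half is identical to the paper's: both start from an arbitrary decomposition, subtract off the factor-wise means, and collect those means into $\vu_0$.

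For uniqueness, the paper takes a slightly different (and more constructive) route. Rather than subtracting two putative centered decompositions and averaging out factors one at a time, the paper observes directly that any centered decomposition must satisfy the explicit formulas
\[
\vu_0 = \frac{1}{N}\sum_{z\in\mathcal Z}\vu_z, \qquad
\vu_{z_i} = \frac{n_i}{N}\sum_{\substack{z'=(z_1',\ldots,z_k')\\ z_i'=z_i}}\vu_{z'} - \vu_0,
\]
with $N=n_1\cdots n_k$; these follow by averaging $\vu_z=\vu_0+\sum_j\vu_{z_j}$ over $\mathcal Z$ (respectively over the slice $z_i'=z_i$) and using the zero-sum constraints. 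Since the right-hand sides depend only on the given data $\{\vu_z\}$, uniqueness is immediate. Your elimination argument reaches the same conclusion and is equally valid; the paper's version has the side benefit of producing the closed-form expressions that are reused later (e.g., in the approximation result of Proposition~\ref{prop:approximation}).
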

In the previous decomposition, the vectors $\vu_{z_i}$ are now uniquely associated with the value of a factor $z_i \in \mathcal Z_i$, but are \emph{relative} to the other values in $\mathcal Z_i$ (since they sum to zero). Similarly, the vector spaces $V_{\mathcal Z_i} := Span(\vu_{z_i}\colon z_i \in \mathcal Z_i)$ are uniquely associated with each factor $\mathcal Z_i$. In our applications, we will refer to $\vu_i$ as the \emph{ideal words} of the linear factorization and to each $V_{\mathcal Z_i}$ as the \emph{semantic space} associated with $\mathcal Z_i$. Despite its simplicity, we believe that the decomposition in Lemma~\ref{lem:centered_decomposition} paints an interesting intuitive picture of linear models of ``meaning.'' In this setting, the origin is not a universally meaningful point; for example, the origin of text embeddings does not correspond to the null string. Thus, meanings might be best viewed as an \emph{affine space}, where the origin is only chosen as a particular reference that may depend on context. Ideal words, on the other hand, provide \emph{relative meanings} with respect to the context. 

From Lemma~\ref{lem:basic-properties}, it follows that decomposable representations must be very low-dimensional and, in particular, ``generic'' embeddings will \emph{not} be decomposable. However, it is very easy to recover the nearest decomposable approximation for any given set of vectors $\vu_z, z \in \mathcal Z$.

\begin{restatable}{proposition}{approximation}\label{prop:approximation} Let $\alpha_{z_i}$ $z_i \in \mathcal Z_i$ be arbitrary positive weights such that $\sum_{z_i \in \mathcal Z_i} \alpha_{z_i} = 1$, and define $\beta_{z} := \prod_{i} \alpha_{z_i}$  for all $z = (z_1,\ldots,z_k)$. Then, for any norm $\|\cdot \|$ induced by an inner product on $V$, we have that
\begin{equation}\label{eq:best-approx}
\begin{aligned}
&\arg \min_{\tilde \vu_z} \sum_{z \in \mathcal Z} \beta_z \|\vu_z - \tilde \vu_z\|^2,\\
&\qquad s.t. \,\, \{\tilde \vu_z\} \text{ is decomposable},
\end{aligned}
\end{equation}
is given by $\tilde \vu_z = {\vu}_0 + \vu_{z_1} + \ldots + \vu_{z_k}$ where
\begin{equation}\label{eq:projection-weighted}
\vu_0 := {\sum_z \beta_z \vu_z}, \,\,  \vu_{z_i} := \frac{1}{\alpha_{z_i}}\sum_{\substack{z' = (z_1',\ldots, z_k')\\z_i' = z_i}} \beta_z \vu_{z'} - \vu_0.
\end{equation}
\end{restatable}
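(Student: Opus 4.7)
The plan is to recognize the problem as orthogonal projection in a weighted inner‐product space. The set $L \subset V^{\mathcal{Z}}$ of decomposable collections $\{\tilde{\vu}_z\}$ is a linear subspace. Since every $\alpha_{z_i}$ (and hence every $\beta_z$) is strictly positive,
$\langle (\va_z), (\vb_z) \rangle_\beta := \sum_z \beta_z \langle \va_z, \vb_z \rangle$
defines a genuine inner product on $V^{\mathcal{Z}}$. The objective in~\eqref{eq:best-approx} is therefore the squared distance from $(\vu_z)$ to $L$ in this inner product, and it admits a unique minimizer given by the $\beta$‑orthogonal projection of $(\vu_z)$ onto $L$, characterized by the normal equations $(\vu - \tilde{\vu}) \perp_\beta L$.

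Next I would pin down a convenient spanning set of $L$ to make the normal equations concrete. For any $\va \in V$, the constant collection $\tilde{\vu}_z \equiv \va$ belongs to $L$, and so does the ``slice'' collection that equals $\va$ when $z_i = \bar{z}_i$ and $\vzero$ otherwise, for each factor index $i$ and value $\bar{z}_i \in \mathcal{Z}_i$. These collectively span $L$, and testing $\beta$‑orthogonality against them yields
\[
\sum_{z\, :\, z_i = \bar{z}_i} \beta_z \bigl(\vu_z - \tilde{\vu}_z\bigr) = \vzero \quad \text{for all } i \in \{1,\ldots,k\},\ \bar{z}_i \in \mathcal{Z}_i,
\]
from which the global condition $\sum_z \beta_z (\vu_z - \tilde{\vu}_z) = \vzero$ follows by summing over $\bar{z}_i$.

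Then I would substitute $\tilde{\vu}_z = \vu_0 + \vu_{z_1} + \ldots + \vu_{z_k}$ under the \emph{weighted centering} $\sum_{z_i \in \mathcal{Z}_i} \alpha_{z_i} \vu_{z_i} = \vzero$ for every $i$. A straightforward analogue of Lemma~\ref{lem:centered_decomposition} shows that every decomposable collection admits such a representation uniquely (the argument is identical up to replacing uniform averages by $\alpha$‑weighted ones). The payoff is that all cross‑sums $\sum_z \beta_z \vu_{z_j}$ with $j \neq i$ factor through $\sum_{z_j} \alpha_{z_j} \vu_{z_j} = \vzero$ and hence collapse. The normal equations then decouple into $\vu_0 = \sum_z \beta_z \vu_z$ and $\alpha_{\bar{z}_i}(\vu_0 + \vu_{\bar{z}_i}) = \sum_{z : z_i = \bar{z}_i} \beta_z \vu_z$, which is exactly~\eqref{eq:projection-weighted}.

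The main obstacle is handling the redundancy of the parameterization $(\vu_0, \{\vu_{z_i}\})$: without a centering convention the normal equations are under‑determined, and with the ``wrong'' centering (for example the uniform one used in Lemma~\ref{lem:centered_decomposition}) the cross‑terms fail to vanish and the system does not diagonalize. The hard part is recognizing that the weights $\beta_z = \prod_i \alpha_{z_i}$ in the objective naturally single out $\alpha$‑weighted centering; once this is in place, the computation reduces to bookkeeping.
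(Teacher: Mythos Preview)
Your proposal is correct and is essentially the same argument as the paper's: the paper assumes without loss of generality the $\alpha$-weighted centering $\sum_{z_i}\alpha_{z_i}\vu_{z_i}=0$ and then sets the derivatives of the objective with respect to $\vu_0$ and each $\vu_{z_i}$ to zero, which is exactly your normal-equation/orthogonal-projection characterization written out componentwise. Your framing via orthogonal projection onto the linear subspace $L$ is slightly more explicit about \emph{why} the weighted centering is the right gauge (it is what makes the cross-sums collapse), whereas the paper simply invokes it as ``WLOG''; but the computations and the key idea are identical.
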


This fact shows that computing decomposable approximations amounts to performing simple weighted averages of the original vectors. In many cases, we will consider $\alpha_{z_i} = \frac{1}{n_i}$ and $\beta_z = \prod \frac{1}{n_i}$, however it can be useful to allow for additional ``knobs,'' as the following example illustrates.

\vspace{.3cm}

\begin{example} \label{ex:basic-example}
One of our main motivations to consider decomposable structures is to approximate (pre-trained) contextual text embeddings to obtain representations that are \emph{interpretable} and \emph{compositional}. More concretely, assume that each factor $\mathcal Z_i$ represents a finite collection of strings and that the representation $r: \mathcal Z_1 \times \ldots \times \mathcal Z_k \rightarrow V$ is defined by concatenating strings and then embedding the result using a contextual language encoder. For a very simple example, consider
\[
\mathcal Z = \{\text{a blue, a red, a green}\} \times \{\text{bike, house}\},
\]
which leads to six possible strings and six distinct embedding vectors.
Using  Proposition~\ref{prop:approximation}, we can easily find a decomposable approximation $\vu_{(col,obj)} \approx \vu_{0} + \vu_{col} + \vu_{obj}$, where $\vu_{col}$ and $\vu_{obj}$ are the ideal words representing a particular object and color from $\mathcal Z$. As we will see, these vectors can be used for semantic manipulations of embeddings. Note that ideal words are not the same as the encodings of the original words or substrings. In fact, quite intuitively, the meaning of ideal word vectors is determined entirely by the way in which the corresponding string interacts with other factors. For example, we have $\vu_{\rm green} = \alpha_{car}\vu_{\rm (green\, car)} + \alpha_{house}\vu_{\rm (green\, house)} - \vu_0$ where $\vu_0$ is the mean of all six embeddings. In this particular example, ``green house'' has distinct contextual meaning, but this can be controlled by using appropriate weights, if desired. 
See Section~\ref{sec:exp} and Figure~\ref{fig:diffusion-colors} for more discussions on similar examples.
\end{example}

We conclude this section by pointing out a connection between decomposable embeddings and a notion of ``disentangled representations'' proposed in~\cite{higginsDefinitionDisentangledRepresentations2018a}. We refer to the Appendix for a short summary of the relevant mathematical background and for additional discussions. In a broad sense, we can say that an embedding map $r: \mathcal Z \rightarrow V$ into a vector space $V$ is ``linearly compositional'' with respect to some group of transformations $G$ if 1) $G$ acts on the set $\mathcal Z$ 2)~$G$ acts on $V$ as invertible linear transformations, and 3)~$r$ is a $G$-morphism, that is, if $r( g \cdot z ) = g \cdot r(z)$. In our case of interest, the set $\mathcal Z = \mathcal Z_1 \times \ldots \times \mathcal Z_k$ is a finite set of composite concepts (\eg, \{rainy, sunny\} $\times$ \{morning, evening\}) and $G = \mathfrak S_{n_1} \times \ldots \times \mathfrak S_{n_k}$ is a product of symmetric groups that acts on $\mathcal Z$ by varying each component separately (\eg, swapping ``rainy'' $\leftrightarrow$ ``sunny'' and ``morning'' $\leftrightarrow$ ``evening,'' independently). Following~\cite{higginsDefinitionDisentangledRepresentations2018a}, we say that the action of $G$ on $V$ is ``linearly disentangled'' if there exists a decomposition $V = V_1 \oplus \ldots \oplus V_k$ such that $g = (g_1 v_1, \ldots, g_k v_k)$ for all $v=(v_1,\ldots,v_k) \in V$ and $g = (g_1,\ldots,g_k) \in G$. Intuitively, this means that we can permute the different factors independently by acting with linear transformations on the embedding space. With these definitions in place we have that linear factorizations of embeddings are intimately related to disentangled compositional representations.

\begin{restatable}{proposition}{groupaction} 
Let $r(\mathcal Z)$ be a set of decomposable vectors of maximal dimension. Then $r$ is compositional for some disentangled action of $G = \mathfrak S_{n_1} \times \ldots \times \mathfrak S_{n_k}$ on $V$. Conversely, if $r$ is compositional for a disentangled action of $G$, then the vectors $r(\mathcal Z)$ are decomposable.
\end{restatable}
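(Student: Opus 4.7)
The proof splits into forward and converse directions. For the forward direction, I plan to start from the centered decomposition of Lemma~\ref{lem:centered_decomposition}: since $r(\mathcal Z)$ is decomposable, write $\vu_z = \vu_0 + \vu_{z_1} + \ldots + \vu_{z_k}$ with $\sum_{z_i \in \mathcal Z_i} \vu_{z_i} = 0$ for each $i$. Let $V_i := \mathrm{Span}(\vu_{z_i} : z_i \in \mathcal Z_i)$; the zero-sum constraint forces $\dim V_i \le n_i - 1$, while the maximal-dimension hypothesis together with Lemma~\ref{lem:basic-properties} forces the sum $\mathrm{Span}(\vu_0) + V_1 + \ldots + V_k$ to be direct with $\dim V_i = n_i - 1$ and $\vu_0 \notin V_1 \oplus \ldots \oplus V_k$.

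To fit the ``$k$ summands'' form required by disentanglement, I would absorb $\mathrm{Span}(\vu_0)$ into the first block by setting $\tilde V_1 := \mathrm{Span}(\vu_0) \oplus V_1$ and $\tilde V_i := V_i$ for $i \ge 2$, extending trivially on any complement of $\mathrm{Span}(r(\mathcal Z))$ in $V$ if needed. I then define the action of $\mathfrak S_{n_i}$ on $V_i$ by the linear extension of $\vu_{z_i} \mapsto \vu_{g_i \cdot z_i}$: this is a well-defined representation because it is the descent of the natural permutation action on $\mathbb R^{\mathcal Z_i}$ modulo the $\mathfrak S_{n_i}$-invariant line $\mathrm{Span}(\sum_{z_i} e_{z_i})$, which is identified with $V_i$ via $e_{z_i} \mapsto \vu_{z_i}$. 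On $\tilde V_1$, I make $\mathfrak S_{n_1}$ fix $\vu_0$ and act as above on $V_1$. Equivariance $r(g \cdot z) = g \cdot r(z)$ then follows directly from the centered decomposition.

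For the converse, given a disentangled action on $V = V_1 \oplus \ldots \oplus V_k$, I decompose uniquely $r(z) = r(z)_1 + \ldots + r(z)_k$ with $r(z)_i \in V_i$. The key claim is that $r(z)_i$ depends only on $z_i$. To see this, for each $j \ne i$ and each $z_j' \in \mathcal Z_j$, pick $g_j \in \mathfrak S_{n_j}$ sending $z_j$ to $z_j'$ and apply equivariance to $g := (e,\ldots,g_j,\ldots,e) \in G$: reading off the $V_i$-component and using $g_i = e$ yields $r(g \cdot z)_i = r(z)_i$. Iterating over $j \ne i$ shows that $r(z)_i$ is insensitive to all components other than $z_i$, so setting $\vu_{z_i} := r(z)_i$ produces the decomposition $r(z) = \sum_i \vu_{z_i}$, proving decomposability.

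The main subtlety is in the forward direction: disentanglement requires exactly $k$ summands, whereas the centered decomposition naturally yields $k+1$ pieces ($\vu_0$ plus the $V_i$). This is precisely where the maximal-dimension hypothesis is essential, since it guarantees that the sum $\mathrm{Span}(\vu_0) \oplus V_1 \oplus \ldots \oplus V_k$ is direct and that the individual $V_i$ realize the full standard representation of $\mathfrak S_{n_i}$. Without maximality, the $V_i$'s could intersect nontrivially or fail to support well-defined permutation actions, and the clean absorption of $\mathrm{Span}(\vu_0)$ into one block would fail.
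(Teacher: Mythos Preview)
Your proof is correct. The forward direction matches the paper's argument essentially verbatim: define the permutation action on each $V_{\mathcal Z_i}$ by permuting the centered ideal words, absorb $\langle \vu_0\rangle$ (and, in the paper, the complement $W'$ of $\mathrm{Span}(r(\mathcal Z))$) into the first summand, and read off equivariance from the centered decomposition.

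For the converse you take a genuinely different and more elementary route. The paper does not use the given direct-sum decomposition directly; instead it builds intrinsic averaging (Reynolds) projections
\[
\pi_0(\vu)=\frac{1}{|G|}\sum_{g\in G} g\cdot \vu,\qquad
\tilde\pi_i(\vu)=\frac{1}{|G_{\hat i}|}\sum_{g\in G_{\hat i}} g\cdot \vu,\qquad
\pi_i=\tilde\pi_i-\pi_0,
\]
observes that disentanglement forces $V=V_0\oplus V_1\oplus\ldots\oplus V_k$ with $V_i=\mathrm{Im}(\pi_i)$, and then checks that $\pi_i(\vu_z)$ depends only on $z_i$; applied to $\vu_z$ these projections are exactly the ideal-word formulas~\eqref{eq:ideal-words}. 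Your argument instead takes the decomposition $V=V_1\oplus\ldots\oplus V_k$ guaranteed by disentanglement, writes $r(z)=\sum_i r(z)_i$, and uses equivariance under $(e,\ldots,g_j,\ldots,e)$ to show $r(z)_i$ is insensitive to $z_j$ for $j\neq i$. This is shorter and avoids any averaging machinery; the paper's route buys an explicit identification of the resulting factor vectors with the ideal-word averages, tying the group-theoretic decomposition back to Proposition~\ref{prop:approximation}.
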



\section{Decomposable Embeddings in Vision-Language Models}
\label{sec:vlm}

In this section, we discuss linear factorizations from a probabilistic viewpoint in the context of vision-language models (VLMs). A priori, it may not be clear why the geometric notion of decomposable embeddings should be relevant in practice---for example, in the case of CLIP's normalized embeddings, it may seem that non-linear spherical geometry should come into play. In this section, however, we argue that vector factorizations have simple probabilistic intepretations, and in particular, we should expect these structures to be present in real data embeddings.

In the following, we write $\mathcal X$ for a set of texts and $\mathcal Y$ for a set of images (for simplicity, we consider a finite set of text and images, which will always be the case in practice). We consider a VLM that uses parametric encoders of texts $x \mapsto \vu_x$ and of images $y \mapsto \vv_y$ into $V = \RR^d$ to model the conditional log-probabilities of $x$ given $y$ and $y$ given $x$ in a bilinear fashion:
\begin{equation}\label{eq:bilinear_prob}
p(x \, |\, y) = \frac{\exp \vu_x^\top \vv_y}{\sum_{x'} \exp \vu_{x'}^\top \vv_{y}}, \quad p(y \, | \, x) = \frac{\exp \vu_x^\top \vv_y}{\sum_{y'} \exp \vu_{x}^\top \vv_{y'}}.
\end{equation}
For example, CLIP~\cite{radfordLearningTransferableVisual2021} uses both expressions in~\eqref{eq:bilinear_prob} to optimize a symmetric cross-entropy.
This setup is similar to the one used in NLP for context-based embeddings~\cite{mikolovEfficientEstimationWord2013} and also in transformer-based language modeling~\cite{vaswaniAttentionAllYou2017}, the main difference being that in those cases only one of the two expressions in~\eqref{eq:bilinear_prob} is used (to model words based on context). Much of the discussion that follows can be applied to these cases as well, but we focus on VLMs for clarity.

For any given pair of embeddings $\vu_x, \vu_y$ there exists a unique probability $p(x,y)$ on $\mathcal X \times \mathcal Y$ compatible with these embeddings which satisfies
\begin{equation}\label{eq:vlm}
 \log p(x,y) = \vu_x^\top \vv_y + c, \quad c \in \RR.
\end{equation}
In the following, we consider the distribution on $\mathcal X \times \mathcal Y$ expressed by a model and defined by~\eqref{eq:vlm}. After the learning stage, this distribution should reflect a ``true'' distribution on the same space. We remark, however, that the embedding dimension $d$ is in practice much smaller than the number of images or texts used in training, which means that we are actually imposing a \emph{low-rank constraint} on the joint probability distribution. In NLP, this effect has been referred to as the ``softmax bottleneck''~\cite{yangBreakingSoftmaxBottleneck2018}. 

We now consider a set of factors $\mathcal Z = \mathcal Z_1 \times \ldots \times \mathcal Z_k$ and assume that each $z \in \mathcal Z$ is represented by a string $x(z) \in \mathcal X$. Note that formally we could have associated factors with images rather than texts, however it is more natural to express discrete concepts as text. The factors can correspond to combinations of particular tokens (\eg, attributes and objects) but the association with strings could potentially be more complex (\eg, (``royal'', ``man'') $\mapsto$ ``king''). The VLM model now provides an embedding of $\mathcal Z$ via $z \mapsto \vu_{x(z)}$.

\begin{restatable}{proposition}{linearcomp}\label{prop:linear_composition} 
In the setting described above, and assuming that $Span(\vv_y, y \in \mathcal Y) = \RR^d$, the embedding $z \mapsto \vu_{x(z)}$ of $\mathcal Z$ is decomposable in the sense of Definition~\ref{def:linearly_decomposable} if and only if there exists functions $q_0, \ldots, q_k$ such that
\begin{equation}\label{eq:probabilistic_independence}
p(x(z), y) = q_0(y)q_1(z_1, y) \ldots q_k(z_k,y),
\end{equation}
for all $z = (z_1,\ldots,z_k) \in \mathcal Z$ and $y \in \mathcal Y$.
\end{restatable}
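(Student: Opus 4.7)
The plan is to prove both directions by leveraging the hypothesis $Span(\vv_y : y \in \mathcal Y) = \RR^d$, which makes the map $\vu \mapsto (\vu^\top \vv_y)_{y\in\mathcal Y}$ injective on $\RR^d$. Consequently, the vector equation required for decomposability is equivalent to a scalar identity that must hold for every $y$, and I can freely pass between ``vector'' statements and ``log-probability'' statements.

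For the forward direction ($\Rightarrow$), I would start from the centered form $\vu_{x(z)} = \vu_0 + \sum_i \vu_{z_i}$ guaranteed by Lemma~\ref{lem:centered_decomposition}, substitute into $\log p(x(z),y) = \vu_{x(z)}^\top \vv_y + c$, and exponentiate. The sum inside the exponential splits as a product, giving the desired factorization with the explicit choice $q_0(y) := \exp(c + \vu_0^\top \vv_y)$ and $q_i(z_i, y) := \exp(\vu_{z_i}^\top \vv_y)$. This direction is essentially bookkeeping.

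For the converse ($\Leftarrow$), after taking logs one obtains
\begin{equation*}
\vu_{x(z)}^\top \vv_y = \bigl(\log q_0(y) - c\bigr) + \sum_{i=1}^k \log q_i(z_i,y).
\end{equation*}
I would then \emph{construct} candidate vectors using the uniform-weight averaging from Proposition~\ref{prop:approximation}: let $\vu_0$ be the mean of all $\vu_{x(z)}$ and let $\vu_{z_i}$ be the partial mean over $\{z' : z'_i = z_i\}$ minus $\vu_0$. Taking inner products with $\vv_y$ and plugging in the factorized log-probability, the averages $\mathbb E_{z_j}[\log q_j(z_j,y)]$ for $j\neq i$ cancel cleanly between $\vu_{z_i}^\top \vv_y$ and $\vu_0^\top \vv_y$, leaving $(\vu_0 + \sum_i \vu_{z_i})^\top \vv_y = \vu_{x(z)}^\top \vv_y$ for every $y$. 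The span assumption then upgrades this pointwise-in-$y$ identity to equality of vectors, which is exactly decomposability.

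The main obstacle lies in the converse. The naive impulse is to simply set $\vu_{z_i}^\top \vv_y := \log q_i(z_i,y)$, but an arbitrary scalar function of $y$ need not lie in the image of $\vu \mapsto (\vu^\top \vv_y)_y$: the $\vv_y$ span $\RR^d$ but $|\mathcal Y|$ is typically far larger than $d$, so this image is a proper subspace of $\RR^{|\mathcal Y|}$. The averaging construction sidesteps this entirely by producing $\vu_0, \vu_{z_i}$ from the existing $\vu_{x(z)}$'s—so they automatically live in $\RR^d$—and the factorization hypothesis is precisely what forces the cancellation to succeed. No new vectors need to be conjured outside the embedding space.
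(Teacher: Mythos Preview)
Your proposal is correct and mirrors the paper's own proof almost exactly: the paper also handles the converse by defining $\vu_0,\vu_{z_i}$ via the uniform averaging formulas, computing their inner products with $\vv_y$ using the additive log-decomposition $\log p(x(z),y)=g_0(y)+\sum_i g_i(z_i,y)$, observing the same cancellation of the $j\neq i$ averages, and invoking the span hypothesis to conclude $\tilde\vu_{x(z)}=\vu_{x(z)}$. Your explicit remark that one cannot naively posit $\vu_{z_i}^\top\vv_y=\log q_i(z_i,y)$ because $|\mathcal Y|\gg d$ is a helpful clarification the paper leaves implicit.
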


\begin{restatable}{corollary}{linearcompcor}\label{cor:probability-disentanglement} Under the assumptions of Proposition~\ref{prop:linear_composition}, an embedding $z \mapsto \vu_{x(z)}$ of $\mathcal Z$ is decomposable if only if the factors $z_i$ are conditionally independent given any image $y$. 
\end{restatable}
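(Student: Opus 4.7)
The plan is to derive the corollary directly from Proposition~\ref{prop:linear_composition}, which already equates decomposability with the existence of a factorization $p(x(z), y) = q_0(y) q_1(z_1, y) \cdots q_k(z_k, y)$. All that remains is the purely probabilistic statement that such a product form is equivalent to the conditional independence of $z_1, \ldots, z_k$ given $y$. Some mild bookkeeping is needed because $z \mapsto x(z)$ is not assumed surjective onto $\mathcal X$, so I would work with the induced sub-probability $p(z, y) := p(x(z), y)$ on $\mathcal Z \times \mathcal Y$ and its conditional $p(z \mid y) := p(z, y)/\sum_{z'} p(z', y)$, which is a genuine probability on $\mathcal Z$ whenever the denominator is nonzero.

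For the ($\Rightarrow$) direction, I would start from the factorization guaranteed by Proposition~\ref{prop:linear_composition}, sum over $z \in \mathcal Z$, and exploit the product structure to obtain $p(y) = q_0(y)\prod_i Q_i(y)$ with $Q_i(y) := \sum_{z_i} q_i(z_i, y)$. Dividing then yields
\[
p(z \mid y) \;=\; \prod_{i=1}^k \frac{q_i(z_i, y)}{Q_i(y)}.
\]
Since each factor on the right depends only on $z_i$ (and $y$), marginalizing out the other coordinates identifies $q_i(z_i,y)/Q_i(y)$ with $p(z_i \mid y)$, giving $p(z \mid y) = \prod_i p(z_i \mid y)$, which is exactly conditional independence.

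For the ($\Leftarrow$) direction, I would assume conditional independence and simply set $q_0(y) := p(y)$ and $q_i(z_i, y) := p(z_i \mid y)$, so that $p(x(z), y) = p(y)\, p(z \mid y) = p(y)\prod_i p(z_i \mid y)$. This is the factorization required by Proposition~\ref{prop:linear_composition}, which then delivers decomposability of the embedding $z \mapsto \vu_{x(z)}$.

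The only genuinely non-trivial step is the identification of the normalized factors $q_i(z_i,y)/Q_i(y)$ with the true marginals $p(z_i \mid y)$: this is the uniqueness of the one-dimensional marginals of a product measure, and it is what allows the abstract factorization provided by Proposition~\ref{prop:linear_composition} to be promoted to the probabilistic statement of conditional independence. Everything else is elementary summation, and no further geometric input from Section~\ref{sec:linear-factorizations} is needed beyond the proposition itself.
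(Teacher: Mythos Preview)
Your proposal is correct and follows essentially the same route as the paper: both arguments reduce the corollary to showing that the abstract factorization of Proposition~\ref{prop:linear_composition} is equivalent to conditional independence of the $z_i$ given $y$, and both establish this by normalizing the factors $q_i$ and identifying $q_i(z_i,y)/Q_i(y)$ with the marginal $p(z_i\mid y)$. One small slip to fix: since $z\mapsto x(z)$ need not be surjective, summing over $z$ yields $\sum_{z'}p(x(z'),y)$ rather than $p(y)$, so in the $(\Leftarrow)$ direction you should take $q_0(y):=\sum_{z'}p(x(z'),y)$ (the paper's $Z_y\,p(y)$) rather than $p(y)$ --- precisely the bookkeeping you flagged at the start.
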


It is perhaps not surprising that the log-linear form of the model translates multiplicative decompositions into additive ones. It may be counterintuitive, however, that the conditional probabilities $p(z_i|y)$ as $y$ varies actually depend on \emph{all} of the ideal word vectors $\vu_{z_i}$, since normalizing constants can change with $y$. Indeed we have that
\begin{equation}\label{eq:single-factor}
\begin{aligned}
&p(z_i \, | \, y) = 
\exp(\vu_{z_i}^\top \vv_y)h(\mathcal Z_{j\ne i}, y),
\end{aligned}
\end{equation}
where
$h(\mathcal Z_{j\ne i}, y)$ is a function that depends on $y$ and all vectors corresponding to $\mathcal Z_j$ with $j \ne i$. In this sense, the geometric perspective of factorization is simpler since it disregards this dependence as $y$ varies.

The conditional independence from Proposition~\ref{prop:linear_composition} may seem like a strict requirement and may not be obviously true in the real world. For this reason, we discuss some relaxed conditions and explain what they imply in terms of decomposable structures.  First, given an image $y \in \mathcal Y$, we say that the probability $p(x(z), y)$ is \emph{mode-disentangled} (for the factor $\mathcal Z_i$) if 
\begin{equation}\label{eq:mode_disentanglement}
\argmax_{z_i\in \mathcal Z_i} p(x(z_i, z_{-i}),y) = \arg \max_{z_i \in \mathcal Z_i} p(x(z_i, z_{-i}'),y),
\end{equation}
for all $z_{-i}:=(z_1, \ldots, z_{i-1}, z_{i+1},\ldots, z_k)$ and $z_{-i}':=(z_1', \ldots, z_{i-1}', z_{i+1}',\ldots, z_k')$. Intuitively, this simply means means that it is possible to determine the most likely value of the factor $\mathcal Z_i$ by disregarding all of the remaining factors. Similarly, we say that $p(x(z), y)$ is \emph{order-disentangled} (for the factor $\mathcal Z_i$) if 
\begin{equation}\label{eq:order-disentaglement}
\begin{aligned}
&p(x(z_i, z_{-i}),y) \ge p(x(z_i', z_{-i}),y) \\
&\quad \Longleftrightarrow p(x(z_i, z_{-i}'),y) \ge p(x(z_i', z_{-i}'),y).
\end{aligned}
\end{equation}
for all $z_{-i}$ and $z_{-i}'$. This now means that it is possible to \emph{rank} the values of the factor $\mathcal Z_i$ by disregarding all of the remaining factors. It is easy to see that conditional independence implies order-disentanglement which in turn implies mode-disentanglement. If $|\mathcal Z_i| \le 2$, then mode-disentanglement and order-disentanglement are equivalent.

\begin{restatable}[Relaxed feasibility of linear factorizations]{proposition}{relaxation}\label{prop:relaxations} 1) If $y \in \mathcal Y$ is such that $p(x(z), y)$ is mode-disentangled, then one can replace the embedding vectors $\vu_{x(z)}$ with their decomposable approximations $\tilde \vu_{x(z)}$ from Proposition~\ref{prop:approximation} (for any choice of weights) and obtain the same prediction for $z$ given $y$; 2) If $p(x(z), y)$ is order-disentangled for all images $y$  sampled from a distribution with full support over the unit sphere, then the vectors $\vu_{x(z)}$ are necessarily decomposable.
\end{restatable}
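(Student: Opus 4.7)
The plan is to prove the two parts separately. Throughout, I use that the bilinear model gives $\vu_{x(z)}^\top \vv_y = \log p(x(z),y) + c(y)$, so inequalities between joint probabilities at a fixed $y$ are equivalent to inequalities between the corresponding text-embedding inner products with $\vv_y$.

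For part~1, the prediction of $z$ from $y$ under the original embeddings is $\argmax_z \vu_{x(z)}^\top \vv_y$, while under the decomposable approximation $\tilde \vu_{x(z)} = \vu_0 + \sum_i \vu_{z_i}$ from Proposition~\ref{prop:approximation} the inner product splits as $\vu_0^\top \vv_y + \sum_i \vu_{z_i}^\top \vv_y$. Since the first term is constant in $z$, the argmax decouples into $(\argmax_{z_i} \vu_{z_i}^\top \vv_y)_{i=1}^k$. Using the explicit formula~\eqref{eq:projection-weighted}, each $\vu_{z_i}^\top \vv_y$ equals a positive convex combination (with weights $\prod_{j\ne i} \alpha_{z_j}$) of the scalars $\vu_{x(z_i, z_{-i})}^\top \vv_y$ over $z_{-i}$, plus a $z_i$-independent constant. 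Mode-disentanglement asserts that $\argmax_{z_i} \vu_{x(z_i, z_{-i})}^\top \vv_y$ is the same element $z_i^*$ for every $z_{-i}$, and any positive weighted sum of functions sharing a common argmax retains that argmax. Hence $\argmax_{z_i} \vu_{z_i}^\top \vv_y = z_i^*$, which matches the $i$-th coordinate of the original prediction.

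For part~2, I would first convert order-disentanglement into the geometric statement that for every factor $i$ and every pair $z_i \ne z_i'$ the difference $\Delta_i(z_{-i}) := \vu_{x(z_i, z_{-i})} - \vu_{x(z_i', z_{-i})}$ defines the same closed half-space $\{\vv : \Delta_i(z_{-i})^\top \vv \ge 0\}$ independently of $z_{-i}$, for every $\vv_y$ in the support. Combining full support of $\vv_y$ on the unit sphere with continuity, the two half-spaces must coincide as subsets of the entire sphere, which forces $\Delta_i(z_{-i}) = \lambda \, \Delta_i(z_{-i}')$ for some $\lambda > 0$. To upgrade this to decomposability, I would couple the constraint with the simultaneous order-disentanglement of a second factor $j$: the parallelogram identity $\Delta_i(z_{-i}) - \Delta_i(z_{-i}') = \Delta_j(z_{-j}) - \Delta_j(z_{-j}')$, when $z_{-i}, z_{-i}'$ differ only in coordinate $j$, together with $\Delta_j(z_{-j}) = \mu \, \Delta_j(z_{-j}')$, yields the identity $(\lambda - 1)\,\Delta_i(z_{-i}') = (\mu - 1)\,\Delta_j(z_{-j}')$. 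Whenever the two first-order differences on the right are linearly independent this forces $\lambda = \mu = 1$, which is exactly the decomposability criterion of Lemma~\ref{lem:basic-properties}(1).

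The main obstacle is precisely this last step: pinning the scalars down to $1$ requires excluding degenerate configurations in which the first-order differences along different factors are collinear (for example, all embeddings lying on a common line satisfy single-factor order-disentanglement with scalar ratios $\ne 1$). The way forward I would pursue is to use the full-support hypothesis more forcefully to rule out such low-rank configurations, exploiting that they are incompatible with order-disentanglement being non-trivial on an open set of directions $\vv_y$; alternatively, one may bootstrap order-disentanglement for every $\vv_y$ on the sphere into the conditional-independence condition~\eqref{eq:probabilistic_independence}, after which Proposition~\ref{prop:linear_composition} and Corollary~\ref{cor:probability-disentanglement} deliver decomposability directly.
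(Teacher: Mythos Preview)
Your argument tracks the paper's proof closely. Part~1 is identical in substance. For part~2, the paper also reduces order-disentanglement (via full support on the sphere) to positive proportionality of the first-order differences $\Delta_i$ across choices of $z_{-i}$, and then couples two factors $i,j$ to force the proportionality constants to equal~$1$. The paper packages that coupling as a small lemma on four points $\vp,\vq,\vr,\vs$: if $\vp-\vq=\lambda(\vr-\vs)$ and $\vp-\vr=\mu(\vq-\vs)$, then either all four are collinear or $\lambda=\mu=1$. Your parallelogram identity $(\lambda-1)\Delta_i = (\mu-1)\Delta_j$ is an equivalent formulation of the same fact.

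Where you diverge is in handling the degenerate (collinear) case. Neither of your proposed fixes works: collinear configurations are \emph{compatible} with order-disentanglement holding for every direction $\vv_y$ (the ordering of both factors is then governed by the sign of a single linear functional), so ``using full support more forcefully'' cannot exclude them; and bootstrapping to conditional independence is circular, since that is precisely the conclusion. The paper's resolution is different and simpler: it observes that if the four embeddings $\vu_{(z_i,z_j,\cdot)},\vu_{(z_i',z_j,\cdot)},\vu_{(z_i,z_j',\cdot)},\vu_{(z_i',z_j',\cdot)}$ are collinear, then the ranking of $\{z_i,z_i'\}$ determines the ranking of $\{z_j,z_j'\}$ for every $y$, so the two factors are not genuinely independent attributes. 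This is then excluded by (an implicit strengthening of) the definition of ``factors'', namely that all rankings of values in $\mathcal Z_i$ should be achievable for some $y$ regardless of the other coordinates. In short, the degeneracy is ruled out on semantic grounds, not by the full-support hypothesis.
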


The second part of this statement means that, roughly speaking, we should espect that imposing order-disentanglement for an increasing number of images would gradually lead to decomposable embeddings.

\begin{example} Let $\mathcal Z$ be of the form $\{o_1, o_2\} \times \{c_1, c_2\}$ (objects, contexts) and let $x(z)$ be the corresponding collection of strings (\eg, $x(o_i,c_j)=$``a photo of a [$o_i$] in [$c_j$]''). Then mode and order disentanglement are equivalent and mean that
\begin{equation}\label{eq:ex-disentanglement}
\begin{aligned}
&p(x(o_1, c_1)|y) > p(x(o_2, c_1)|y) \\[.1cm]
&\qquad \Leftrightarrow p(x(o_1, c_2)|y) > p(x(o_2, c_2)|y),\\[.3cm]
&p(x(o_1, c_1)|y) > p(x(o_1, c_2)|y) \\[.1cm]
&\qquad \Leftrightarrow p(x(o_2, c_1)|y) > p(x(o_2, c_2)|y).
\end{aligned}
\end{equation}
These are reasonable conditions on the probability $p(x(z),y)$ since it is normally possible to discriminate object and context in an image independently. If $p(x(z),y)$ and $y$ satisfy~\eqref{eq:ex-disentanglement}, then the first part of Proposition~\ref{prop:relaxations} means that we can use two (approximate) ``ideal word'' vectors $\vu_{o_1} = -\vu_{o_2}$ and $\vu_{c_1} = -\vu_{c_2}$ instead of the four original vectors $\vu_{x(o_i,c_j)}$ to assign the correct label to $y$. The second part of Proposition~\ref{prop:relaxations} means that if~\eqref{eq:ex-disentanglement} holds for ``all'' images $y$ (\ie, vectors covering the unit sphere), then the original vectors $\vu_{x(o_i,c_j)}$ are actually decomposable.
\end{example}

\section{Experiments}
\label{sec:exp}

We now empirically investigate the presence and usefulness of decomposable structures in real VLM embeddings. In all of our experiments, we use a pre-trained CLIP encoder \cite{radfordLearningTransferableVisual2021}\footnote{We use the HuggingFace implementation of CLIP with the publicly available checkpoint based on a ViT-L/14 vision transformer. See \url{https://huggingface.co/openai/clip-vit-large-patch14}}. Unless stated otherwise, we compute decomposable approximations of embeddings using Proposition~\ref{prop:approximation} with $\alpha_{z_i} = \frac{1}{n_i}$ and $\beta_z = \prod \frac{1}{n_i}$. We use different datasets that have a compositional nature: MIT-states~\cite{isolaDiscoveringStatesTransformations2015} and UTZappos~\cite{yuFineGrainedVisualComparisons2014}, that are image classification datasets where labels are pairs attribute--object; CelebA~\cite{liuDeepLearningFace2015} and Waterbirds~\cite{sagawaDistributionallyRobustNeural2020a} in which images have a label and a spurious attribute; and DeepFashion2~\cite{DeepFashion2} with PerVL annotations from~\cite{cohenThisMyUnicorn2022a}, where the goal is to retrieve object instances from different contexts. We also include a visualization of ideal words using a CLIP-guided diffusion model (Stable Diffusion 2.1\footnote{\url{https://huggingface.co/stabilityai/stable-diffusion-2-1}})~\cite{rombachHighResolutionImageSynthesis2022}. We emphasize that our goal is not to achieve state-of-the-art results, although we will see that linear manipulations can be surprisingly effective and sometimes outperform significantly more complex methods. Rather, we aim to show that linear decomposable structures in embedding spaces provide a useful conceptual and practical framework for \textit{understanding} and \textit{controlling} the behavior of pre-trained VLMs.

\begin{figure}
    \centering
    \includegraphics[width=.98\columnwidth]{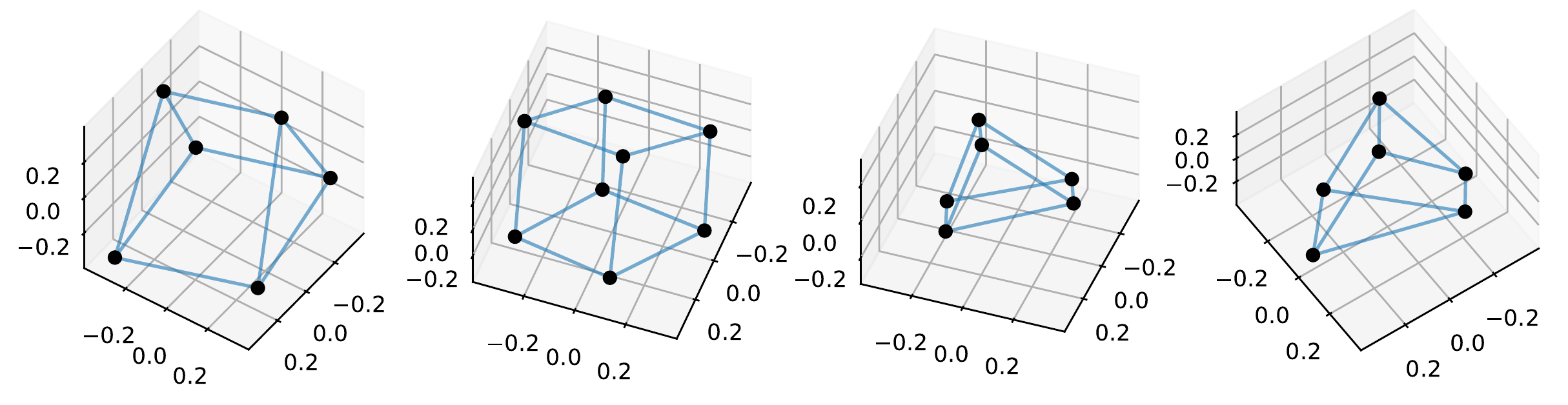}\\[.25cm]
    \includegraphics[width=.98\columnwidth]{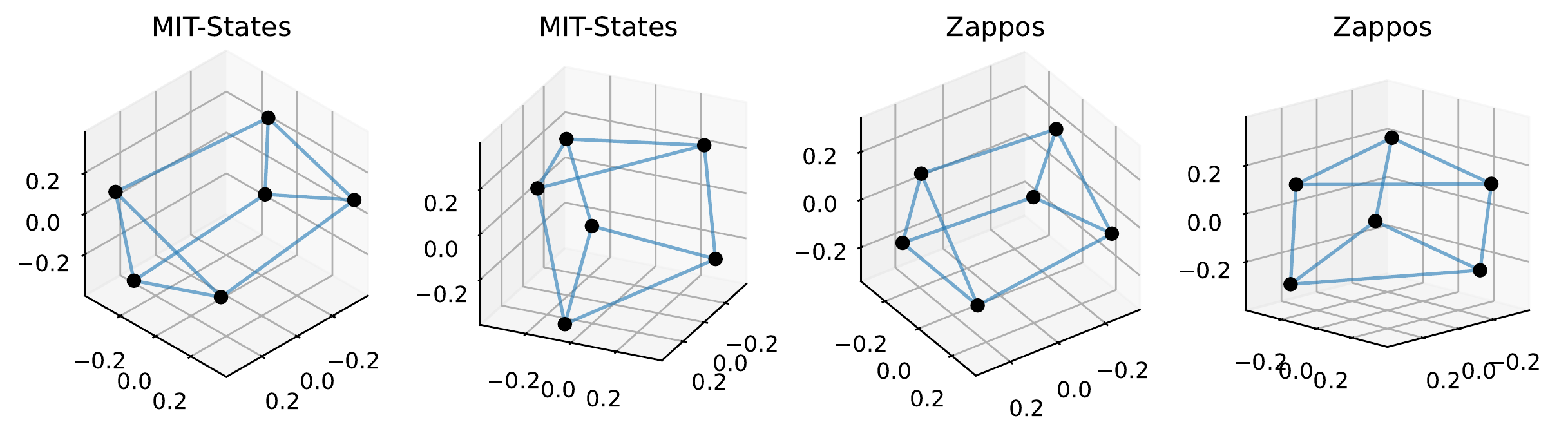}\\
    \caption{\textbf{Visualization of embeddings.} \emph{Top}: projected embeddings of manually constructed strings associated with decomposable concepts. \emph{Bottom:} projected embeddings for strings of the type ``an image of a [a] [o]'' for randomly chosen attributes and objects from MIT-states~\cite{isolaDiscoveringStatesTransformations2015} and UTZappos~\cite{yuFineGrainedVisualComparisons2014}. Symmetric structures indicate that embeddings are approximately decomposable. See text for details.}
    \label{fig:3dplots}
    \vspace{-.5cm}
\end{figure}

\vspace{-.3cm}

\paragraph{Visualization of embeddings.} 
Figure~\ref{fig:3dplots} shows some examples of embeddings of composite strings, visualized in 3D using PCA. In the top row, we show examples of manually constructed strings. In order: ``a photo of a \{red, blue, pink\} $\times$ \{car, house\}''; ``a photo of a \{big, small\} $\times$ \{cat, dog\} $\times$ \{eating, drinking\}''; ``\{a photo of a, a picture of a\} $\times$ \{place, object, person\}''; ``king, queen, man, woman, boy, girl'' (where one factor would correspond to male-female and the other to a generic context). In the bottom row, we present strings of the type ``an image of a [a] [o]'' for randomly chosen attributes and objects from MIT-states~\cite{isolaDiscoveringStatesTransformations2015} and UTZappos~\cite{yuFineGrainedVisualComparisons2014} (first using two attributes and three objects, and then using three attributes and two objects). Here we always use either $2\times 3$ or $2 \times 2 \times 2$ concepts since these decomposable structures have expected affine dimension 4, or linear dimension 3. The presence of roughly parallel edges and faces in these figures indicate that embeddings are approximately decomposable.
We note that in many of these examples the factorization of the concepts is already reflected in the \emph{syntax} of the strings, \ie, in the presence of repeated substrings in prompts with similar meaning. However, factorized vectors also encode semantic aspects, as can be seen in the last two examples from the first row. In the fourth example, the encoded strings have no repeated substrings, so the structure is ``emergent''; in the third example, the factor corresponding to \{a photo of a, a picture of a\} results in an ideal word vector with a smaller norm compared to the to other directions (resulting in a ``squashed'' triangular prism), as one might expect since this factor is not semantically significant. 
We refer to the Appendix for a more in-depth discussion. 

\vspace{-.3cm} 
\paragraph{Compositional classification.} We evaluate the usefulness of linear decomposable approximations for object-attribute labels 
of the MIT-states~\cite{isolaDiscoveringStatesTransformations2015} and UTZappos~\cite{yuFineGrainedVisualComparisons2014} datasets. The default strategy for applying CLIP in a zero-shot fashion on these datasets is to use text captions such as $x(a, o)$=``an image of a [$a$] [$o$].'' This results in $n_{obj} \times n_{attr}$ captions that each image must be compared with. We want to explore whether the embedding vectors $\vu_{x(a,o)}$ can be approximated with a decomposable set $\tilde \vu_{x(a,o)} = \vu_0 + \vu_a + \vu_o$, so that inference can be performed using only $n_{obj} + n_{attr}$ embedding vectors. The intuitive choice for such vectors would be to use the representations of captions such as ``image of a [$a$] object'' and ``image of a [$o$].'' We compare this choice with using the ``ideal words'' associated with the original captions, where the representation of an object $o$ is simply given by $\vu_o := \frac{1}{n_{attr}} \sum_a \vu_{x(a,o)}$, and similarly for attributes, as in Proposition~\ref{prop:approximation} (in this setting, there is no need to remove the mean vector $\vu_0$ since it is multiplied with every image vector). The resulting disjoint representations for objects and attributes ($\vu_o$ and $\vu_a$) are ``contextualized,'' in the sense that they optimally approximate the original pairwise embeddings. In Table~\ref{tab:ideal_comp_datasets}, ``pair'' refers to using the original pairwise labels, ``real words'' uses the embeddings of words corresponding to objects and attributes using ``image of a [$a$] object'' and ``image of a [$o$].'', while ``ideal words'' computes the vector ideal words for the factorization. We see that ideal words clearly outperform the \textit{real words} baseline, and often even surpass the accuracy of \emph{pair}. For MIT-States, using decomposable labels translates into using 360 vs. 28175 class vectors.

\begin{table}[]
\scriptsize
\centering
\begin{tabular}{@{}llccc@{}}
\toprule
                                     & \textbf{Method}             & \textbf{Pair Acc} & \textbf{Attr Acc} & \textbf{Obj Acc} \\ \midrule
\multirow{3}{*}{{MIT-states}~\cite{isolaDiscoveringStatesTransformations2015}} & pair                & 7.7\%              & 16.2\%             & 47.8\%            \\ \cmidrule(l){2-5} 
                                     & real words            & 10.0\%             & 19.3\%             & 49.3\%            \\
                                    & ideal words & \textbf{11.5\%}             & \textbf{21.4\%}             & \textbf{50.8\%}            \\
                                     \midrule
\multirow{3}{*}{{UT Zappos}~\cite{yuFineGrainedVisualComparisons2014}}  & pair                & \textbf{12.4\%}             & 17.1\%             & \textbf{55.7\%}            \\ \cmidrule(l){2-5} 
                                     & real words            & 8.4\%              & 10.3\%             & 51.0\%            \\
                                     & ideal words & 10.8\%             & \textbf{19.2}\%             & 55.3\%            \\ 
                                     \bottomrule
\end{tabular}
\caption{\textbf{Zero-shot image classification results on compositional datasets.} Here ``pair'' refers to using all attribute-object pairs as candidate labels; ``real words'' refers to using labels corresponding to real words (\ie, separate attribute and object labels); ``ideal words'' refers to using compositional labels based on ideal words. Ideal words always lead to better accuracy than real words and often even outperform pairwise labels.}
\vspace{-.2cm}
\label{tab:ideal_comp_datasets}
\end{table}

\vspace{-.4cm}

\paragraph{Debiasing.} We can apply the decomposition into ideal words as a baseline strategy to remove contexts or biases from embeddings. The debiasing task can be formalized using the group robustness framework proposed in~\cite{sagawaDistributionallyRobustNeural2020a}. In this setting, we are given a collection of labels $\mathcal Y$ and spurious attributes $\mathcal A$, and we define a ``group'' as a pair $g \in \mathcal Y \times \mathcal A$. Assuming that each group corresponds to a probability $P_g$ on an input space $\mathcal X$, the goal is to find a classifier $f: \mathcal X \rightarrow \mathcal Y$ that leads to a small gap between worst-group error and average error:
\begin{equation}
\max_{g} \mathbb E_{x \sim P_g} \ell(f(x),y) - \mathbb E_{x \sim P}
 \ell(f(x),y)).
 \end{equation}
In a zero-shot setting with CLIP, classifiers are prompts that inherit biases from the dataset used in pre-training, so group robustness is not guaranteed. 
To address this problem, the authors of~\cite{chuangDebiasingVisionLanguageModels2023} propose a method for debiasing prompts that finds a projection map that makes spurious prompts irrelevant  (following~\cite{bolukbasiManComputerProgrammer2016b}) and then additionally regularizes the projection map to ensure that certain prompts are mapped near each other in embedding space. 
Here we note that a much simpler baseline would be to use ideal words to leverage the joint label-attribute representation provided by the pre-trained VL model and ``average out'' spurious attributes. More precisely, starting from a set of embeddings $\vu_{(y,a)}$ corresponding to prompts representing each group $g=(y,a)$, ideal words suggest to define the encoding of each label $y$ to be $\vu_{y} := \frac{1}{|\mathcal A|}\sum_{a \in \mathcal A} \vu_{(y, a)}.$
Once again, this is the same as the (shifted) ideal word corresponding to $y$, obtained by approximating pairwise embeddings of labels and attributes in a decomposable way.
Following~\cite{chuangDebiasingVisionLanguageModels2023}, we evaluate group robustness of unbiased prompts on the Waterbird~\cite{sagawaDistributionallyRobustNeural2020a} and CelebA~\cite{liuDeepLearningFace2015} datasets. For the Waterbird dataset, the labels are ``landbird'' and ``waterbird,'' and the confounding factor is water/land background. For the CelebA dataset, the labels are ``blond'' and ``dark'' hair and the confounding factor is the binary gender. For our simple unbiasing method, we prepend prompts associated with labels with prompts associated with spurious attributes, and then average over all the spurious prompts. In both datasets, we consider exactly the same prompts for spurious attributes and labels used in~\cite{chuangDebiasingVisionLanguageModels2023} (see the Appendix for a description). Our results are shown in Table~\ref{tab:group_robust}. On the CelebA dataset, our simple averaging strategy achieves a much smaller gap between average and worst group accuracy than the method proposed in~\cite{chuangDebiasingVisionLanguageModels2023} (1.6 vs 10.1). For Waterbird datsets, the gap is larger but comparable, and average accuracy is higher.
 
 \begin{table}[t]
\footnotesize
\centering
\begin{tabularx}{.45\textwidth}{l | c c c | c c c }
\toprule
&  \multicolumn{3}{c}{\textbf{Waterbird}~\cite{sagawaDistributionallyRobustNeural2020a}} & \multicolumn{3}{c}{\textbf{CelebA}~\cite{liuDeepLearningFace2015}}  \\
 & WG & Avg & Gap & WG & Avg & Gap  \\
\midrule
Zero-shot  & 45.3 & 84.4 & 39.1  & 72.8 & \textbf{87.6}  & 14.9
\\
Orth-Proj~\cite{chuangDebiasingVisionLanguageModels2023} & 61.4 & {86.4} & 25.0 & 71.1 & 87.0 & 15.9
\\
Orth-Cali~\cite{chuangDebiasingVisionLanguageModels2023} & \textbf{68.8} & 84.5 & \textbf{15.7} & {76.1} & 86.2 & 10.1
\\
Ideal Words & 64.6  & \textbf{88.0} &   23.3  & \textbf{83.9} & {85.5}  & \textbf{1.6} \\
\bottomrule
\end{tabularx}
\caption{\textbf{Group robustness results.} Ideal words can be used as a simple yet performant baseline for debiasing applications. }
\label{tab:group_robust}
\vspace{-.3cm}
\end{table}

\begin{table}[t]
\centering
    \resizebox{\columnwidth}{!}{
        \begin{tabular}{l|cccc}
        \toprule
            {} &    Text Only &  AvgImg+Text & PALAVRA~\cite{cohenThisMyUnicorn2022a} & IW \\
        \midrule
        DeepFashion2~\cite{DeepFashion2} & 17.6 $\pm$ 0.0 & 21.7 $\pm$ 2.4 & 28.4 $\pm$ 0.7$\ast$ & \textbf{37.0} $\pm$ 1.1 \\
        \midrule
        \midrule
        {} &    IW w.o.~mean removal &  \multicolumn{2}{c}{IW with Norm on mean} & IW \\
        \midrule
        DeepFashion2~\cite{DeepFashion2} & 22.1 $\pm$ 2.4 &  \multicolumn{2}{c}{36.5 $\pm$ 1.4} & \textbf{37.0} $\pm$ 1.1 \\
        \bottomrule
        \end{tabular}
    }
    \caption{\textbf{Concept retrieval results.} Mean Reciprocal Rank retrieval metric on the DeepFashion2~\cite{DeepFashion2} with annotations from PerVL~\cite{cohenThisMyUnicorn2022a}. Numbers with $\ast$ are taken from~\cite{cohenThisMyUnicorn2022a}.}
    \label{tab:main_retrieval}
\end{table}

\paragraph{Composing concepts and contexts.} We perform experiments using the DeepFashion2 dataset~\cite{DeepFashion2} with the captions provided in PerVL~\cite{cohenThisMyUnicorn2022a}. This dataset contains images of 100 unique fashion items (``concepts'') with textual descriptions. The task is to retrieve an image given a text query that includes a personalized concept that is specified using a small number of examples (5 samples). An example of a text query is ``The [CONCEPT] is facing a glass store display.''
In~\cite{cohenThisMyUnicorn2022a}, the authors propose a method called PALAVRA that trains new CLIP tokens to be associated with the custom concept; the learned tokens can then be used within natural language for retrieving images. The authors compare their method with a baseline approach dubbed ``AvgIm+Text'' which consists in averaging the CLIP embedding of the concept support images and of the embedded text query. This strategy is presented as the second best approach after PALAVRA. Inspired by our linear factorization of concepts and contexts, we propose to use a modification of AvgIm+Text where instead of averaging text and image embeddings, we add to the text embedding the \emph{difference} between mean image embeddings of the specialized concept (``my shirt'') and the mean embeddings of the general (coarse-grained) concept images (all images of shirts in the dataset). For a concrete example, if [CONCEPT] is a particular instance of a shirt, then the AvgIm+Text approach would be as follows:
\[
\begin{aligned}
&\textbf{AvgIm+Text}:\\
&\vu(\text{``A person wearing [CONCEPT] sitting on a couch})\\
&\approx \vu(\text{``A person wearing a shirt stting on a couch})\\
&\,\, + {\rm Norm}({\rm Mean}\{\vv(\text{CONCEPT})\}),
\end{aligned}
\]
where $\vu$ is the text embedding and $\vv$ is the image embedding, $\rm Mean$ means the mean over supporting samples, and $\rm Norm$ means normalization. In contrast, we propose to use the following strategy:
\[
\begin{aligned}
&\textbf{Ideal Words}:\\
&\vu(\text{``A person wearing [CONCEPT] sitting on a couch})\\
&\approx \vu(\text{``A person wearing a shirt stting on a couch})\\
&\,\, - {\rm Mean}\{\vv(\text{shirt})\} + {\rm Mean}\{\vv(\text{CONCEPT})\}.
\end{aligned}
\]
Our results are shown in~Table~\ref{tab:main_retrieval}.
Remarkably, this simple strategy that uses CLIP embeddings and \emph{does not require any training} outperforms PALAVRA by a large margin (in our experiments, we used the implementation and evaluation code provided in~\cite{cohenThisMyUnicorn2022a} with only minimal changes). This modified approach can be interpreted from the perspective of decomposable embeddings, since we are assuming that $\vu(\text{context},\text{CONCEPT}) - \vu(\text{context},\text{shirt})$ does not significantly depend on the context and can be approximated as the difference mean vectors representing the specific CONCEPT and the generic shirt. Table~\ref{tab:main_retrieval} also includes ablations for the two modifications we made \wrt to AvgIm+Text proposed in ~\cite{cohenThisMyUnicorn2022a} (\ie skipping the normalization step and removing the mean of the coarse-grained concept).

\vspace{-.3cm}

\paragraph{Visualizing ideal words.} We propose to visualize the effect of linear-algebraic operations with ideal words using a CLIP-guided diffusion model (Stable Diffusion 2.1). In this setting, we compute ideal words of decomposable strings in the same way as before (as in Proposition~\ref{prop:approximation} and Example~\ref{ex:basic-example}), with the only difference that we now consider the encoded representation of the entire string {before} the final projection layer of the text encoder (treating the concatenated token representations as a long vector), since this is required for conditioning the diffusion model.
An illustrative example is shown Figure~\ref{fig:diffusion-colors}. We mention that~\cite{wangConceptAlgebraTextControlled2023, liuCompositionalVisualGeneration2023} have also proposed algebraic manipulations to control visual generation in a compositional way; however both of those works perform operations on score functions rather than on embedding vectors, which means that their approach requires modifying the diffusion process. In contrast, similar to the prompt debiasing method from~\cite{chuangDebiasingVisionLanguageModels2023}, we simply modify the prompt embeddings that condition the generation. In this paper, we use generative models as a qualitative proof of the validity of ideal words as approximations for embeddings; we leave a detailed exploration of applying these decompositions for controlling image generation to future work.

\begin{figure}[t]
\centering
\includegraphics[width=.90\linewidth]{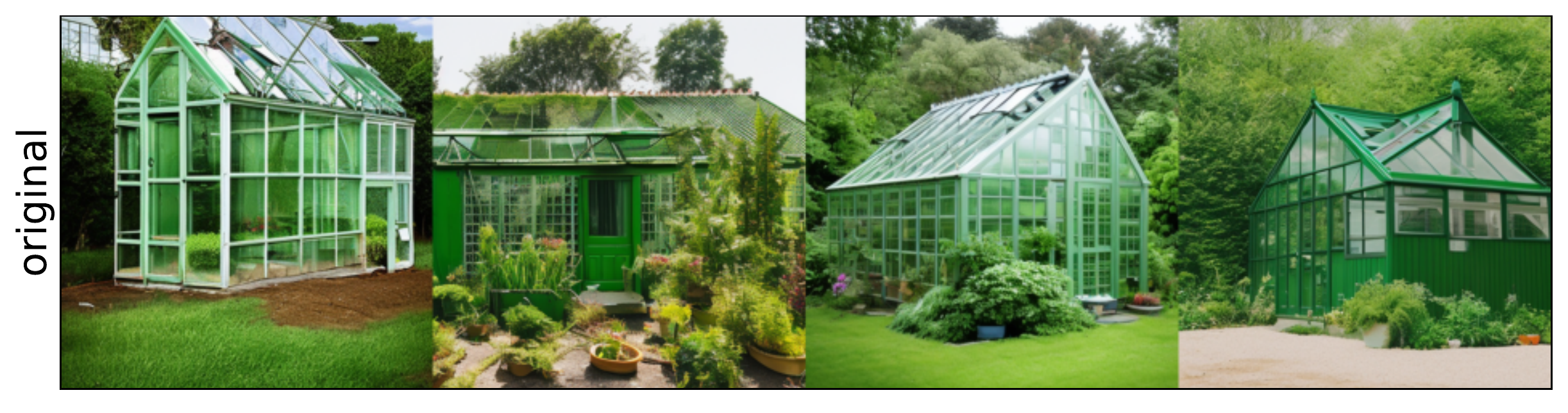}\\
\includegraphics[width=.90\linewidth]{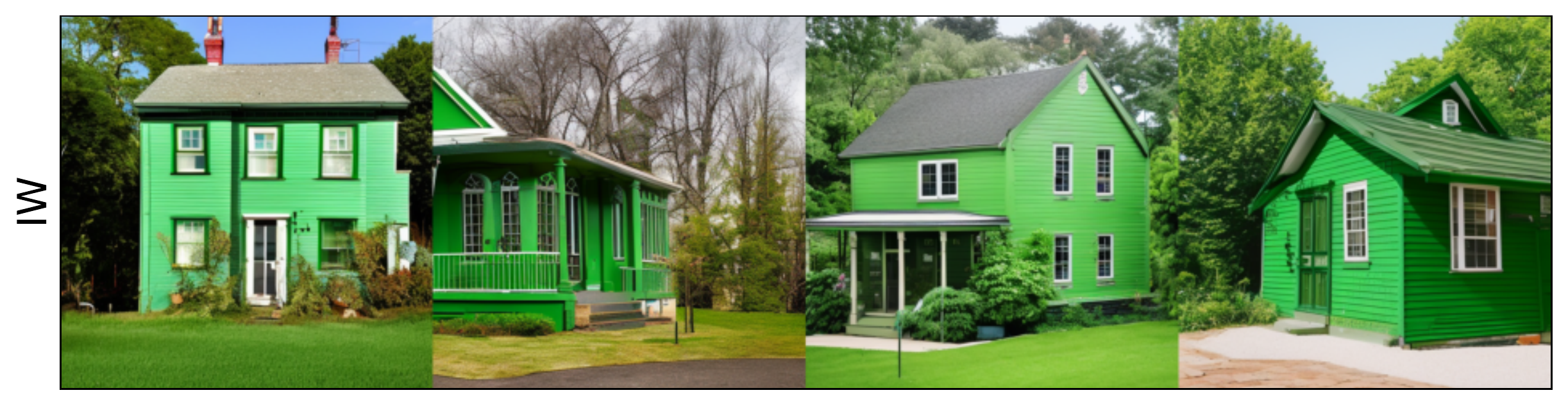}\\
\includegraphics[width=.90\linewidth]{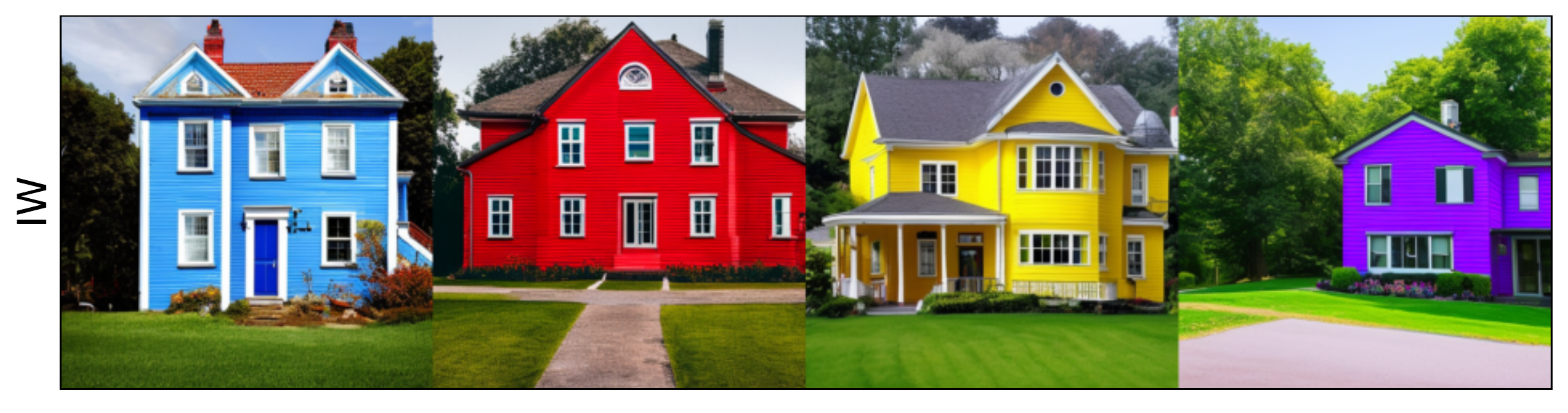}\\
\includegraphics[width=.90\linewidth]{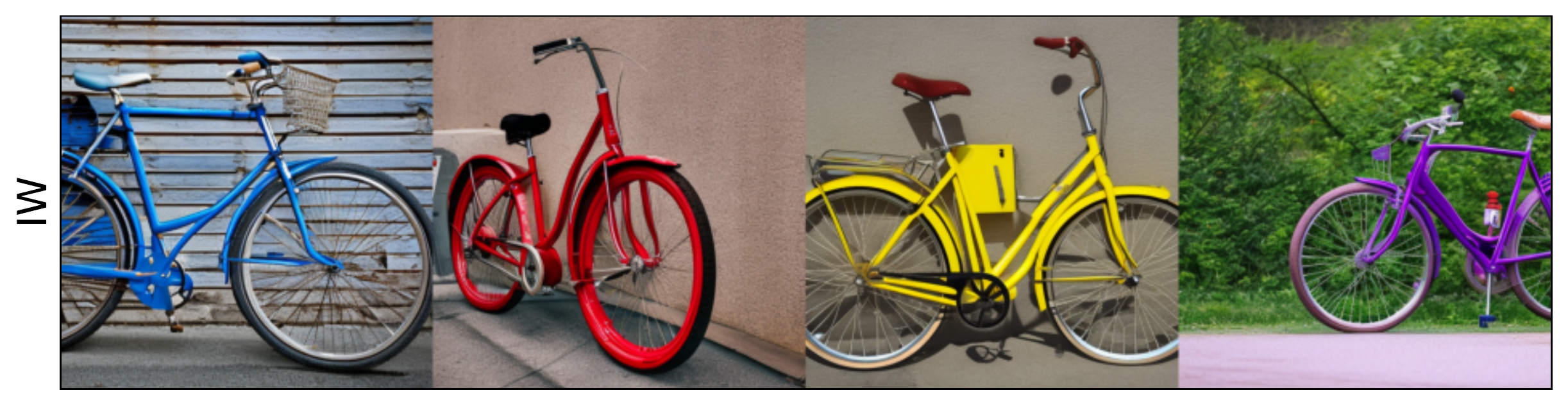}\\
\caption{\textbf{Visualization of ideal words.} \textit{First row:} images generated by Stable Diffusion with the prompt ``a photo of a green house.'' Because of the contextual encoder, ``house'' influences the meaning ``green.'' \textit{Following rows:} we compute ideal words approximations for strings of the form ``a photo of a [color] $\times$ [object],'' using five colors and four objects. In the second row, we generate images using the vector $\vu_0 + \vu_{\rm green} + \vu_{\rm house}$. Now $\vu_{\rm green}$ means green-colored because of how the string ``green'' composes with most objects. In the third row, we generate images using $\vu_0 + \vu_{\rm [color]} + \vu_{\rm house}$ for different colors; in the fourth row, we use $\vu_0 + \vu_{\rm [color]} + \vu_{\rm bike}$. The images were {not} cherry-picked or manipulated in any way. This example shows that we can generate embeddings of composite concepts by simply adding vectors in the representation space.}
\label{fig:diffusion-colors} 
\vspace{-.3cm}
\end{figure}

\section{Conclusion}

We have investigated compositional structures in VLM embeddings and argued that contextual text embeddings are often well-approximated by linear combinations of smaller sets of vectors. Optimal choices for these vectors are not embeddings of actual words, but rather ``ideal words'' that can be easily obtained as weighted averages of embeddings of longer strings of text. We showed that this simple idea can be used to design effective baseline methods for different visual language tasks (compositional classification/retrieval, debiasing, and image generation) and to control the behavior of VLMs.

In the future, we will focus on practical applications of ideal word decompositions such as compositional image generation. Furthermore, we would like to find ways of customizing ideal words using training data, for example by incorporating linear factorizations in fine-tuning strategies, or by introducing kernelized versions of these decompositions that have learnable parameters.

Finally, we remark that our discussion in Section~\ref{sec:vlm} was mainly focused on embedding vectors from a {single} modality (text), however the strategy we used for concept retrieval in Section~\ref{sec:exp} suggests that it is possible to perform linear algebraic operations using vectors from \emph{both} modalities (text/vision).
Although it is generally known that visual and text embeddings in CLIP are not well-aligned~\cite{liangMindGapUnderstanding2022}, our linear manipulations actually only require for the \emph{differences} between embedding vectors of the same modality to be aligned. 
Interestingly, this sort of weak alignment implies that vector representations of a concept $c$ in any modality can be (approximately) written as
\begin{equation}
\vw_{c} = \vw_0 \pm \vw_{\rm modality} + \ldots
\end{equation}
where $\vw_{\rm modality}$ may be seen as the ideal word vector corresponding to the modality factor for vision/text.

{\small
\bibliographystyle{ieee_fullname}
\bibliography{bibliography}
}

\clearpage
\pagenumbering{arabic}
\renewcommand*{\thepage}{\arabic{page}}
\appendix

\noindent \textbf{\Large Supplementary Material}

\vspace{.6cm}

This supplementary material is organized as follows: in Section~\ref{sec:proofs} we provide proofs for all the statements of the paper and we discuss some connections with mathematical representation theory; in Section~\ref{sec:details} we give details on the datasets and prompts used for our experiments; in Section~\ref{sec:additional-results} we present some additional experimental results and qualitative examples.

\section{Proofs}
\label{sec:proofs}

\basicproperties*

\begin{proof} (1) If the vectors are decomposable, then clearly the vector differences $\vu_{z} - \vu_{z'}$ do not depend on the components that $z,z'$ share in common since the corresponding vectors cancel out. For the converse, fix $z = (z_1,\ldots,z_k) \in \mathcal Z$ arbitrarily and choose any $k$ vectors $\vu_{z_1}, \ldots, \vu_{z_k}$ such that $\vu_{z} = \vu_{z_1} + \ldots + \vu_{z_k}$. Now for any $z_i' \in \mathcal Z_i$ and any $i=1,\ldots,k$, define
\[
\begin{aligned}
&\vu_{z_i'}:= \vu_{z_i} + \vu_{z'} - \vu_{z},\\ 
&\qquad \text{where } z' = (z_1,\ldots,z_i',\ldots, z_k).
\end{aligned}
\]
If $z'' = (z_1',\ldots,z_k')$, it now holds that
\[
\begin{aligned}
&\vu_{z''} = \vu_{z''} - \vu_{(z_1,z_2',\ldots,z_k')}\\
&\qquad + (\vu_{(z_1,z_2',\ldots,z_k')} - \vu_{(z_1,z_2,\ldots,z_k')}) \\
 &\qquad + \ldots + (\vu_{(z_1,z_2,\ldots,z_k')} - \vu_{z}) + \vu_{z} \\
&\qquad= (\vu_{z_1'} - \vu_{z_1}) + \ldots + (\vu_{z_k'} - \vu_{z_k}) + \vu_{z}\\
&\qquad= \vu_{z_1'} + \ldots + \vu_{z_k'}.
\end{aligned}
\]
(2) We have that
\begin{equation}
\label{eq:remove-means}
\begin{aligned}
\sum_{z \in \mathcal Z} \gamma_z \vu_z &= \sum_{z \in \mathcal Z} \gamma_z (\vu_{z_1} + \ldots + \vu_{z_k})\\
&= \sum_{z \in \mathcal Z} \gamma_z (\bar \vu_{\mathcal Z_1} + \ldots + \bar \vu_{\mathcal Z_k} + \tilde \vu_{z_1} + \ldots \tilde \vu_{z_k}),\\
&=\sum_{z \in \mathcal Z} \gamma_z (\vu_{0} + \tilde \vu_{z_1} + \ldots \tilde \vu_{z_k}),\\
\end{aligned}
\end{equation}
where $\bar \vu_{\mathcal Z_1} := \frac{1}{n_i}\sum_{z_i \in \mathcal Z_i} \vu_{z_i}$ and $\tilde \vu_{z_i} := \vu_{z_i} - \vu_{\mathcal Z_i}$. Since $\sum_{z_i \in \mathcal Z_i} \tilde \vu_{z_i} = 0$,~\eqref{eq:remove-means} shows that any linear combination of the vectors $\vu_z, z \in \mathcal Z$ can be written as a linear combination of $1 + \sum_{i=1}^k (n_i - 1)$ vectors.
\end{proof}

\centereddecomp*

\begin{proof} Following the proof of part 2 of the previous Lemma, it is enough to let $\vu_0 := \bar \vu_{\mathcal Z_1} + \ldots + \bar \vu_{\mathcal Z_k}$ where $\bar \vu_{\mathcal Z_1} := \frac{1}{n_i}\sum_{z_i \in \mathcal Z_i} \vu_{z_i}$, and then re-center the remaining vectors accordingly. For the uniqueness, we note that~\eqref{eq:linear_decomposition} implies that the vectors~$\vu_0, \vu_{z_i}$, $z_i \in \mathcal Z_i$ satisfy
\begin{equation}
\label{eq:projection-not-weighted}
\vu_0 = \frac{1}{N} \sum_{z \in \mathcal Z} \vu_z, \quad \vu_{z_i} = \frac{n_i}{N} \sum_{\substack{z' = (z_1',\ldots, z_k')\\z_i' = z_i}} \vu_{z'} - \vu_0.
\end{equation}
where $N=n_1\ldots n_k$. In particular,~\eqref{eq:projection-not-weighted} shows that $\vu_0, \vu_{z_i}$, $z_i \in \mathcal Z_i$ are uniquely determined by the original vectors $\vu_z$.
\end{proof}

In the previous proof, we considered a map associating each $\vu_{z}, z \in \mathcal Z$ with the vectors given by
\begin{equation}\label{eq:ideal-words}
\begin{aligned}
 \vu_0 = \frac{1}{N}\sum_{z \in \mathcal Z} \vu_{z}, \,\,\, \vu_{z_i} = \frac{n_i}{N}\sum_{\substack{z' = (z_1',\ldots, z_k')\\z_i' = z_i}} \vu_{z'} - \vu_0.\\
\end{aligned}
\end{equation}
It is easy to see that if we define $\tilde \vu_z = \vu_0 + \vu_{z_1} + \ldots + \vu_{z_k}$ then applying~\eqref{eq:ideal-words} with the new vectors $\tilde \vu_{z}$ instead of $\vu_{z}$ yields the same components $\vu_{z_i}$. Thus, this map can be seen as a projection onto a decomposable set of vectors. Note that the component vectors satisfy $\sum_{z_i \in \mathcal Z_i} \vu_{z_i} = 0$. The following result considers a slightly more general setting in which these components vectors satisfy $\sum \alpha_{z_i} v_{z_i} = 0$ for some weights $\alpha_i$ that sum to $1$.

\approximation*

\begin{proof} 
Without loss of generality, we may assume that $\sum_z \beta_z \vu_{z_i} = \sum \alpha_{z_i} \vu_{z_i} = 0$. Imposing that the derivative of~\eqref{eq:best-approx} with respect to $\vu_0$ is zero leads to
\begin{equation}
\begin{aligned}
&\sum_{z \in \mathcal Z} \beta_z (\vu_z - ({\vu}_0 + \vu_{z_1} + \ldots + \vu_{z_k}))\\
&=\sum_{z \in \mathcal Z} \beta_z (\vu_z - {\vu}_0)=0,
\end{aligned}
\end{equation}
which implies $\vu_0 = {\sum_{z} \beta_z \vu_z}.$
Similarly, differentiating with respect to $\vu_{z_i}$ we have 
\begin{equation}
\begin{aligned}
&\sum_{\substack{z' = (z_1',\ldots, z_k')\\z_i' = z_i}} \beta_{z'} (\vu_{z'} - ({\vu}_0 + \vu_{z_1} + \ldots + \vu_{z_k}))\\
&=\sum_{\substack{z' = (z_1',\ldots, z_k')\\z_i' = z_i}} \beta_{z'} (\vu_{z'} - {\vu}_0 - \vu_{z_i}) = 0
\end{aligned}
\end{equation}
which implies that
\begin{equation}
\sum_{\substack{z' = (z_1',\ldots, z_k')\\z_i' = z_i}} \beta_z \vu_z  = \alpha_{z_i} (\vu_0 + \vu_{z_i}),
\end{equation}
so $\vu_{z_i}$ is as in~\eqref{eq:projection-weighted}.
\end{proof}

\groupaction*

\begin{proof} Let $r(\mathcal Z)$ be a set of decomposable vectors of maximal dimension. If $W := Span(\vu_z, z \in \mathcal Z)$, then we write $V = W \oplus W'$, and define a linear action of $G$ on $\RR^d$ by associating each group element $g = (g_1,\ldots, g_k)$ with an invertible linear transformation so that each $g_i$ determines a permutation of the vectors $\vu_{z_i}$, while fixing other terms and $W'$. This describes a disentangled action of $G$, where $V = W' \oplus \langle \vu_0 \rangle \oplus V_{\mathcal Z_1} \oplus \ldots \oplus V_{\mathcal Z_k}$ (to be consistent with the original definition, we can set $V_1 = W' \oplus \langle \vu_0 \rangle \oplus V_{\mathcal Z_1}$ and $V_i = V_{\mathcal Z_i}$ for $i \ge 2$).

For the converse, let $\rho: G \rightarrow GL(V)$ be any linear action of $G$ on $V$ (a group representation). Writing $G_{\hat i} = \mathfrak S_1 \times \ldots \times \{e\} \times  \ldots \times \mathfrak S_{k}$ (with the identity at the $i$-th component), we define
\begin{equation}
\begin{aligned}
&V_0 := \{\vu \in V \colon g \cdot \vu = \vu, \,\, \forall g \in G\},\\
&\tilde V_i := \{\vu \in V \colon g \cdot \vu = \vu, \,\, \forall g \in G_{\hat i}\}.\\
\end{aligned}
\end{equation}
Since $G$ acts linearly, these are vector spaces. We also define the linear maps
\begin{equation}\label{eq:linear_projs}
\begin{aligned}
&\pi_0: \vu \mapsto \frac{1}{|G|}\sum_{g \in G} g\cdot \vu,\\ 
&\tilde \pi_i: \vu \mapsto \frac{1}{|G_{\hat i}|}\sum_{g \in G_{\hat i}} g\cdot \vu. 
\end{aligned}
\end{equation}
These are linear projections onto $V_0$ and $\tilde V_i$, respectively, since they map onto these spaces and they fix them. We now define $\pi_i:= \tilde \pi_i - \pi_0$ and $V_i := Im(\pi_i)$. Since $\tilde V_{i} \cap \tilde V_{j} = V_{0}$ for $i \ne j$, we have that $V_i \cap V_j = \{0\}$ for $i \ne j$. In general, we now have that $V_0 \oplus V_1 \oplus \ldots \oplus V_k \subset V$; if the action $\rho$ is disentangled, however, then
\begin{equation}
V = V_0 \oplus V_1 \oplus \ldots \oplus V_k.
\end{equation}
Thus, for any $v \in V$, we have $v = \pi_0(v) +  \pi_1(v) + \ldots + \pi_k(v)$. Now assume that $r: \mathcal Z \rightarrow V$ is a compositional embedding, so $g\cdot r(z) = r (g \cdot z)$. We observe that $\vu_{z_i} = \pi_i(\vu_{z})$ is fixed by $\mathfrak S_j$ for $j\ne i$, and thus depends only on $z_i$. In fact, the expressions for $\pi_0, \pi_i$ applied to $\vu_z$ are exactly the projection maps from~\eqref{eq:ideal-words}. Thus, we can write $\vu_z = \vu_{0} + \vu_{z_1} + \ldots + \vu_{z_k}$, which means that $r(\mathcal Z)$ are decomposable.
\end{proof}

\linearcomp*

\begin{proof}
Assume that~\eqref{eq:probabilistic_independence} holds, and let $g_0(y):=\log(q_0(y))$ and $g_i(z_i,y):=\log(q_i(z,y))$. For all $z \in \mathcal Z$, we can write
\begin{equation}
\begin{aligned}
\log p(x(z), y) &= g_0(y) + g_1(z_1, y) + \ldots + g_k(z_k, y)\\
&=\bar g_0(y) + \bar g_1(z_1, y) + \ldots + \bar g_k(z_k, y),\\
&s.t. \sum_{z_i \in \mathcal Z_i} \bar g_i(z_i,y) = 0, \quad i=1,\ldots,k,
\end{aligned}
\end{equation}
where $\bar g_0(y) := g_0(y) + \sum_{j=1}^k \frac{1}{n_j} \sum_{z_j \in \mathcal Z_j} g_j(z_j, y)$ and $\bar g_i(z_i, y) := g(z_i, y) - \frac{1}{n_i} \sum_{z_i' \in \mathcal Z_i} g_i(z_i', y)$. It is easy to verify the following identities for $i=1,\ldots,k$:
\begin{equation}\label{eq:g-symmetry}
\begin{aligned}
\bar g_0(y) &= \frac{1}{N} \sum_{z \in \mathcal Z} \log p(x(z), y) = \frac{1}{N} \sum_{z \in \mathcal Z} \vu_{x(z)}^\top \vv_y + c_0\\
&= \vu_0^\top \vv_y + c_0\\[.3cm]
\bar g_i(z_i, y) &= \frac{n_i}{N}  \sum_{\substack{z' = (z_1',\ldots, z_k')\\z_i' = z_i}}
 \log p(x(z), y) - \bar g_0(y)\\
 &= \frac{n_i}{N}\sum_{\substack{z' = (z_1',\ldots, z_k')\\z_i' = z_i}} \vu_{x(z')}^\top \vv_y  - \vu_{0}^\top \vv_y = \vu_{z_i}^\top \vv_y,\\
\end{aligned}
\end{equation}
where we used the expression for $\log p(x,y)$ from~\eqref{eq:vlm} and the definition of the terms $\vu_0, \vu_{z_i}$ from~\eqref{eq:ideal-words}. If we now define $\tilde \vu_{x(z)} := \vu_0 + \vu_{z_1} + \ldots + \vu_{z_k}$, then it follows from~\eqref{eq:g-symmetry} that $\tilde \vu_{x(z)}^\top \vv_y = \vu_{x(z)}^\top \vv_y (=\log p(x(z), y)) - c_0)$ for all $z \in \mathcal Z$, $y \in \mathcal Y$. Since by hypothesis $Span(\vv_y, y \in \mathcal Y) = \RR^d$, we conclude that $\tilde \vu_{x(z)} = \vu_{x(z)}$. Conversely, it is clear that if all $\vu_{x(z)}$ decompose as in~\eqref{eq:linear_decomposition}, then $p(x(z),y)$ has a factored form as in ~\eqref{eq:probabilistic_independence} for all $y \in \mathcal Y$.
\end{proof}

\linearcompcor*

\begin{proof} This follows immediately from the factored form of~\eqref{eq:probabilistic_independence}. More precisely, the statement means that
\begin{equation}\label{eq:discrminative_disentanglement}
\tilde p(z\,|\,y) = \tilde p(z_1 \,|\, y) \ldots \tilde p(z_k\,| \,y),
\end{equation}
where $\tilde p(z \,|\, y):= \frac{1}{Z_y} p(x(z) \, | \, y)$, $\tilde p(z_i \,|\, y):= \frac{1}{Z_y}\sum_{z_{k\ne i}} p(x(z) \, | \, y)$ and $Z_y:=\sum_{z} p(x(z) \, | \, y)$. We observe that~\eqref{eq:discrminative_disentanglement} implies~\eqref{eq:probabilistic_independence}, since we can write
\begin{equation}
p(x(z), y) = Z_y p(y) \tilde p(z_1|y) \ldots \tilde p(z_k|y),
\end{equation}
which has the desired factored form.
Conversely,~\eqref{eq:probabilistic_independence} means that
\begin{equation}\label{eq:constants-probability}
\tilde p(z\,|\,y) = \frac{q_0(y) Z_1 \ldots Z_k}{p(y) Z_y} \tilde q_1(z_1,y)\ldots \tilde q_k(z_k,y),
\end{equation}
where $Z_i = \sum_{z_i \in \mathcal Z_i)} q_i(z_i,y)$ and $\tilde q_i(z_i,y) = \frac{1}{Z_i} q(z_i,y)$. Since $\sum_{z \in \mathcal Z} \tilde p(z\,|\,y) = 1$, we deduce that the $y$-dependent constant on the right of~\eqref{eq:constants-probability} is equal to $1$, and $\tilde q_i(z,y) = \tilde p(z_i|y)$.
\end{proof}

\relaxation*

\begin{proof} (1) Assume that $p(x(z),y)$ is mode-disentangled. Then we have that
\begin{equation}
\begin{aligned}
&\argmax_{z_i \in \mathcal Z_i} \vu_{(z_i, z_{-i})}^\top \vv_y \\
&=\argmax_{z_i \in \mathcal Z_i} \vu_{(z_i, z_{-i}')}^\top \vv_y \\
&=\argmax_{z_i \in \mathcal Z_i} \sum_{\substack{z' = (z_1',\ldots, z_k')\\z_i' = z_i}} \vu_{z'}^\top \vv_y\\
&=\arg\max_{z_i \in \mathcal Z_i} \vu_{z_i}^\top \vv_y
\end{aligned}
\end{equation}
where $\vu_{z_i}$ is as in~\eqref{eq:ideal-words}, or as in the weighted version from~\eqref{eq:projection-weighted}. This implies that we can perform inference using the decomposable approximations $\tilde \vu_{x(z)}$ instead of the original vectors.\\
2)  We will use the notation $z = (z_i, z_j, z_{-\{i,j\}})$ where $z_{-\{i,j\}}:=(z_1,\ldots,z_{i-1}, z_{i+1}, \ldots z_{j-1}, z_{j+1}, \ldots, z_k)$.
If $p(x(z),y)$ is order-disentangled for $y$, then for any $z_i, z_i' \in \mathcal Z_i$ and $z_j, z_j' \in \mathcal Z_j$
\begin{equation}
\begin{aligned}
&(\vu_{(z_i',z_j,z_{-\{i,j\}})} - \vu_{(z_i,z_j,z_{-\{i,j\}})})^\top \vu_y \ge 0\\
&\Leftrightarrow (\vu_{(z_i',z_j',z_{-\{i,j\}})} - \vu_{(z_i,z_j',z_{-\{i,j\}})})^\top \vu_y \ge 0,\\
\end{aligned}
\end{equation}
and similarly
\begin{equation}
\begin{aligned}
&(\vu_{(z_i,z_j',z_{-\{i,j\}})} - \vu_{(z_i,z_j,z_{-\{i,j\}})})^\top \vu_y \ge 0\\
&\Leftrightarrow (\vu_{(z_i',z_j',z_{-\{i,j\}})} - \vu_{(z_i',z_j,z_{-\{i,j\}})})^\top \vu_y \ge 0.\\
\end{aligned}
\end{equation}
If these relations hold for any vector $\vu_y$, then it means that
\begin{equation}\label{eq:scalar-multiples}
\begin{aligned}
&\vu_{(z_i',z_j',z_{-\{i,j\}})} - \vu_{(z_i,z_j',z_{-\{i,j\}})}\\
&\qquad = \lambda (\vu_{(z_i',z_j,z_{-\{i,j\}})} - \vu_{(z_i,z_j,z_{-\{i,j\}})})\\
&\vu_{(z_i',z_j',z_{-\{i,j\}})} - \vu_{(z_i',z_j,z_{-\{i,j\}})}\\
&\qquad = \mu (\vu_{(z_i,z_j',z_{-\{i,j\}})} - \vu_{(z_i,z_j,z_{-\{i,j\}})})\\
\end{aligned}
\end{equation}
for some positive scalars $\lambda, \mu \in \RR$. It follows from Lemma~\ref{lem:aligned-differences} below that either all four points in~\eqref{eq:scalar-multiples} are aligned, or $\lambda = \mu = 1$. However, we can exclude that all four points are aligned for otherwise the largest between $p(x(z_i,z_j,z_{-\{i,j\}}), y)$ and $p(x(z_i',z_j,z_{-\{i,j\}}), y)$ would determine the largest among $p(x(z_i,z_j,z_{-\{i,j\}}), y)$ and $p(x(z_i,z_j',z_{-\{i,j\}}), y)$, \ie, the factors $\mathcal Z_i, \mathcal Z_j$ would not be distinct. (Technically, we can assume in our definition of ``factors'' that all possible rankings of values of $\mathcal Z_i$ are possible for any choice of $z_{-i}$). Thus, $\lambda = \mu = 1$ in ~\eqref{eq:scalar-multiples} for all $z_i, z_i', z_j, z_j'$. This implies that $\vu_{(z_i, z_{-i})} - \vu_{(z_i', z_{-i})}$ does not depend on $z_{-i}$, which in turn means that the vectors $\vu_{z}$ are decomposable, since $\vu_{z} - \vu_{z'}$ does not depend on components that $z, z'$ have in common.
\end{proof}

\begin{lemma}\label{lem:aligned-differences} If $\vp, \vq, \vr, \vs \in \RR^d$ are such that
\begin{equation}\label{eq:proportial-differences}
\begin{aligned}
&\vp - \vq = \lambda (\vr - \vs),\quad \vp - \vr = \mu (\vq - \vs),\\
\end{aligned}
\end{equation}
for some scalars $\lambda, \mu \in \RR$, then either $\vp, \vq, \vr, \vs$ lie on the same affine line (\ie, all pairwise differences are scalar multiples of each other) or $\lambda=\mu=1$.
\end{lemma}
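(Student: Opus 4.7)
The plan is to derive a single affine relation among $\vp,\vq,\vr,\vs$ by combining the two hypotheses, and then dispose of the cases via a short case analysis on whether $\lambda$ or $\mu$ equals $1$.

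First, I would solve each equation for $\vp$ and equate the two expressions to get $\vq+\lambda(\vr-\vs)=\vr+\mu(\vq-\vs)$, which after rearrangement reads
\begin{equation}\label{eq:key-rel}
(1-\mu)(\vq-\vs)=(1-\lambda)(\vr-\vs).
\end{equation}
All of the subsequent analysis is driven by \eqref{eq:key-rel}, so getting this clean intermediate identity is the key step.

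Next, I would split into cases. If $\lambda=\mu=1$ we are already done. Otherwise, suppose without loss of generality that $\lambda\ne 1$. Then \eqref{eq:key-rel} forces $\vr-\vs$ to be a scalar multiple of $\vq-\vs$ (with the scalar $(1-\mu)/(1-\lambda)$ if $\mu\ne 1$, and $0$ if $\mu=1$). In either subcase, $\vq,\vr,\vs$ are collinear, i.e. lie on a single affine line $L$. A symmetric argument handles $\mu\ne 1$. So in every non-trivial case we obtain three of the four points on a common affine line.

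Finally, I would feed this collinearity back into the original hypothesis $\vp-\vq=\lambda(\vr-\vs)$: since $\vr-\vs$ is a direction vector of $L$, the vector $\vp-\vq$ is parallel to $L$, and as $\vq\in L$, this puts $\vp\in L$ as well. Hence all four points are collinear, so every pairwise difference is a scalar multiple of every other nonzero pairwise difference, which is the desired conclusion. I do not expect a genuine obstacle here; the only thing to be careful about is handling the degenerate subcases of \eqref{eq:key-rel} where some of the difference vectors vanish (e.g.\ $\vq=\vs$ or $\vr=\vs$), but in those subcases the collinearity claim is vacuous or immediate.
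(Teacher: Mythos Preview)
Your proof is correct and follows essentially the same approach as the paper: eliminate $\vp$ from the two hypotheses to obtain a single relation among $\vq,\vr,\vs$, deduce their collinearity when $(\lambda,\mu)\ne(1,1)$, and then pull $\vp$ onto the line via one of the original equations. Your key identity $(1-\mu)(\vq-\vs)=(1-\lambda)(\vr-\vs)$ is just a rearrangement of the paper's affine relation $(1-\mu)\vq+(\lambda-1)\vr+(\mu-\lambda)\vs=0$, and your handling of the degenerate subcases is slightly more explicit than the paper's.
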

\begin{proof} Substituting $\vp = \vq + \lambda(\vr - \vs)$ in the second equality in~\eqref{eq:proportial-differences} yields
\begin{equation}
(1-\mu)\vq + (\lambda-1) \vr + (\mu - \lambda)\vs = 0.
\end{equation}
If $\mu \ne 1$ or $\nu \ne 1$, then this shows that $\vp, \vr, \vs$ are aligned (note that coefficients sum to $1$). Using the relation for $\vp$, we conclude that either $\mu=\nu=1$ or all four points are aligned.
\end{proof}

We conclude this section by elaborating on the connection with mathematical representation theory. 
This discussion is not necessary for understanding the paper, but we believe that the symmetry-based viewpoint introduced in~\cite{higginsDefinitionDisentangledRepresentations2018a} is a useful framework for studying disentanglement and compositionality in machine learning. For convenience to the reader, we include here a minimal set of definitions and basic results from representation theory, focusing on the representation of finite groups. More details can be found, for example, in~\cite{fultonRepresentationTheory2004}.

A \emph{representation} of a group $G$ is a homomorphism $\rho: G \rightarrow GL(V)$, where $V$ is a finite-dimensional vector space (typically over the complex numbers, but we can focus on the the real setting here). Often the map $\rho$ is omitted and the representation is identified with $V$. It also common to say that $V$ is a ``$G$-module'' or a ``$G$-representation.'' Given two $G$-representations $V, W$, a \emph{homomorphism of representations} is a linear map $\varphi: V \rightarrow W$ that is $G$-equivariant:
\begin{equation}
\varphi(g \cdot v) = g \cdot \varphi(v), \quad \forall g \in G,\, v \in V.
\end{equation}
A \emph{subrepresentation} (or \emph{submodule}) of a $G$-representation $V$ is a vector subspace $H \subset V$ such that is $G$-invariant:
\begin{equation}
g(h) \in H, \qquad \forall g \in G, \,h \in H.
\end{equation}
If $\varphi: V \rightarrow W$ is a homorphism of representations, then the kernel and image of $\varphi$ are subrepresentations of $V$ and $W$, respectively. A $G$-representation of $V$ is \emph{irreducible} if it has no proper subrepresentations, \ie, if its only subrepresentations are $\{0\}$ and itself.

\begin{example}[Trivial representation] Let $G$ be any group and let $V = \RR$ be a one-dimensional vector space. Then the map $\rho: G \rightarrow GL(V)$ that every element of $G$ with the identity on $V$ is an irreducible representation, called the \emph{trivial representation}.
\end{example}

\begin{example}[Permutation representation]\label{ex:permutation} Let $V = \RR^n$ and consider the representation $\rho: \mathfrak S_n \rightarrow GL(V)$ that permutes coordinates. This is not an irreducible representation since the one-dimensional subspace $V_0 = \langle (1,\ldots,1) \rangle$ is a subrepresentation (a ``copy'' of the trivial representation). In fact, we have that $V = V_0 \oplus V_1$ where $V_{1} = \{v \colon v_1 + \ldots + v_n = 0\}$. One can show that $V_1$ is irreducible, and it is called the \emph{standard representation} of $\mathfrak S_n$.
\end{example}

The next statements imply that, for finite groups, irreducible representations can always be used as ``building blocks'' for describing arbitrary representations. The irreducible components of a representation are (nearly) uniquely determined; moreover, there are only finitely many irreducible representations of a group up to isomorphism.

\begin{proposition}[Corollary 1.6, \cite{fultonRepresentationTheory2004}] If $G$ is a finite group, any $G$-representation can be decomposed as a direct sum of irreducible representations. 
\end{proposition}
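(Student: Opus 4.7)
The plan is to prove this in two stages: first establish Maschke's theorem (complete reducibility at the level of subrepresentations), and then iterate it via induction on dimension. The proposition is purely representation-theoretic, so the proof will not use anything from Sections 3--4 of the paper.

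For Maschke's theorem, the key step is to show that every subrepresentation $W \subset V$ admits a $G$-invariant complement. I would use the averaging trick. Start with any inner product $\langle \cdot, \cdot \rangle$ on $V$ (which exists since $V$ is finite-dimensional over $\mathbb{R}$), and define a new bilinear form
\begin{equation}
\langle v, w \rangle_G := \frac{1}{|G|} \sum_{g \in G} \langle g \cdot v, g \cdot w \rangle.
\end{equation}
A quick check shows that $\langle \cdot, \cdot \rangle_G$ is symmetric, positive definite (being a positive sum of positive definite forms), and $G$-invariant, i.e., $\langle h \cdot v, h \cdot w \rangle_G = \langle v, w \rangle_G$ for all $h \in G$, by reindexing the sum via $g \mapsto gh$. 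With a $G$-invariant inner product in hand, the orthogonal complement $W^\perp$ of any subrepresentation $W$ is itself a subrepresentation: if $u \in W^\perp$ and $h \in G$, then for any $w \in W$ we have $\langle h \cdot u, w \rangle_G = \langle u, h^{-1} \cdot w \rangle_G = 0$ since $h^{-1} \cdot w \in W$. Therefore $V = W \oplus W^\perp$ as $G$-representations.

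For the main statement, I would then proceed by induction on $\dim V$. The base case $\dim V = 0$ (or $\dim V = 1$, which is automatically irreducible) is immediate. For the inductive step: if $V$ is irreducible, there is nothing to prove. Otherwise, $V$ has a proper nonzero subrepresentation $W$; by the previous paragraph, $V = W \oplus W'$ for some subrepresentation $W'$ with $\dim W, \dim W' < \dim V$. The inductive hypothesis gives decompositions $W = \bigoplus_i U_i$ and $W' = \bigoplus_j U_j'$ into irreducibles, and concatenating yields the desired decomposition of $V$.

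The main (and really only) obstacle is verifying the averaging construction; once the $G$-invariant inner product is established, both the complement step and the induction are routine. The argument relies on finiteness of $G$ only through the division by $|G|$, which is needed for the averaged form to reduce to the identity on $G$-invariants; in positive characteristic dividing $|G|$ this step fails, but over $\mathbb{R}$ (or $\mathbb{C}$) there is no issue. I would close by noting that essentially the same argument appears as Corollary~1.6 of~\cite{fultonRepresentationTheory2004}, so citing the reference suffices in a polished write-up.
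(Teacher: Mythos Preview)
Your proof is correct and is the standard argument (Maschke via an averaged inner product, then induction on dimension). Note, however, that the paper does not actually prove this proposition: it is quoted as background from \cite{fultonRepresentationTheory2004} and stated without proof, so there is no ``paper's own proof'' to compare against---your final remark that a citation suffices is exactly what the paper does.
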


\begin{proposition}[Proposition 1.8, \cite{fultonRepresentationTheory2004}] Let $V$ be a $G$-representation, and consider its decomposition into irreducible representations:
\begin{equation}
    V = V_1^{\oplus a_1} \oplus \ldots \oplus V_k^{\oplus a_k}.
\end{equation}
Then the spaces $V_i^{\oplus a_i}$ are uniquely determined. The irreducible representations $V_i$ are determined up to isomoprhism.
\end{proposition}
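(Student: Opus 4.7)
The whole proof rests on Schur's Lemma, which I would establish first: if $W, W'$ are irreducible $G$-representations, any homomorphism $\varphi \colon W \to W'$ is either zero or an isomorphism, because both $\ker \varphi \subseteq W$ and $\mathrm{Im}\,\varphi \subseteq W'$ are subrepresentations and irreducibility forces each to be trivial or the whole space. Working over $\mathbb{C}$ (or any algebraically closed field of characteristic zero, the setting of the cited reference), one gets in addition that $\mathrm{End}_G(W) = \mathbb{C}\cdot \mathrm{id}$ for irreducible $W$, by looking at an eigenspace of an endomorphism as a nonzero subrepresentation.

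\textbf{Intrinsic characterization of isotypic components.} For each isomorphism class $[V_i]$ of irreducible $G$-representations, I would define the isotypic component
\begin{equation*}
V_{(i)} \;:=\; \sum_{\substack{W \subseteq V \\ W \cong V_i}} W,
\end{equation*}
the sum of all subrepresentations of $V$ of type $V_i$. This definition makes no reference to a chosen decomposition, so $V_{(i)}$ is manifestly intrinsic. Given the decomposition $V = V_1^{\oplus a_1} \oplus \cdots \oplus V_k^{\oplus a_k}$, the inclusion $V_i^{\oplus a_i} \subseteq V_{(i)}$ is immediate. For the reverse inclusion, take any $W \subseteq V$ with $W \cong V_i$ and any $j \neq i$; composing the inclusion $W \hookrightarrow V$ with the projection $V \twoheadrightarrow V_j^{\oplus a_j}$ gives an equivariant map between representations with no common irreducible constituents, hence zero by Schur. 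So $W \subseteq V_i^{\oplus a_i}$, proving $V_{(i)} = V_i^{\oplus a_i}$. This shows the subspaces $V_i^{\oplus a_i}$ are uniquely determined.

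\textbf{Uniqueness of types and multiplicities.} Suppose $V = \bigoplus_i V_i^{\oplus a_i} = \bigoplus_j W_j^{\oplus b_j}$ are two irreducible decompositions. The previous step identifies the summands on each side with the intrinsic isotypic components $V_{([V_i])}$ and $V_{([W_j])}$; matching nonzero components forces the multisets $\{[V_i]\}$ and $\{[W_j]\}$ of isomorphism classes to coincide. To recover multiplicities, I would compute $\dim \mathrm{Hom}_G(V_i, V)$: by Schur, an equivariant map from $V_i$ into a summand $V_j^{\oplus a_j}$ with $j \neq i$ must be zero, while the space of maps into $V_i^{\oplus a_i}$ has dimension exactly $a_i$ (one scalar per copy, using $\mathrm{End}_G(V_i) = \mathbb{C}$). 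Hence $a_i = \dim \mathrm{Hom}_G(V_i, V)$, an invariant of $V$ and $[V_i]$ alone, giving uniqueness of both the multiplicities and the types up to isomorphism.

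\textbf{Main obstacle.} The delicate point is the scalar-identification step $\mathrm{End}_G(V_i) = \mathbb{C}$, which fails over non-algebraically-closed fields: in general $\mathrm{End}_G(V_i)$ is a division algebra $D_i$ and the ``multiplicity'' $a_i$ must be interpreted as a dimension over $D_i$. Over $\mathbb{R}$, for instance, $D_i$ can be $\mathbb{R}, \mathbb{C},$ or $\mathbb{H}$. Following the cited text I would work over $\mathbb{C}$ to sidestep this; the uniqueness of $V_{(i)}$ as a subspace is robust to the field, but the clean formula $a_i = \dim \mathrm{Hom}_G(V_i, V)$ requires the algebraically closed hypothesis.
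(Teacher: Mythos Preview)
The paper does not actually prove this proposition: it is quoted verbatim as Proposition~1.8 from Fulton--Harris and stated without argument, as part of a brief recap of standard representation-theory background. Your proof via Schur's Lemma and the intrinsic characterization of isotypic components is correct and is essentially the argument given in the cited reference, so there is nothing to compare against in the present paper.
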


\begin{proposition}[Corollary 2.18, \cite{fultonRepresentationTheory2004}] Every finite group only has a finite set of irreducible representations, up to isomorphism.
\end{proposition}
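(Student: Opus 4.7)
The plan is to prove the statement by exhibiting a single finite-dimensional representation of $G$ that contains, up to isomorphism, every irreducible representation of $G$ as a direct summand. The natural candidate is the \emph{regular representation} $R := \RR[G]$ (or $\CC[G]$ if one prefers the complex setting), which is the $|G|$-dimensional vector space with basis $\{e_g : g \in G\}$ on which $G$ acts by left multiplication, $h \cdot e_g = e_{hg}$.

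The first step is to show that every irreducible $G$-representation $V$ arises as a quotient of $R$. Given any nonzero $v \in V$, define the $G$-equivariant linear map $\varphi_v : R \rightarrow V$ by $\varphi_v(e_g) := g \cdot v$. Its image is a nonzero subrepresentation of $V$, so by irreducibility $\varphi_v$ must be surjective, which expresses $V$ as a quotient of $R$.

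The second step is to promote ``quotient'' to ``subrepresentation'' via Maschke's theorem: for a finite group over a field of characteristic not dividing $|G|$, every short exact sequence of $G$-representations splits. This is the standard averaging argument: given any vector-space projection onto the kernel of $\varphi_v$, one produces a $G$-equivariant projection by averaging it over $G$ using the operator $\pi \mapsto \frac{1}{|G|}\sum_{g \in G} g \cdot \pi \cdot g^{-1}$. Thus $\ker \varphi_v$ admits a $G$-invariant complement in $R$, and $V$ embeds as a subrepresentation.

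To finish, apply the preceding proposition (Corollary~1.6 of~\cite{fultonRepresentationTheory2004}) to write $R = W_1^{\oplus a_1} \oplus \ldots \oplus W_m^{\oplus a_m}$ with $W_i$ irreducible. Any irreducible $V$ embeds into $R$, and by Schur's lemma (nonzero homomorphisms between irreducibles are isomorphisms) $V$ must be isomorphic to one of the $W_i$. Since $\dim R = |G| < \infty$, we have $m < \infty$, so $\{W_1, \ldots, W_m\}$ exhausts the isomorphism classes of irreducible representations. The only substantive obstacle is the splitting step: this is the unique place where both the finiteness of $G$ and the characteristic hypothesis are genuinely used, and without Maschke's theorem the embedding of $V$ into $R$ would fail, forcing one to switch to a different strategy (e.g., counting irreducible characters against the finite-dimensional space of class functions).
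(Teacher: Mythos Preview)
Your argument is correct and is one of the standard proofs of this fact. However, the paper does not actually give a proof of this proposition: it is stated purely as background, with a citation to Fulton--Harris~\cite{fultonRepresentationTheory2004}, and no argument is supplied. So there is no ``paper's own proof'' to compare against.

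For context, the route taken in the cited reference (Corollary~2.18 of~\cite{fultonRepresentationTheory2004}) is different from yours and in fact yields a sharper conclusion: it proceeds via character theory, showing that the irreducible characters form an orthonormal set in the space of class functions on $G$, whose dimension equals the number of conjugacy classes. This gives not just finiteness but the exact count. Your regular-representation argument is more elementary in that it avoids characters entirely, relying only on Maschke and Schur; the trade-off is that it does not immediately tell you \emph{how many} irreducibles there are, only that the list is finite (bounded by $\dim R = |G|$). Both are perfectly valid, and your proof is self-contained given the earlier propositions the paper quotes.
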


For example, the irreducible representations of a symmetric group $\mathfrak S_n$ are in one-to-one correspondence with the (unordered) partitions of $n$ elements. See~\cite[Chapter 4]{fultonRepresentationTheory2004} for an explicit description.

We now return to our factored set $\mathcal Z = \mathcal Z_1 \times \ldots \times \mathcal Z_k$. We consider the vector space $\langle \mathcal Z \rangle = Span(\ve_z \colon z \in \mathcal Z)$, spanned by independent basis vectors associated with elements of $\mathcal Z$. We can identify $\langle \mathcal Z \rangle$ with the space $\RR^{n_1} \otimes \ldots \otimes \RR^{n_k}$. 
As $\mathfrak S_i$-modules, $\RR^{n_i} \cong V_{0, n_i} \oplus V_{1, n_i}$ where $V_{0, n_i}$ is a trivial representation and $V_{1, n_i}$ is the standard representation for $\mathfrak S_i$. We thus have that
\begin{equation}\label{eq:tensor_expansion}
\begin{aligned}
\langle \mathcal Z \rangle &\cong \bigotimes_{i=1}^k \left(V_{0,n_i} \oplus V_{1, n_i}\right)\\
&\cong \bigoplus_{\epsilon_i \in \{0,1\}} V_{\epsilon_1,n_1} \otimes  \ldots \otimes V_{\epsilon_k, n_k}\\
&\cong \bigoplus_{\epsilon \in \{0,1\}^k} V_{\epsilon}, 
\end{aligned}
\end{equation}
with $V_\epsilon := V_{\epsilon_1,n_1} \otimes  \ldots \otimes V_{\epsilon_k, n_k}$. This is a decomposition of $\langle \mathcal Z \rangle$ into irreducible $G$-representations (see \cite[Exercise 2.36]{fultonRepresentationTheory2004}). We can describe the projection $\pi_\epsilon$ onto $V_\epsilon$ explicitly
\begin{equation}\label{eq:proj}
\pi_{\epsilon} = \pi_{\epsilon_1, n_1} \otimes \ldots \otimes \pi_{\epsilon_k, n_k},
\end{equation}
where $\pi_{\epsilon, n_i}: \RR^{n_i} \rightarrow \RR^{n_i}$ are given by
\begin{equation}
\begin{aligned}
&\pi_{0, n_i}(\vu) := \frac{1}{|\mathfrak S_i|} \sum_{g \in \mathfrak S_i} g \cdot \vu,\\
&\pi_{1, n_i}(\vu) := \vu - \pi_{0, n_i}(\vu).
\end{aligned}
\end{equation}

A data embedding $r: \mathcal Z \rightarrow \RR^d$ can be uniquely associated with a linear map $\langle r \rangle : \langle \mathcal Z \rangle \rightarrow \RR^d$ or can equivalently be viewed as a tensor in $[r] \in \RR^{n_1} \otimes \ldots \otimes \RR^{n_k} \otimes \RR^d$.
The image of $\langle r \rangle$ is a $G$-module in $\RR^d$ and its decomposition will contain a subset of the irreducible components in~\eqref{eq:tensor_expansion}. The notion of disentangled representation given in~\cite{higginsDefinitionDisentangledRepresentations2018a} means that the only irreducible components that contribute to the image of $r$ are the representations $V_{\epsilon}$ such that $\epsilon_i=1$ for at most one index $i$. Equivalently, we require that the projection of the image of $r$ onto the ``entangled components'' is zero, \ie, $\pi_\epsilon(\vu_z)=0$ whenever $|\{i \colon \epsilon_i=1\}|>1$. 
An intuitive way to understand this notion is in terms of the tensor $[r] \in \RR^{n_1} \otimes \ldots \otimes \RR^{n_k} \otimes \RR^d$: we require that each of the $d$ ``slices'' $\RR^{n_1} \otimes \ldots \otimes \RR^{n_k}$ can be obtained by summing ``one-dimensional slices'' of the form $\bm{1} \otimes \ldots \otimes \vu_i \otimes \ldots \otimes \bm{1}$ (similar to summing vectors into a tensor by ``array broadcasting''). In fact, this observation leads to the following characterization of linear factorization in terms of tensor-rank.

\begin{proposition} A tensor $[r] \in \RR^{n_1} \otimes \ldots \otimes \RR^{n_k} \otimes \RR^d$ corresponds to a decomposable representation if and only if all $(\RR^{n_1} \otimes \ldots \otimes \RR^{n_k})$-slices of $\exp([r])$ have tensor-rank one, where $\exp([r])$ is obtained from $[r]$ by exponentiating element-wise. This is true if and only if for all $\varphi \in (\RR^{d})\ast$ $\exp (\varphi([r]))$ has tensor-rank one.
\end{proposition}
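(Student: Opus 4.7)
The plan is to unpack what each side of the equivalence says in coordinates, observe that the exponential converts sums into products, and then use that a nonnegative tensor has rank one iff its entries factorize as an outer product. The argument is essentially a direct verification; there is no real obstacle, only careful bookkeeping with indices.

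First I would fix coordinates and write $[r]_{z_1,\ldots,z_k, j} = \vu_z^{(j)}$, where $j$ ranges over the $d$ coordinates of $\RR^d$. A $(\RR^{n_1}\otimes\ldots\otimes\RR^{n_k})$-slice corresponds to fixing $j$ and taking the tensor $T^{(j)}_{z_1,\ldots,z_k} := \vu_z^{(j)}$. If the embedding is decomposable, Lemma~\ref{lem:centered_decomposition} (or Definition~\ref{def:linearly_decomposable}) gives $\vu_z^{(j)} = \vu_0^{(j)} + \vu_{z_1}^{(j)} + \ldots + \vu_{z_k}^{(j)}$, hence
\begin{equation}
\exp(T^{(j)})_{z_1,\ldots,z_k} = \exp(\vu_0^{(j)}) \prod_{i=1}^k \exp(\vu_{z_i}^{(j)}),
\end{equation}
which is manifestly an outer product of $k$ vectors (with the constant $\exp(\vu_0^{(j)})$ absorbed into any one factor), so every slice of $\exp([r])$ has tensor-rank one.

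For the converse, assume each slice $\exp(T^{(j)})$ has tensor-rank one. Since $\exp$ produces strictly positive entries, any rank-one decomposition has strictly positive factors, so we can write $\exp(T^{(j)})_{z_1,\ldots,z_k} = \prod_{i=1}^k a_{i,z_i}^{(j)}$ with $a_{i,z_i}^{(j)} > 0$. Taking logs coordinate-wise yields $\vu_z^{(j)} = \sum_i \log a_{i,z_i}^{(j)}$, and defining $\vu_{z_i} \in \RR^d$ by $\vu_{z_i}^{(j)} := \log a_{i,z_i}^{(j)}$ gives $\vu_z = \vu_{z_1} + \ldots + \vu_{z_k}$, i.e.\ $r$ is decomposable in the sense of Definition~\ref{def:linearly_decomposable}.

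For the second equivalence, observe that for any $\varphi \in (\RR^d)^\ast$, $\varphi([r])$ is the tensor with entries $\varphi([r])_{z_1,\ldots,z_k} = \varphi(\vu_z)$. If $r$ is decomposable, $\varphi(\vu_z) = \varphi(\vu_0) + \sum_i \varphi(\vu_{z_i})$, so $\exp(\varphi([r]))$ factors as an outer product exactly as before, hence has tensor-rank one. Conversely, applying the hypothesis to the coordinate functionals $\varphi = e_j^\ast$ recovers rank-one-ness of every slice of $\exp([r])$, which by the previous paragraph implies decomposability. This closes the chain of equivalences.
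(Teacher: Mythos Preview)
Your proof is correct and essentially follows the paper's approach: the paper's proof sketch invokes the identity $\exp\left(\sum_i \bm{1} \otimes \ldots \otimes \vu_i \otimes \ldots \otimes \bm{1}\right) = \exp(\vu_1) \otimes \ldots \otimes \exp(\vu_k)$ for the first claim, which is exactly your coordinate computation, and you supply the converse (taking logs of positive rank-one factors) that the paper leaves implicit. For the second equivalence the paper remarks that ``$\exp(t)$ has rank one'' is a \emph{linear} condition on $t$, so it holds for all $\varphi([r])$ iff it holds for the coordinate slices; your argument instead routes the forward direction through decomposability, which is equivalent but slightly less direct---either way the backward direction is just specializing $\varphi$ to coordinate functionals, as you do.
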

\begin{proof}[Proof sketch.] The first claim follows from the previous discussion and the fact that $\exp\left(\sum_i \bm{1} \otimes \ldots \otimes \vu_i \otimes \ldots \otimes \bm{1}\right) = \exp(\vu_1) \otimes \ldots \otimes \exp(\vu_k)$. For the second statement, we note $\exp(t)$ having rank-one is a linear condition on a tensor~$t$.
\end{proof}

For categorical distributions of multiple variables, the distribution tensor having rank equal to one corresponds to statistical independence of variables, so the result above can be seen as an algebraic reformulation of Proposition~\ref{prop:linear_composition} in the main body of the paper. 
We also note that that other probabilistic conditions could be considered by allowing for more irreducible components in from~\eqref{eq:tensor_expansion} to appear in the image of $r$. This is similar to the log-linear representations of multivariate data. In fact, it is possible to express any conditional independence assumption on $p(\mathcal Z|\mathcal Y)$ in terms of linear-algebraic conditions on the data representation $r$.

\section{Experimental Details}
\label{sec:details}

\paragraph{Datasets.} The MIT-states dataset~\cite{isolaDiscoveringStatesTransformations2015} contains images of 245 objects modified by 115 adjectives, for a total of 28175 classes. The test set has size 12995. The UTZappos dataset~\cite{yuFineGrainedVisualComparisons2014} contains images of 12 shoe types with 16 fine-grained states. The test set has size 2914. Note that in both of these datasets only a small portion of all possible attribute-object pairs actually occurs in the test set. However, in our experiments we assume that we do not have access to this information. We also mention that prior works that have used these datasets such as~\cite{manciniLearningGraphEmbeddings2022, nayakLearningComposeSoft2022} have differentiatied between the performance on label pairs that were seen in training and those that were not. Since this distinction is not relevant in a zero-shot setting, we simply report accuracy on objects, attributes, and attribute-object pairs. In the Waterbird dataset~\cite{sagawaDistributionallyRobustNeural2020a} labels are ``waterbird/landbird'' and spurious attributes are ``water background/land background.'' There are 5794 test samples divided in four unbalanced groups. On the CelebA~\cite{liuDeepLearningFace2015}, labels are ``not blond/blond'' and spurious attributes are ``male/female.'' There are a total 19962 test samples with unbalanced groups. 
The DeepFashion2 dataset~\cite{DeepFashion2} with the captions provided in PerVL~\cite{cohenThisMyUnicorn2022a} contains 1700 images from 100 unique fashion items. Following~\cite{cohenThisMyUnicorn2022a} val/test splitting, we retrieve 50 of these concepts selected for testing. We use 5 randomly chosen images per fashion item as per-concept supporting images, and use a test set with 221 images containing all 50 concepts and their captions (see~\cite{cohenThisMyUnicorn2022a} for more details). Final results are obtained by averaging the Mean Reciprocal Rank metric over 5 random seeds.

\paragraph{Prompts.} For MIT-States and UTZappos, we use the prompt ``image of a [$a$][$o$],'' ``image of a [$a$] object,'' and ``image of a [$o$],'' as explained in the main body of the paper. Here [$a$] and [$o$] are the lower-case original class labels.\footnote{In the case of objects for UTZappos, we perform a simple split `Boots.Mid-Calf' $\rightarrow$ ``boots mid-calf''}
For our experiments on debiasing on the Waterbirds and CelebA datasets we use the same prompts and spurious attributes used in~\cite{chuangDebiasingVisionLanguageModels2023}. These are shown in Tables~\ref{tab:prompt_waterbird} and~\ref{tab:prompt_celebA}. To compute debiased prompts we simply prepend all spurious prompts to each class prompts and then average the spurious prompts to obtain debiased class prompts (note that spurious prompts are ``balanced'' in their bias); this simpler but conceptually similar to the ``Orth-Proj'' approach used in in~\cite{chuangDebiasingVisionLanguageModels2023} that computes an orthogonal projection in the orthogonal complement of the linear space spanned by the spurious prompts. We do not make use of the ``positive pairs'' of prompts that are used in that work for regularization of the projection map.

\begin{table}[t]
\small
\centering
\begin{tabular}{c}
\toprule
 \textbf{Class Prompts}
  \\
  \midrule
 This is a picture of a landbird.
\\
 This is a picture of a waterbird.
\\
  \midrule
\textbf{Spurious Prompts}
\\
  \midrule
 This is a land background. \; This is a picture of a forest.  \\
 This is a picture of a moutain. \; This is a picture of a wood.
 \\
 This is a water background. \; This is a picture of an ocean. \\ 
This is a picture of a beach. \; This is a picture of a port.
 \\
 \bottomrule
\end{tabular}
\caption{{Prompts for Waterbird dataset~\cite{sagawaDistributionallyRobustNeural2020a} from~\cite{chuangDebiasingVisionLanguageModels2023}.}
}
\vspace{-0mm}
\label{tab:prompt_waterbird}
\end{table}

\begin{table}[t]
\centering
\small
\begin{tabular}{c}
\toprule
 \textbf{Class Prompts}
  \\
  \midrule
 A photo of a celebrity with dark hair.
\\
 A photo of a celebrity with blond hair.
\\
  \midrule
\textbf{Spurious Prompts}
\\
  \midrule
A photo of a male. \; A photo of a male celebrity.\\
A photo of a man. \; A photo of a female. \\
A photo of a female celebrity. \; A photo of a woman. 
 \\
\bottomrule
\end{tabular}
\caption{Prompts for CelebA dataset~\cite{liuDeepLearningFace2015} from~\cite{chuangDebiasingVisionLanguageModels2023}. 
}
\label{tab:prompt_celebA}
\vspace{3mm}
\end{table}

\section{Additional Results and Discussions}
\label{sec:additional-results}

\paragraph{Quantifying compositionality.} Given a set of vectors $\vu_z, z \in \mathcal Z$ in $\RR^d$, we can measure how close the vectors are to being decomposable by using
\begin{equation}\label{eq:sq-dist}
\begin{aligned}
&D(\vu_z, z \in \mathcal Z) := \min_{\tilde \vu_z} \frac{1}{|\mathcal Z|}\sum_{z \in \mathcal Z} \|\vu_{z} - \tilde \vu_{z}\|^2,\\
&\quad s.t. \,\,\{\tilde \vu_z\} \text{ is decomposable}.
\end{aligned}
\end{equation}
The optimal vectors $\tilde \vu_z$ here are the ideal word approximations given by Proposition~\ref{prop:approximation}.
In Table~\ref{tab:euclid-distance}, we report this quantity for embeddings of objects-attributes in the datasets~MIT-States~\cite{isolaDiscoveringStatesTransformations2015} and UT Zappos~\cite{yuFineGrainedVisualComparisons2014} (IW column). For comparison, we also include the average squared distance between the original embeddings and the average of the individual object and attribute embddings based on ``real words'' (RW column), and the average squared distance between pairs of the original embedding vectors (Avg). In Table~\ref{tab:euclid-distance-random}, we report the same quantities but using embeddings obtained from a \emph{randomly initialized} encoder. These results suggest that embeddings at initialization are already compositional. We discuss this point further in the next paragraph. 

\begin{table}[t]
\small
\centering
\begin{tabular}{@{}lccc@{}}
\toprule
&\textbf{IW} & \textbf{RW} & \textbf{Avg} \\ \midrule
MIT-States~\cite{isolaDiscoveringStatesTransformations2015} &    0.23 $\pm$ 0.05     & 0.43 $\pm$ 0.06  & 0.78 $\pm$ 0.13    \\
UT Zappos~\cite{yuFineGrainedVisualComparisons2014}   &   0.16 $\pm$ 0.04     & 0.51 $\pm$ 0.05  & 0.58 $\pm$ 0.18    \\
\bottomrule
\end{tabular}
\caption{Quantifying compositionality using a trained encoder.}
\label{tab:euclid-distance}
\space{.5cm}
\end{table}

\begin{table}[t]
\small
\centering
\begin{tabular}{@{}lccc@{}}
\toprule
&\textbf{IW} & \textbf{RW} & \textbf{Avg} \\ \midrule
MIT-States~\cite{isolaDiscoveringStatesTransformations2015} &    0.04 $\pm$ 0.02     & 0.16 $\pm$ 0.02  & 0.10 $\pm$ 0.03    \\
UT Zappos~\cite{yuFineGrainedVisualComparisons2014}   &   0.10 $\pm$ 0.02     & 0.22 $\pm$ 0.04  & 0.14 $\pm$ 0.05    \\
\bottomrule
\end{tabular}
\caption{Quantifying compositionality using a randomly initialized encoder.}
\label{tab:euclid-distance-random}
\end{table}

\paragraph{Visualized embeddings.} We present more examples of projected embeddings of composite strings. In Figure~\ref{fig:3dplots-sup-mat}, we consider again the four manually constructed examples from Figure~\ref{fig:3dplots} in the main body of the paper: ``a photo of a \{red, blue, pink\} $\times$ \{car, house\}''; ``a photo of a \{big, small\} $\times$ \{cat, dog\} $\times$ \{eating, drinking\}''; ``\{a photo of a, a picture of a\} $\times$ \{place, object, person\}''; ``king, queen, man, woman, boy, girl.'' The top row of Figure~\ref{fig:3dplots-sup-mat} is the same as the top row from Figure~\ref{fig:3dplots}. In the bottom row of~\ref{fig:3dplots-sup-mat}, we visualize the embeddings of the same strings using a randomly initialized text encoder. In the first three examples, the factored structure is also \emph{syntactic}, \ie, it is based on the string structure. In these cases, the embeddings remain roughly decomposable even with random encoder. In the last case, however, decomposable structures are not visible anymore, since the strings in this example contain no repeated substrings. Note also that in third case, the factor corresponding to \{a photo of a, a picture of a\} is no longer ``squashed'' since these two strings not considered similar by the randomly initialized encoder. 

We show other examples of this effect in Figure~\ref{fig:3d-rand}. Here each pair of plots shows projections of the same strings using a trained encoder (left figure) and a randomly initialized encoder (right figure). As one might expect, for strings corresponding to capital-country relation (first row), the approximate symmetries that can be seen in the embbedings from the trained encoder are no longer present when using the random encoder. The strings in the second row, however, have a synctatic factored structure. In this case, we visually observe strong symmetries in the embeddings from the trained encoder as well as from the random encoder.

In Figure~\ref{fig:compositional-strings}, we consider 2D projections of embeddings of factored strings that include idioms such as ``cold shoulder,'' ``big apple'', ``black friday,'' ``hot pepper.'' We compare these embeddings with those of similar factored strings in which meanings of words are more conventional and uniform. In both cases, we quantify the amount of linear compositionality both visually and using the squared residual as in~\eqref{eq:sq-dist}. The results confirm the natural intuition that linear compositionality is measurably weaker when strong contextual effects between words are present.

\begin{figure}
    \centering
    \includegraphics[width=.98\columnwidth]{3d_plots4.pdf}\\[.25cm]
    \includegraphics[width=.98\columnwidth]{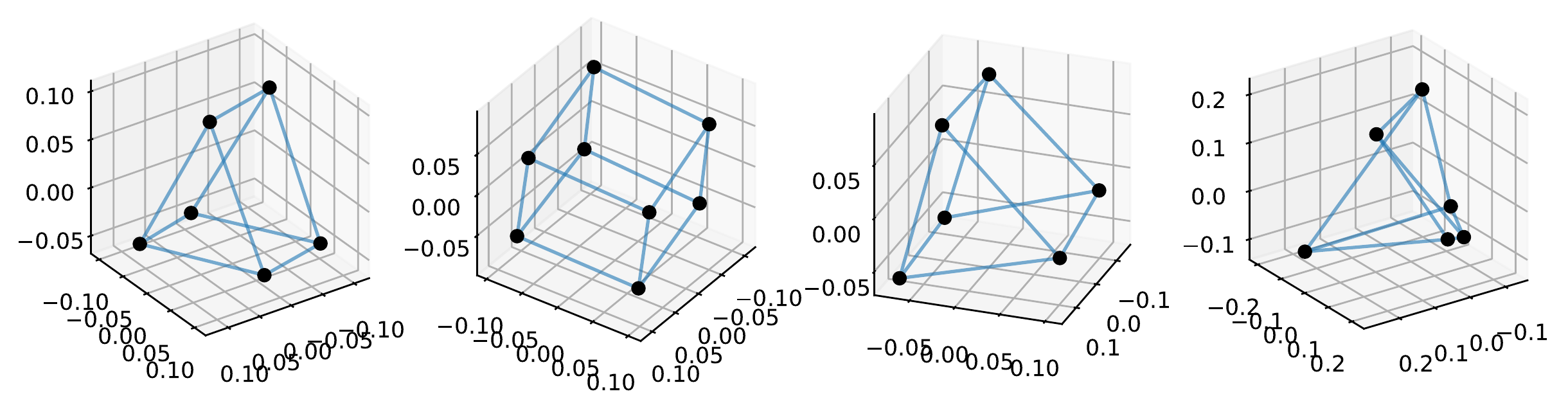}
    \caption{Projected embeddings of manually constructed strings associated with factored concepts, as described in Section~\ref{sec:exp} in the main body of the paper. \textit{Top:} trained encoder (same as in Figure~\ref{fig:3dplots}). \textit{Bottom:} visualization of the embeddings for the same strings using a randomly initialized encoder. Even without semantic information, the embeddings in the first three examples are still roughly decomposable.}
    \label{fig:3dplots-sup-mat}
\end{figure}

\begin{figure}[t]
    \centering
    \fbox{\includegraphics[width=.46\columnwidth]{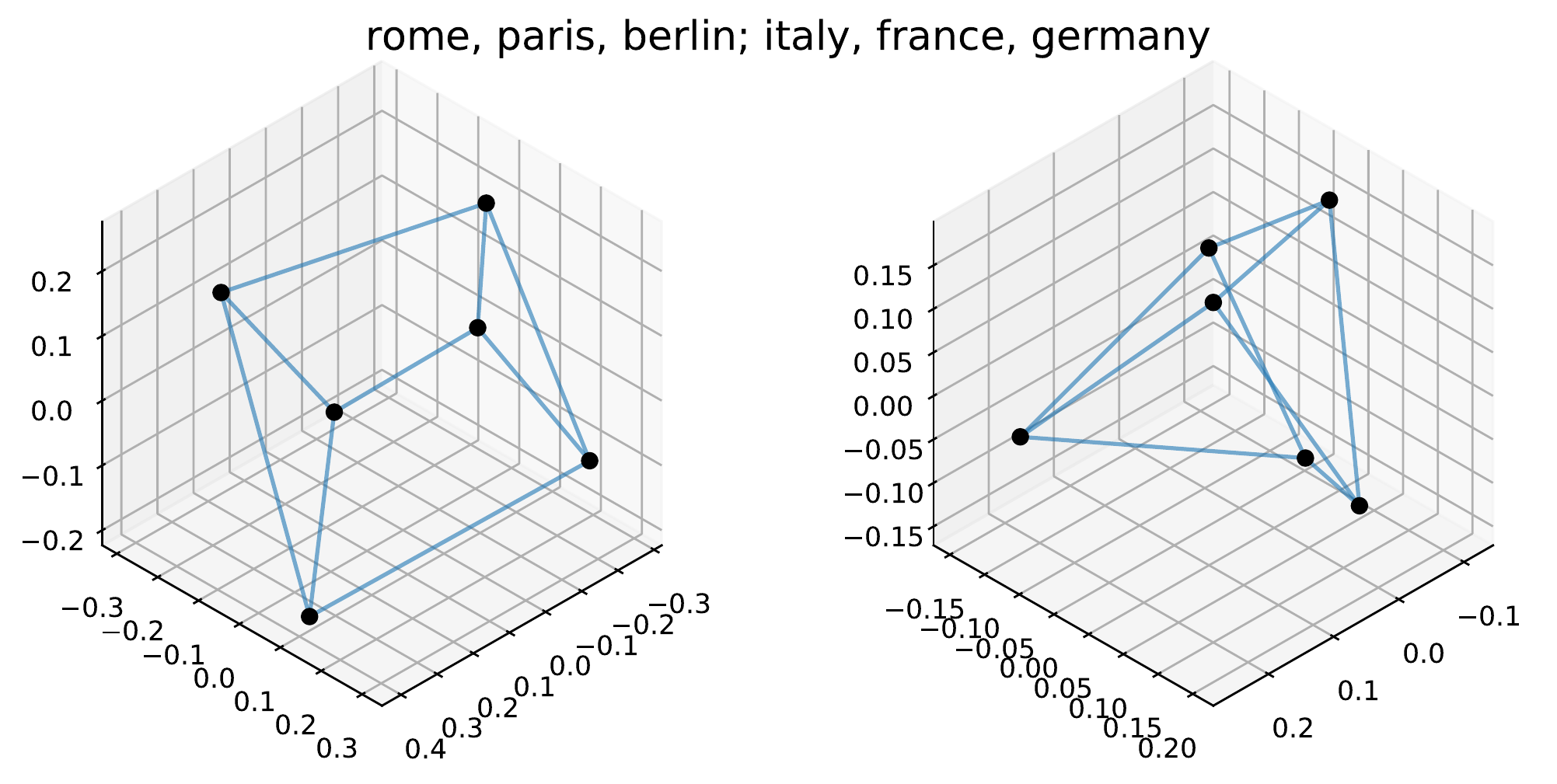}}
    \fbox{\includegraphics[width=.46\columnwidth]{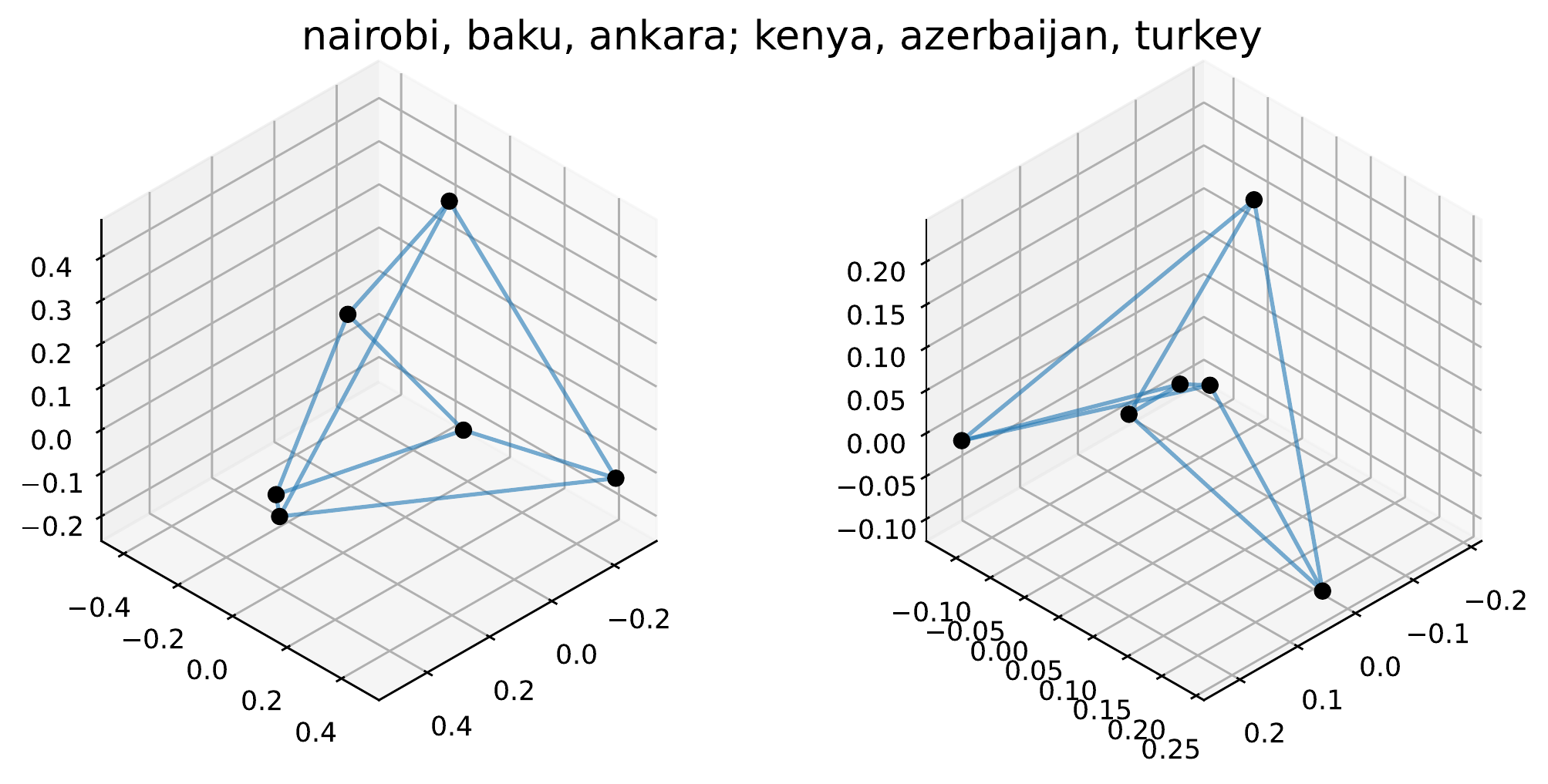}}\\[.2cm]
    \fbox{\includegraphics[width=.46\columnwidth]{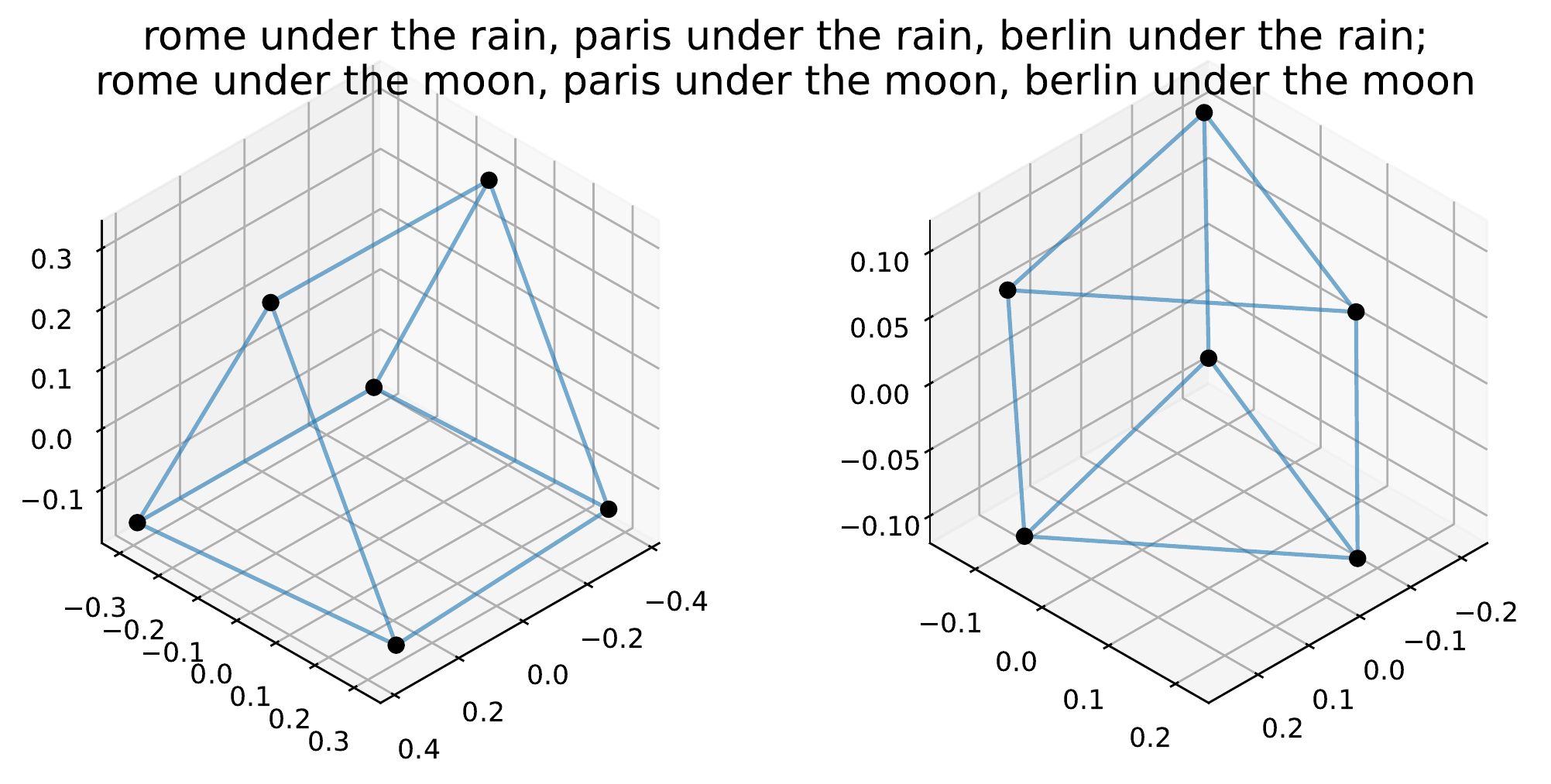}}
    \fbox{\includegraphics[width=.46\columnwidth]{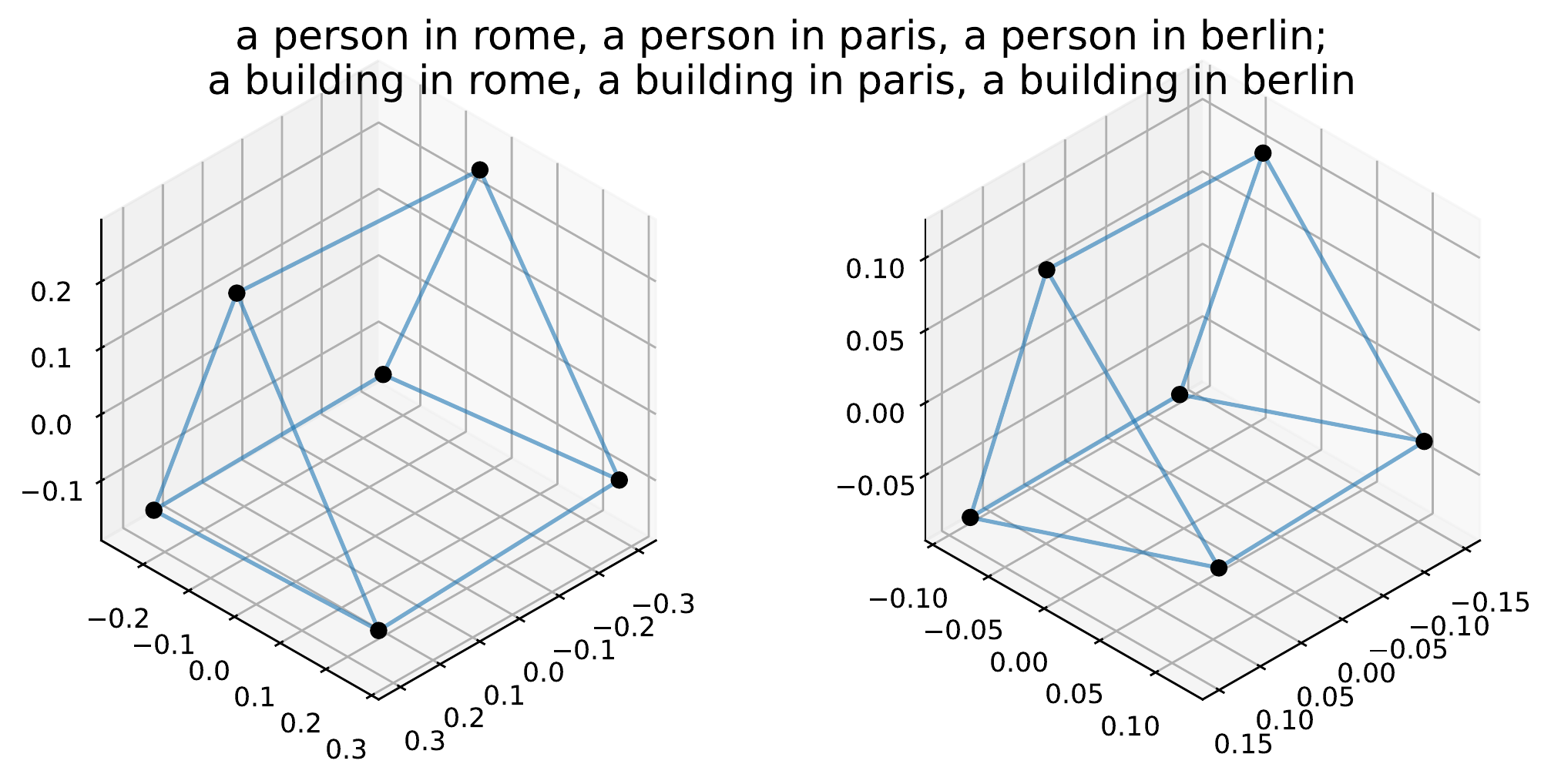}}
    \caption{Comparison between projected embeddings using a trained encoder (left figure in each pair) and using a randomly encoder (right figure in each pair). Both encoders lead to symmetric structures when the strings have a factored syntax (bottom row), while only the trained encoder shows these approximate structures when the factorization is semantic (top row).}
    \label{fig:3d-rand}
\end{figure}

\begin{figure}[t]
    \centering
    \fbox{\includegraphics[width=.46\columnwidth]{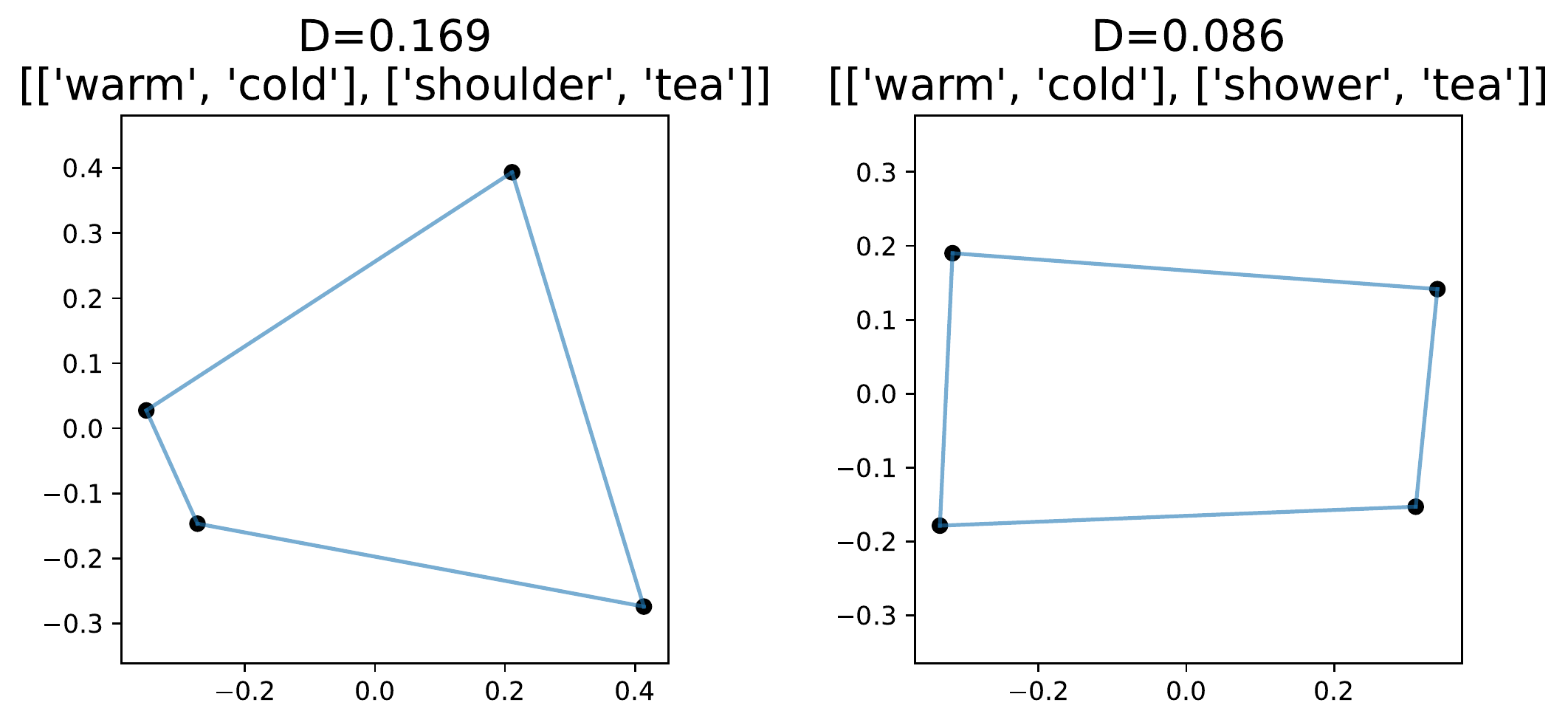}}\,
    \fbox{\includegraphics[width=.46\columnwidth]{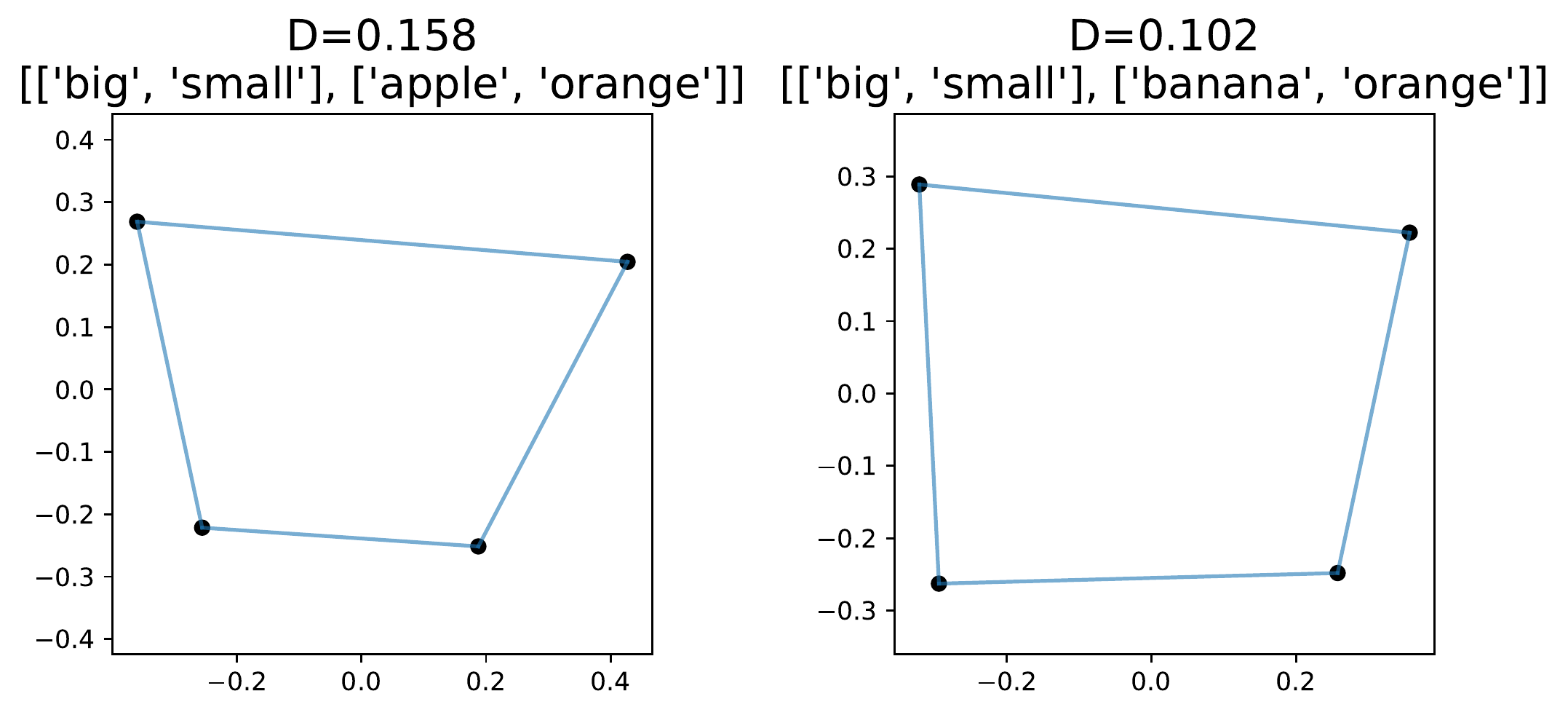}}\\[.2cm]
    \fbox{\includegraphics[width=.46\columnwidth]{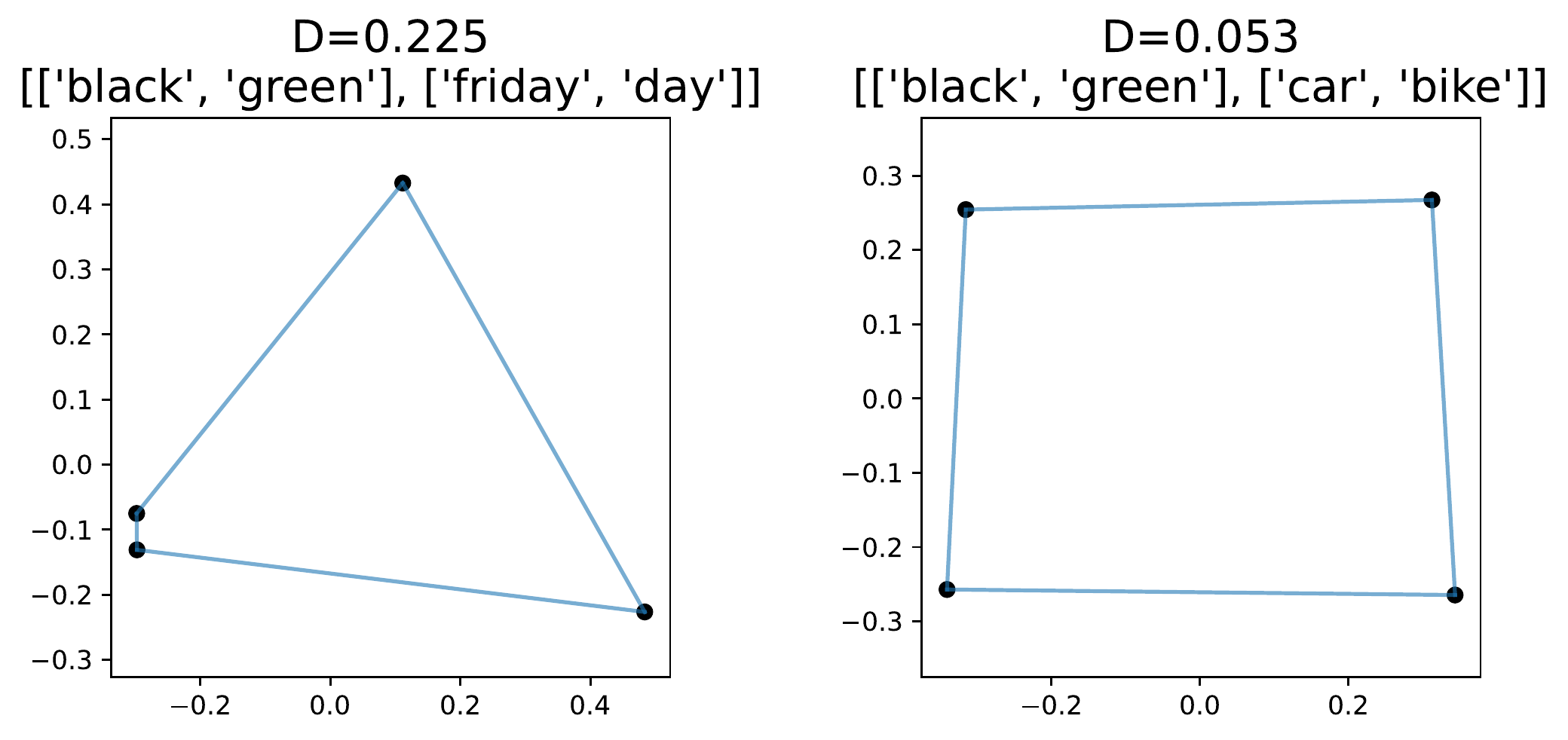}}\,
    \fbox{\includegraphics[width=.46\columnwidth]{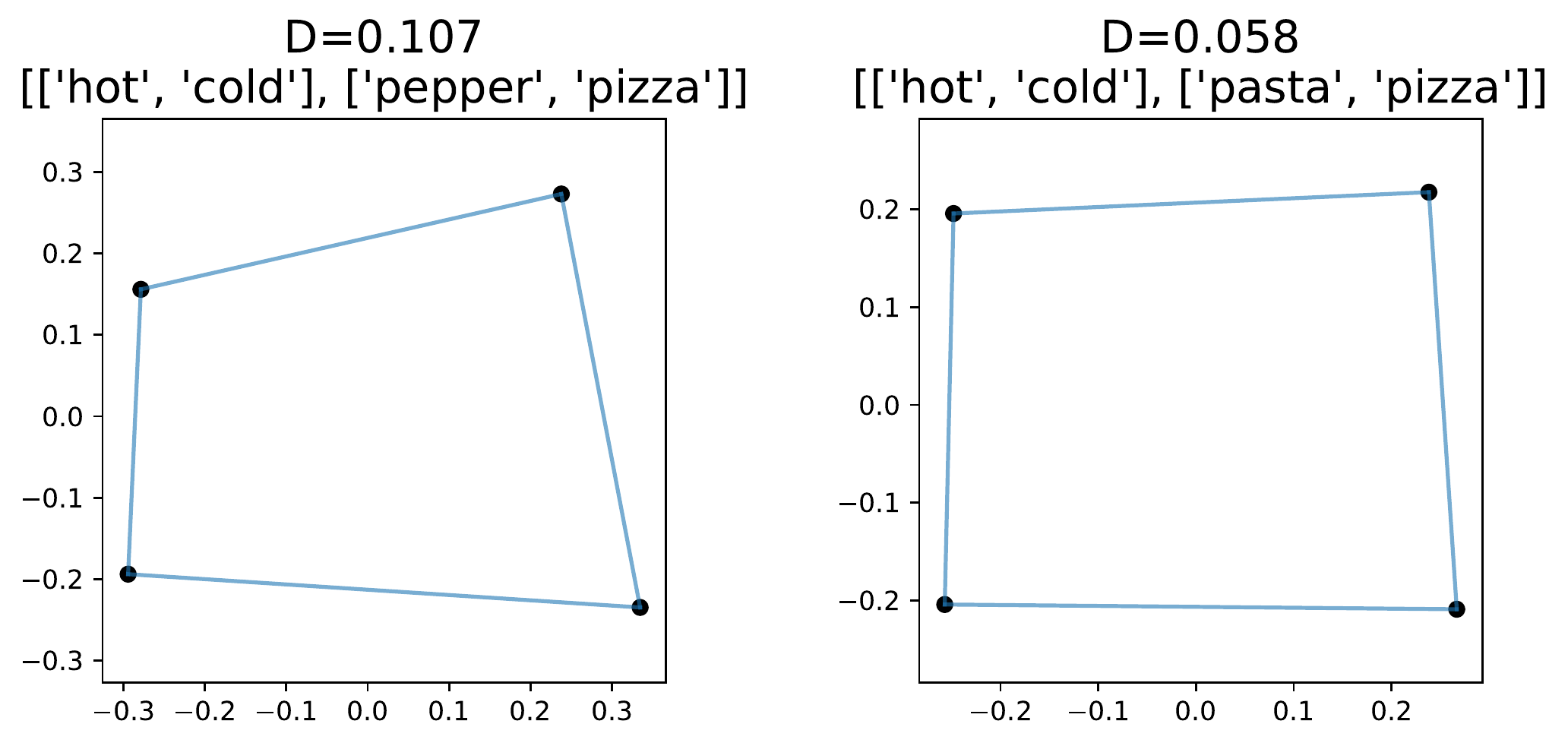}}\\[.2cm]
    \caption{Comparison between projected embeddings for factored strings with and without idioms that have non-compositional meaning (left and right in the subfigures, respectively). We can qualitatively and quantitatively see that idioms lead to weaker compositionality.}
    \label{fig:compositional-strings}
\end{figure}

\paragraph{Other notions of probabilistic disentanglement.} Proposition~\ref{prop:linear_composition} shows that linear factorization of embeddings corresponds to conditional independence of factors $z_i$ given the image $y$. One might also consider a different sort of probabilistic disentanglement in which conditionals are reversed:
\begin{equation}\label{eq:causal-disentanglement}
p(y|z = (z_1,\ldots,z_k)) = p(y|z_1) \ldots p(y|z_k) q_0(y).
\end{equation}
This can be viewed as a sort of ``causal disentanglement'' (similar to the notion used in~\cite{wangConceptAlgebraTextControlled2023}). It follows from Corollary~\ref{cor:probability-disentanglement} that decomposable embeddings mean that
\begin{equation}\label{eq:causal-disentanglement-factor}
p(y|z) = p(y|z_1)\ldots p(y|z_k) p(y)^{1-k} \frac{p({z_1})\ldots p({z_k})}{p(z_1,\ldots,z_k)}.
\end{equation}
Thus, conditional independence has the same form as~\eqref{eq:causal-disentanglement} up to the factor $\frac{p({z_1})\ldots p({z_k})}{p(z_1,\ldots,z_k)}$ (pointwise mutual information) that does not depend on $y$. If factors are globally independent, then~\eqref{eq:causal-disentanglement-factor} and~\eqref{eq:causal-disentanglement} are equivalent. It is also worth noting that~\eqref{eq:causal-disentanglement} does not determine the marginal distribution $p(z=(z_1,\ldots,z_k))$. 
In general, linear factorization of the embeddings can be seen as a relaxed version of causal disentanglement.

\paragraph{Normalization.} Embedding vectors for CLIP are typically normalized, however ideal word vectors are \emph{never} normalized. While this may appear strange, we note that the norm of the embeddings does not carry a probabilistic meaning: we can replace the embeddings $\vu, \vv$ from the two modalities with ${\bm T} \vu$ and ${\bm T}^{-1} \vv$ for any invertible linear transformation $\bm T$ of $\RR^d$ without changing the probability model on $\mathcal X \times \mathcal Y$. In general, ideal word manipulations require starting from normalized embeddings for consistency between modalities, but then normalization is never applied again (in fact, the inner product structure on the embedding space is not used). This explains our modification to the AvgIm+Text approach in Section~\ref{sec:exp} in the paper.

\paragraph{Visualizations using SD.} We present a few additional visualizations of ideal words using Stable Diffusion. In Figure~\ref{fig:green-amplify}, we consider the same ideal word approximation as in Figure~\ref{fig:diffusion-colors} in the main body of the paper and observe the effect of scaling the ideal word corresponding to ``green.'' That is, we consider $\vu_0 + \vu_{\rm house} + \gamma \cdot \vu_{\rm green}$ for different $\gamma$. In the top row, we compute $\vu_{\rm green}$ using the standard ``balanced'' computation for ideal words (uniform $\alpha_i$ in Proposition~\ref{prop:approximation}). In the bottom row, we use weights $\alpha_{\rm house} = 1$ and $\alpha_{\rm obj} = 0$ otherwise. This implies that the IW corresponding to $\vu_{\rm green}$ is determined by how ``green'' composes with ``house.'' Amplifying $\vu_{\rm green}$ now increases the ``greenhouse-ness'' of the generated image.

In Figure~\ref{fig:iw-transfer}, we consider the problem of \emph{transferring} ideal words. That is, we consider a different (\ie, totally disjoint) set of objects and colors compared to the ones used for Figure~\ref{fig:diffusion-colors} in the paper and compute the corresponding ideal words, that we write as $\vu_{\rm color' \, object'} \approx \vu_{0}' + \vu_{\rm color'}' + \vu_{\rm obj'}'$. We then investigate whether families of ideal words computed independently can be ``mixed,'' combining ideal words for colors from the first collection and ideal words for objects from the second one, and vice-versa. Figure~\ref{fig:iw-transfer} shows that this is possible, at least in our restricted setting. In the first row, we show examples of four new objects with different colors computed by adding associated ideal words (\{white, pink, orange, black\} $\times$ \{chair, wallet, shirt, pen\}). In the next two rows, we use the ideal words for objects with the ideal words for colors obtained previously; in the last two rows, we use the ideal words for the new colors together with the ideal words for the objects obtained previously. To obtain all of these images, we simply used $\vu_{\rm color' \, object} \approx (\vu_0 + \vu_0')/2 + \vu_{\rm color'}' + 2\cdot \vu_{\rm obj}$ (we found that amplifying the ideal words for objects helps ensure that objects are more centered). Analyzing the limits of this sort of transferability is left for future work.

Finally, in Figure~\ref{fig:diffusion-context} we generate images with ideal words while also using a third ``context'' factor, in addition to the ones corresponding to color and object (for those we use the same colors and objects as in Figure~\ref{fig:diffusion-colors}). Here we see that linear compositionality is effective using simple contexts such as \{on the beach, on a street\} (first two rows), however using more complex contexts such as \{underwater, in a volcano\} (third and fourth row) it fails to produce good results.

\begin{figure}[t]
    \centering
    \includegraphics[width=.95\columnwidth]{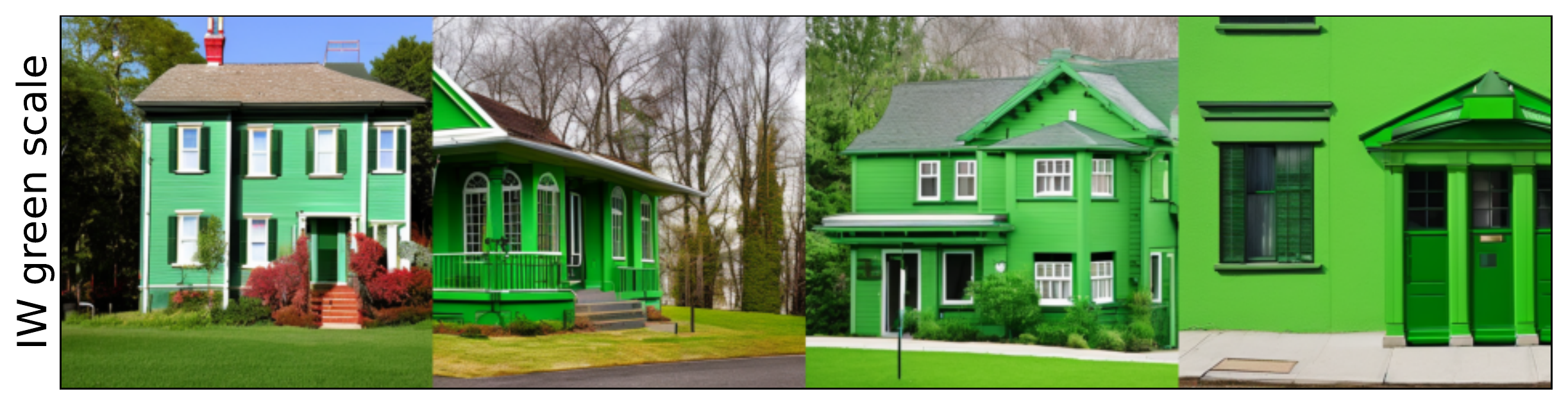}\\
    \includegraphics[width=.95\columnwidth]{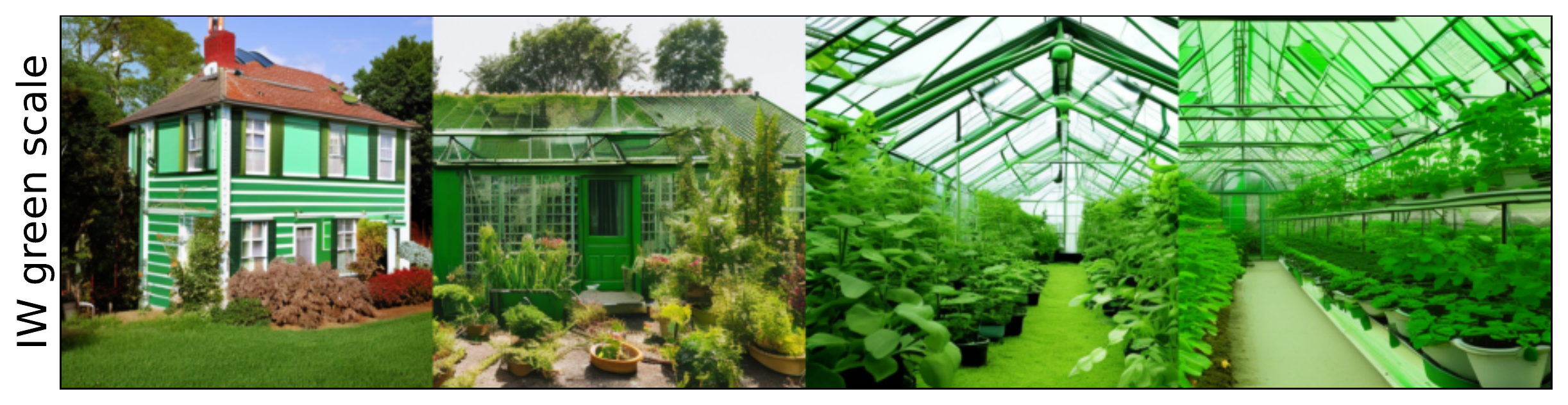}\\
    \caption{Scaling the ideal word $\vu_{\rm green}$ a by factor $\gamma = .5, 1, 1.5, 2$, respectively. \emph{Top:} $\vu_{\rm green}$ is computed using all objects as contexts. \emph{Bottom:} $\vu_{\rm green}$ is computed only ``house'' as context.}
    \label{fig:green-amplify}
\end{figure}

\begin{figure}[h]
    \centering
    \includegraphics[width=.85\columnwidth]{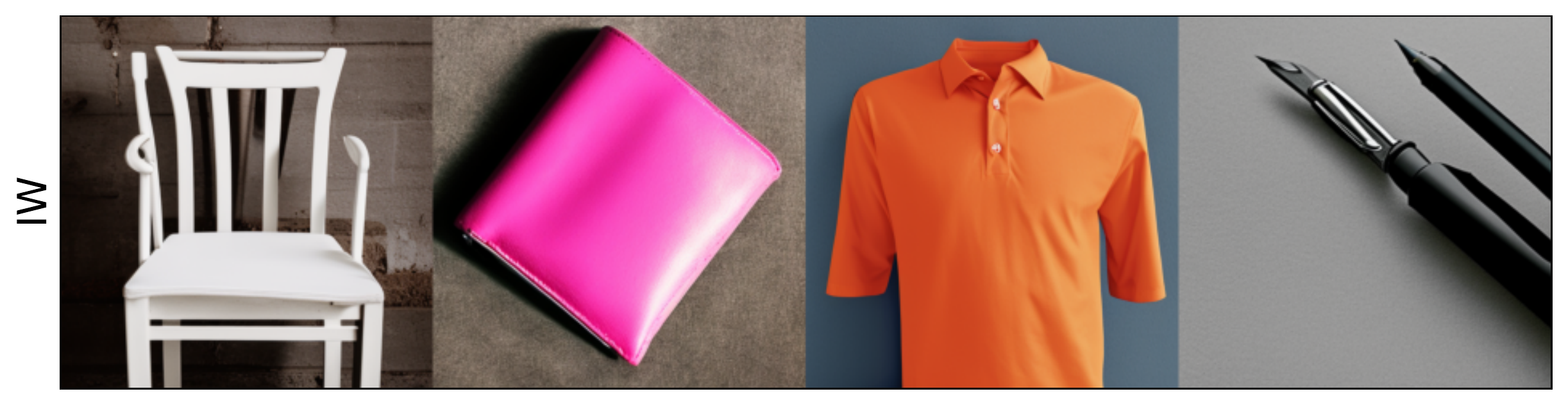}\\
    \includegraphics[width=.85\columnwidth]{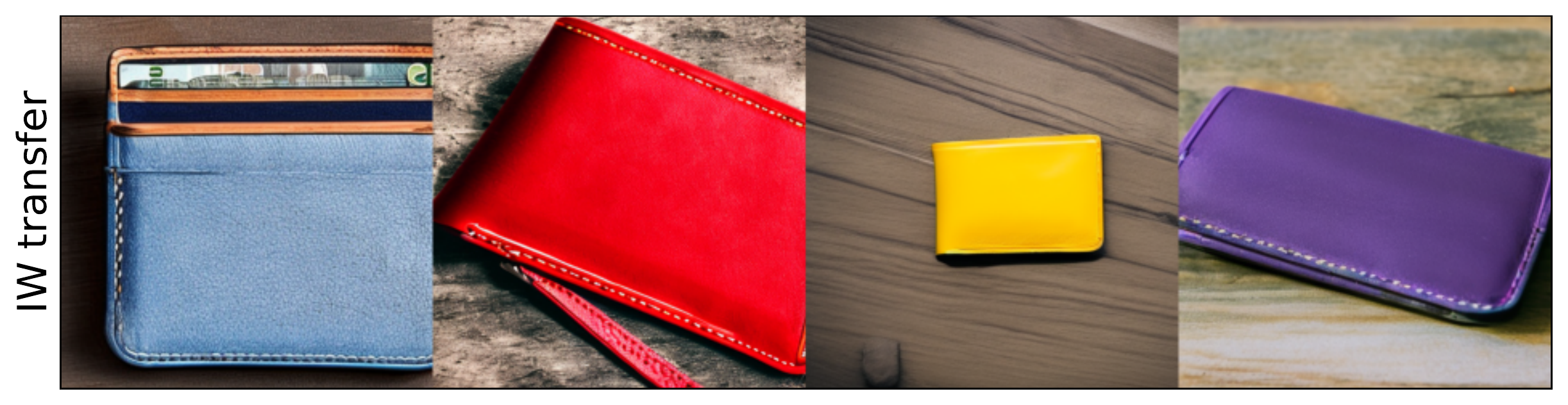}\\
    \includegraphics[width=.85\columnwidth]{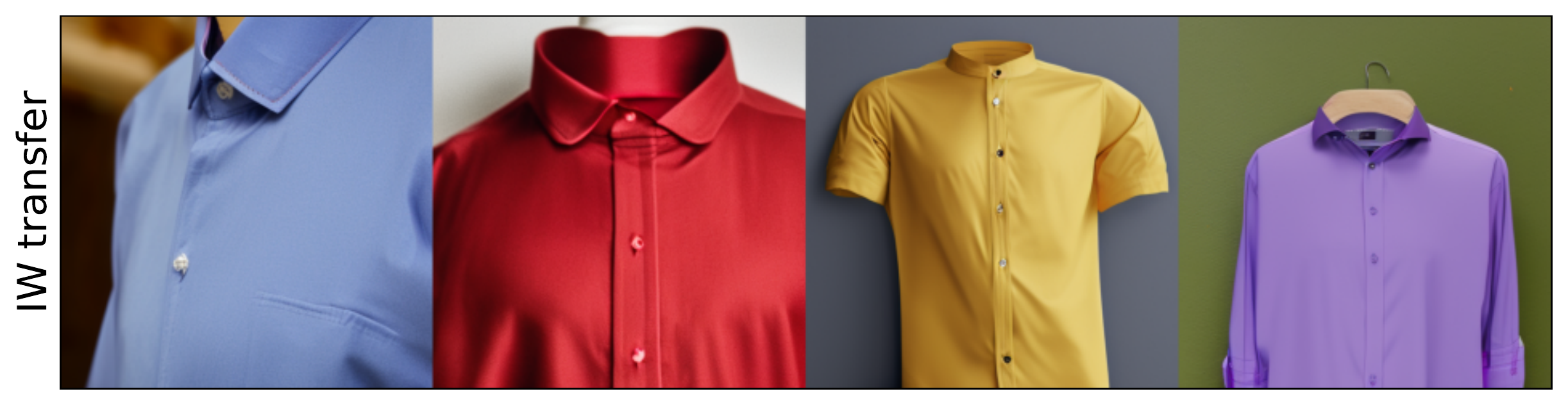}\\
    \includegraphics[width=.85\columnwidth]{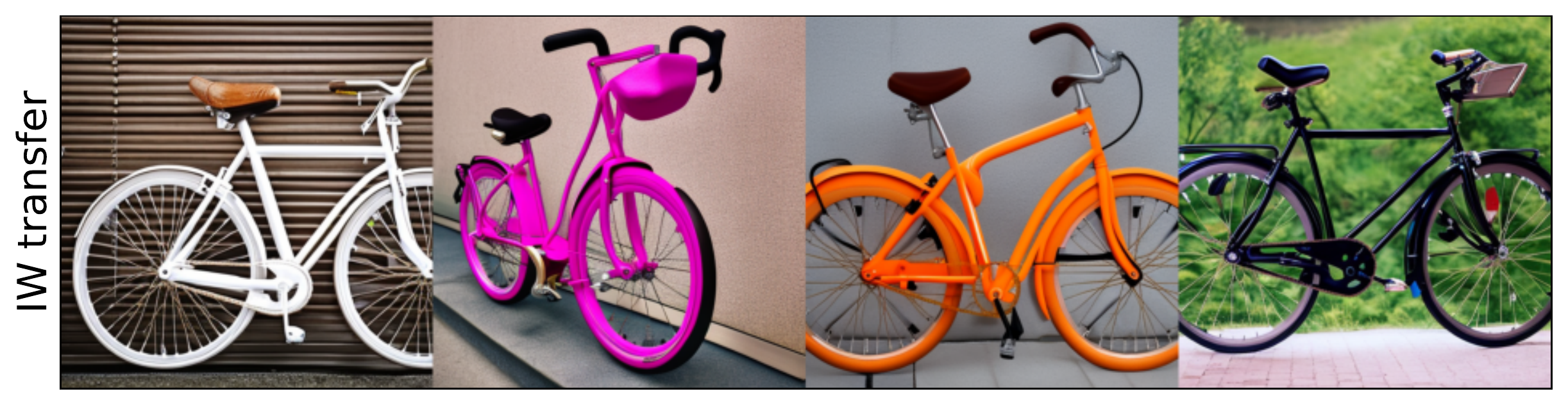}\\
    \includegraphics[width=.85\columnwidth]{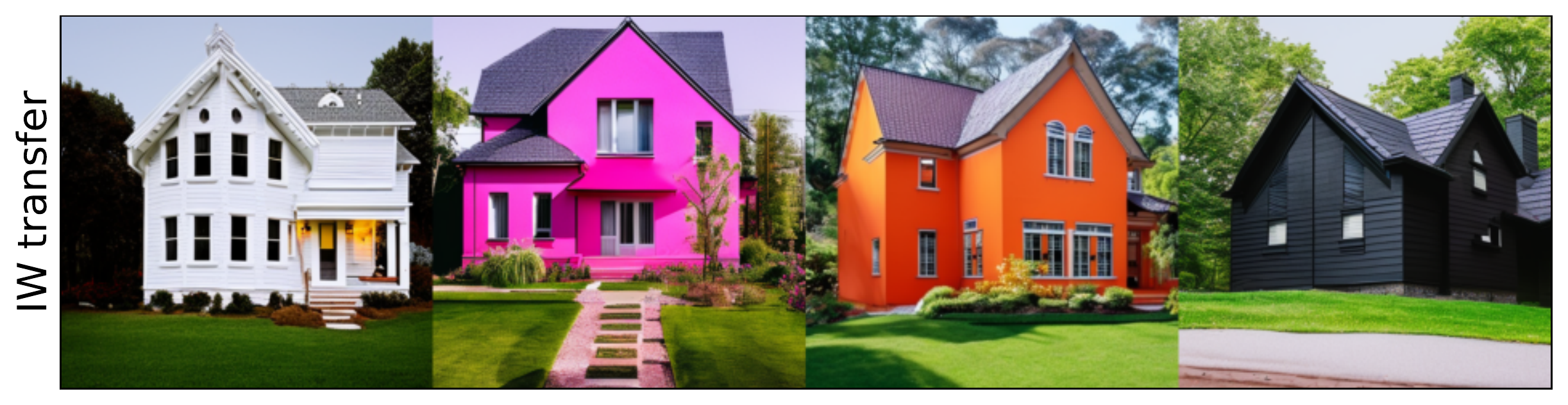}\\
    \caption{Transferring ideal words. \emph{Top row:} Images generated ideal words for a different set of colors and objects compared to the ones used Figure~\ref{fig:diffusion-colors}. \emph{Second and third rows:} 
    images generated by adding new ideal words for objects with the previous ideal words for colors; \emph{Fourth and fifth rows:} images generated by adding new ideal words for colors with the previous ideal words for objects.
    }
    \label{fig:iw-transfer}
\end{figure}
\begin{figure}[h]
    \centering
    \includegraphics[width=.85\columnwidth]{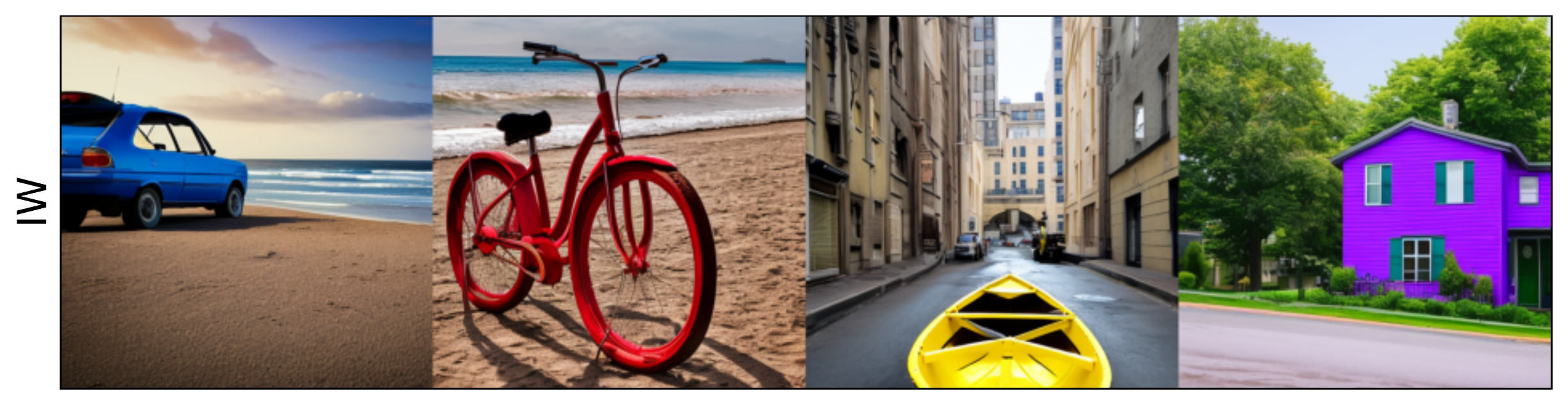}\\
    \includegraphics[width=.85\columnwidth]{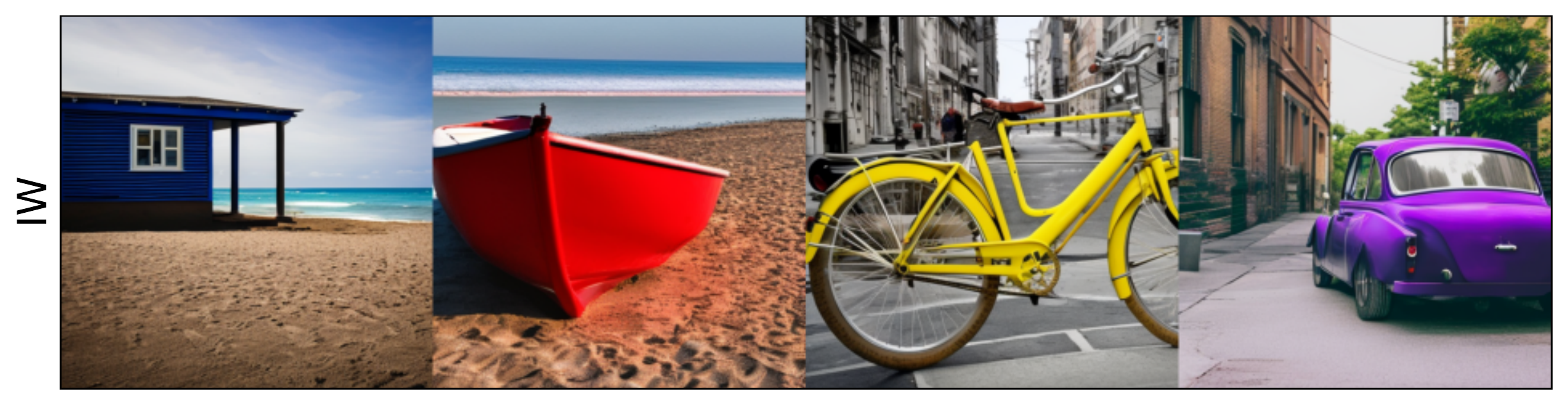}\\
    \includegraphics[width=.85\columnwidth]{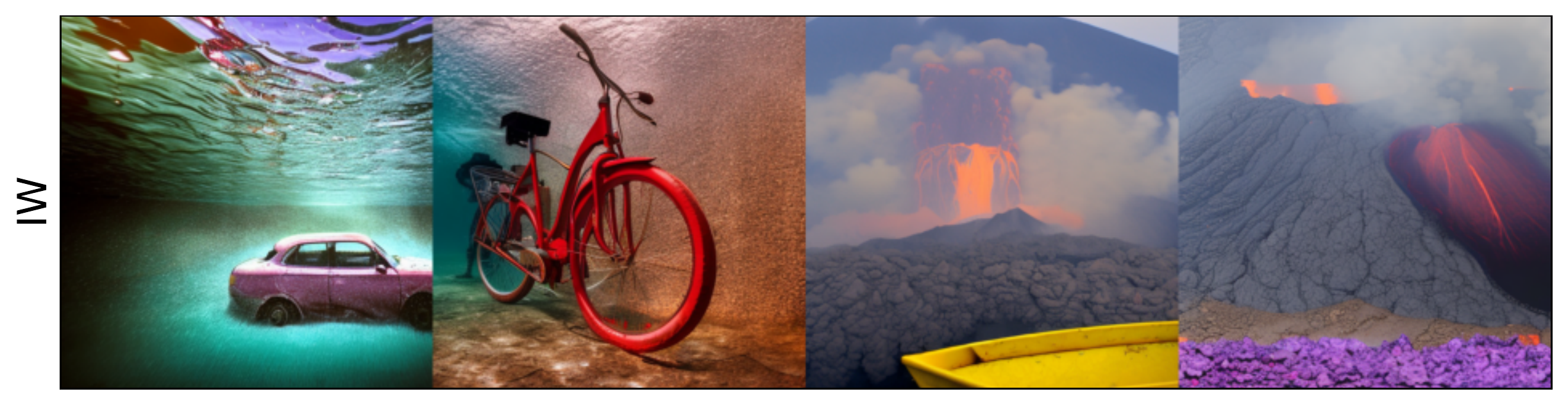}\\
    \includegraphics[width=.85\columnwidth]{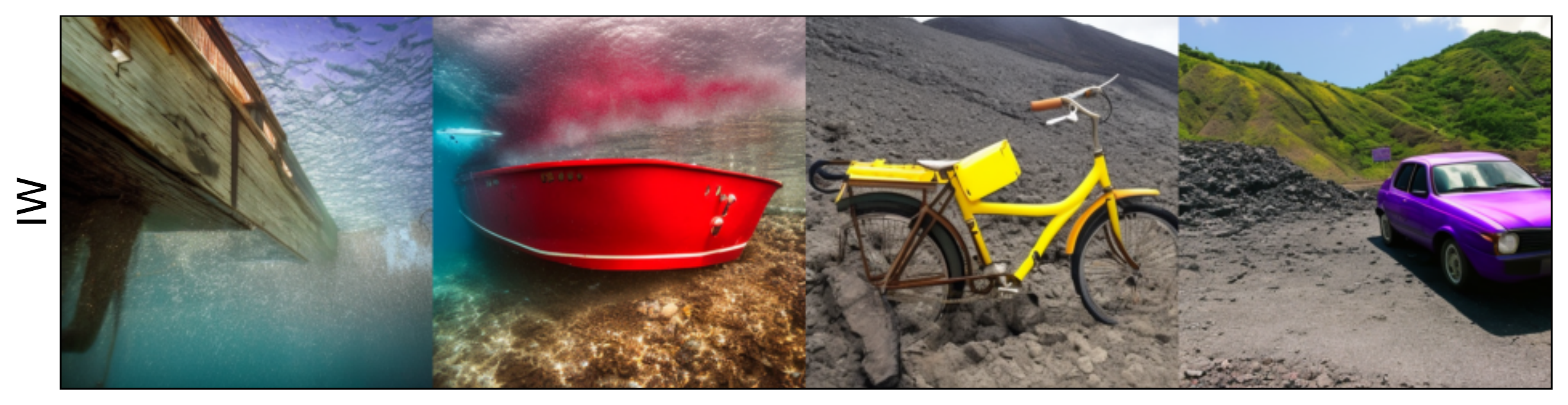}\\
    \caption{Images generated using ideal words with t8ree factors: color, object, context. \emph{First two rows:} using context factor \{on the beach, on a street\}; \emph{Second two rows:} using context factor \{underwater, in a volcano\}.}
    \label{fig:diffusion-context}
\end{figure}

\end{document}